\newtheorem{theorem}{Theorem}[section]
\newtheorem{lemma}[theorem]{Lemma}
\newtheorem{assumption}[theorem]{Assumption}
\newtheorem{definition}[theorem]{Definition}
\newtheorem{remark}[theorem]{Remark}
\newtheorem{proposition}[theorem]{Proposition}
\newtheorem{corollary}[theorem]{Corollary}
\def\epsilon{\varepsilon}
\def\RR{\mathbb{R}}
\title{Improved Analysis of Clipping Algorithms for Non-convex Optimization}
\author{%
  Bohang Zhang\thanks{Equal Contributions.}\\
  Key Laboratory of Machine Perception, MOE, \\
  School of EECS, Peking University\\
  \texttt{zhangbohang@pku.edu.cn} \\
   \And
   Jikai Jin$^*$\\
   School of Mathematical Sciences \\
   Peking University\\
   \texttt{1900010640@pku.edu.cn} \\
   \And
   Cong Fang \\
   University of Pennsylvania\\
   \texttt{fangcong@pku.edu.cn} \\
   \And
   Liwei Wang \\
   Key Laboratory of Machine Perception, MOE, \\
  School of EECS, Peking University\\
  Center of Data Science, Peking University\\
   \texttt{wanglw@cis.pku.edu.cn} \\
}
\begin{document}

\maketitle

\begin{abstract}
Gradient clipping is commonly used in training deep neural networks partly due to its practicability in relieving the exploding gradient problem. 
Recently,  \citet{zhang2019gradient} show that clipped (stochastic) Gradient Descent (GD)  converges faster than vanilla GD/SGD via introducing a new  assumption called $(L_0, L_1)$-smoothness, which characterizes the violent fluctuation of gradients typically encountered in deep neural networks. However, their iteration complexities on the problem-dependent parameters are rather pessimistic, and theoretical justification of clipping combined with other crucial techniques, e.g. momentum acceleration, are still lacking. In this paper, we bridge the gap by  presenting a general framework to study the clipping algorithms, which also takes momentum methods into consideration.
We provide convergence analysis of the framework in both deterministic and stochastic setting, and demonstrate the tightness of our results by comparing them with existing lower bounds. Our results imply that the efficiency of clipping methods will not degenerate even in highly non-smooth regions of the landscape. Experiments confirm the superiority of clipping-based methods in  deep learning tasks.
\end{abstract}

\section{Introduction}
The problem of the central interest in this paper is to minimize a general non-convex function presented below:
\begin{equation}\label{problem1}
    \min_{x\in \mathbb{R}^d}  F(x),
\end{equation}
where $F(x)$ can be potentially stochastic, i.e.
$$ F(x) = \mathbb{E}_{\xi \sim \mathcal{D}} \left[ f(x,\xi )\right]. $$
For   non-convex optimization problems in form of \eqref{problem1}, since obtaining the global minimum  is NP-hard in general,  this paper takes the concern on a reasonable relaxed criteria: finding an $\epsilon$-approximate first-order stationary point such that $\|\nabla F(x)\|\leq \epsilon$.

We consider gradient-based algorithms to solve \eqref{problem1} and separately study two cases: $\mathrm{i)}$ the  gradient of $F$ given a point $x$ is accessible;  $\mathrm{ii)}$ only a   stochastic  estimator is   accessible. We shall  refer the former  as the deterministic setting and the latter  as the stochastic setting, and we analyze the (stochastic) gradient complexities to search an approximate first-order stationary point for Problem \eqref{problem1}.

Gradient clipping \citep{pascanu2012understanding}  is a simple and commonly used trick in algorithms that   adaptively  choose step sizes  to make optimization stable.   For the task of training deep neural networks (especially for language processing tasks), it is often a standard practice and is believed to be efficient in  relieving the exploding gradient problem
from  empirical studies \citep{pascanu2013difficulty}. More recently, \citet{zhang2019gradient} proposed an inspiring theoretical justification on the clipping technique via introducing the $(L_0,L_1)$-smoothness assumption. The concept of $(L_0,L_1)$-smoothness is defined as follows.

\begin{definition}
We say that a twice differentiable function $F(x)$ is $(L_0,L_1)$-smooth, if for all $x \in \mathbb{R}^d$ we have $\|\nabla^2 F(x) \| \leq L_0+L_1\|\nabla F(x)\|$.
\end{definition}

This assumption can be further relaxed such that twice differentiability is not required (see Remark \ref{remark_l0l1smooth}). Therefore the standard $L$-smoothness assumption (i.e. the gradient of $f$ is $L$-Lipschitz continuous)  is stronger than the
$(L_0,L_1)$-smoothness one in the sense that the latter allows $\|\nabla^2 F(x)\|$ to have a linear growth with respect to $\|\nabla F(x)\|$.

\begin{figure}
  \centering
  \subfigure[]{
  \label{fig_example}
  \includegraphics[width=0.295\textwidth]{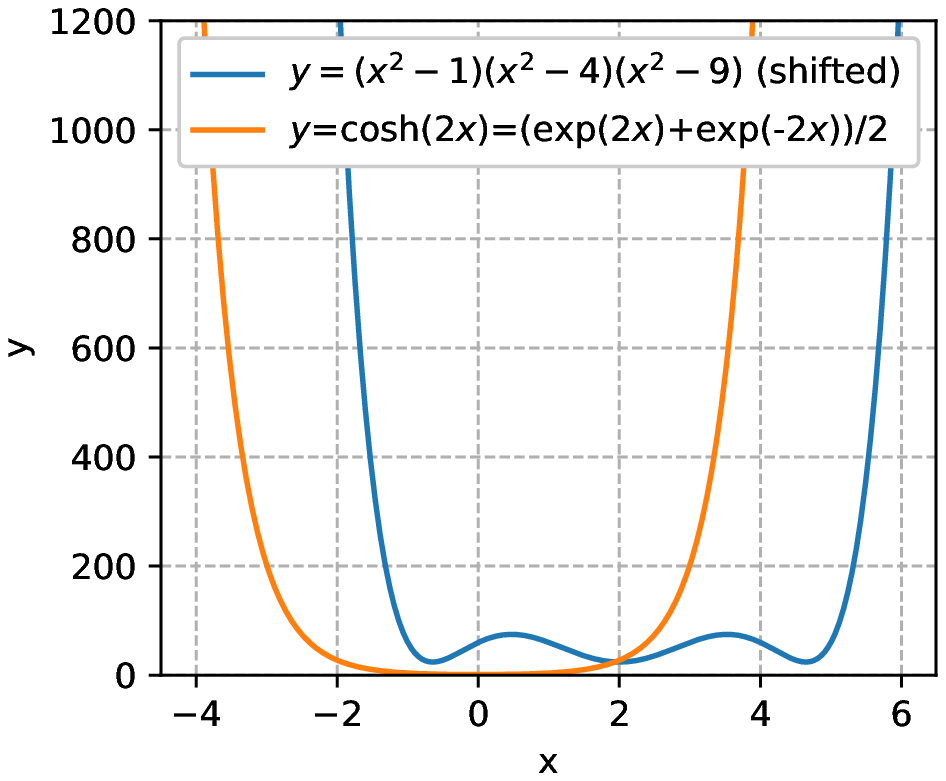}
  }
  \subfigure[]{
  \label{fig_x4_smooth}
  \includegraphics[width=0.30\textwidth]{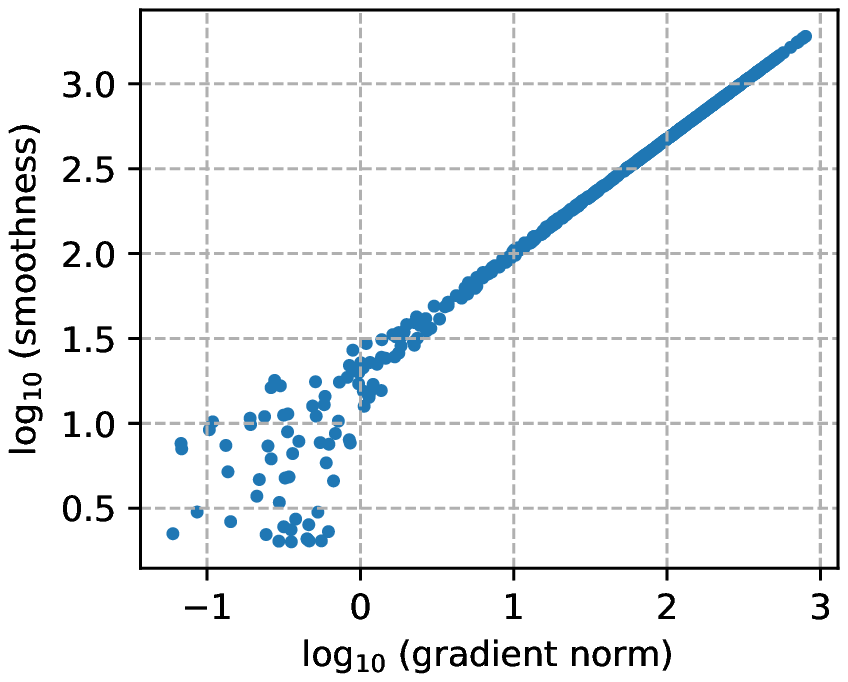}
  }
  \subfigure[]{
  \label{fig_lstm_smooth}
  \includegraphics[width=0.35\textwidth]{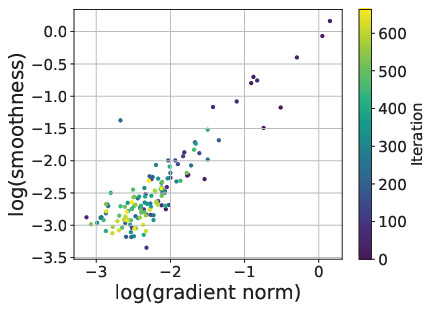}
  }
  \caption{(a) Some simple examples of $(L_0,L_1)$-smooth functions that are not $L$-smooth. (b) The magnitude of gradient norm $\|\nabla F(x)\|$ w.r.t the local smoothness $\|\nabla^2 F(x)\|$ on some sample points for a polynomial $F(x,y)=x^2+(y-3x+2)^4$. We use log-scale axis. The local smoothness strongly correlates to the gradient. (c) Gradient and smoothness in the process of LSTM training, taken from \citet{zhang2019gradient}.}
  \label{fig_introduction}
\end{figure}

$(L_0,L_1)$-smoothness is more realistic than $L$-smoothness. \textit{Firstly}, it includes a variety of simple and important functions which, unfortunately, do not satisfy $L$-smoothness. For example, all univariate polynomials (which can possibly be non-convex) are $(L_0,L_1)$-smooth for $L_1=1$, while a simple function $x^4$ is not globally $L$-smooth for any $L$. Moreover, $(L_0,L_1)$-smoothness also encompasses all functions that belongs to the so-called exponential family. Figure \ref{fig_introduction}(a) presents some simple examples and Figure \ref{fig_introduction}(b) shows that the local smoothness of $(L_0,L_1)$-smooth functions strongly correlates to the gradient norm.

\textit{Secondly}, \citet{zhang2019gradient} performed experiments to show that $(L_0,L_1)$-smoothness is a preciser characterization of the landscapes for  objective functions in many real-world tasks, especially for training a deep neural network model. It was observed that the local Lipschitz constant $L_0$ near the stationary point is thousands of times smaller than the global one $L$ in the LSTM training (see Figure \ref{fig_introduction}(c) taken from \citet{zhang2019gradient}).

Seeing this, it is desirable to give a comprehensive and deep analysis on iteration complexities for $(L_0,L_1)$-smooth objectives. How fast can we achieve to find  a first-order stationary point for $(L_0,L_1)$-smooth functions? What are simple algorithms that provably achieve such a convergence rate? In this paper, we give  \emph{affirmative} answers to the above questions.
In fact,  due to the violent fluctuation of gradients, the efficiency of (stochastic) Gradient Descent with a constant step size degenerates, whereas we will show in this paper that by  simply combining the clipping technique,  a wide range of algorithms can achieve much better convergence rate for $(L_0,L_1)$-smooth functions. In fact, when $\epsilon$ is small, the complexities (i.e. the number of gradient queries required) are $\mathcal{O}\left( \Delta L_0\epsilon^{-2} \right)$ for the deterministic setting and $\mathcal{O}\left( \Delta L_0\sigma^2\epsilon^{-4} \right)$ for the stochastic setting (see Section \ref{section_analysis} for details), which are both independent of $L_1$.  Compared with \citet{zhang2019gradient} who only studied clipped (stochastic) gradient descent, we consider proposing a unified framework which contains a variety of clipping-based algorithms and achieve much  sharper complexities.  The main  technique for our proof is by introducing a novel Lyapunov function which does not appear in existing studies. We believe that our work provides better understandings for the clipping technique in training deep neural networks. We summarize the contributions of the paper in the following.
\begin{itemize}
    \item We provide a general framework to analyze the clipping technique for optimizing $(L_0,L_1)$-smooth functions. It contains a variety of clipping algorithms, including gradient clipping and momentum clipping as special cases.
    \item We provide convergence analysis for the general framework we propose. We show that our bounds are \textit{tight} by comparing with existing lower bounds. For gradient clipping, a special case in our framework, our result is much sharper than that proposed by \citet{zhang2019gradient}.
    \item   We conduct extensive experiments on a variety of different tasks, and observe that the clipping algorithms consistently perform better than vanilla ones.
\end{itemize}


\textbf{Notations.} For a vector $x\in \mathbb R^d$, we denote $\|x\|$ as the $l_2$-norm of $x$. For a matrix $A\in \mathbb R^{m\times n}$, let $\|A\|$ be the spectral norm of $A$. Given functions $f, g$ : $\mathcal X \rightarrow [0, \infty)$ where $\mathcal X$ is any set, we say $f = \mathcal{O}(g)$ if there exists a constant $c >0$ such that $f(x) \le c g(x)$ for all $x \in \mathcal X$, and $f = \Omega (g)$ if there exists a constant $c >0$ such that $f(x) \ge c g(x)$ for all $x \in \mathcal X$. We say $f = \Theta (g)$ if $f = \mathcal O (g)$ and $f = \Omega (g)$.

\subsection{Related Work}
\label{section_related_work}
\textbf{Clipping/normalizing Techniques. } Clipping/normalizing has long been a popular technique in optimizing large-scale non-convex optimization problems (e.g. \citep{mikolov2012statistical,pascanu2013difficulty,goodfellow2016deep, you2017scaling}). There are several views which provide understandings for the clipping and normalizing techniques.   Some show that clipping can  reduce the stochastic noise. For example, 
  \citet{zhang2019adam,gorbunov2020stochastic} showed that clipping is crucial for convergence when the stochastic gradient noise is heavy-tailed. \citet{menon2020can} pointed out that clipping can mitigate the effect of label noise.  
  \citet{cutkosky2020momentum} found that adding momentum in normalized SGD provably reduces the stochastic noise. Another line of works try to understand the function of clipping and normalizing for  the standard smooth  optimization. For example, \citet{levy2016power} showed that normalized GD can provably escape saddle points. \citet{fang2018spider} designed a new algorithm based on normalized GD which achieves a faster convergence rate under suitable conditions. Gradient clipping has also been used to design differentially private optimization algorithms \citep{abadi2016deep}.
  
  The work of \citet{zhang2019gradient} is mostly related to this paper. A detailed comparison between the two works is shown in Subsection \ref{sub:comparation}. 

\textbf{Lower Bounds For Non-convex Optimization. }A series of recent works establish lower bounds for finding an $\epsilon$-stationary point of a general non-convex and  $L$-smooth function, either in deterministic setting \citep{carmon2019lower} or in stochastic setting \citep{drori2019complexity,arjevani2019lower}. In this paper we borrow their counter examples to show the  tightness of our obtained complexities  for the general $(L_0,L_1)$-smooth functions.

\section{Assumptions \& Comparisons of  Results}
\label{section_assumption}
\subsection{Assumptions}
We first present the assumptions that will be used in our theoretical analysis, which  follows from \cite{zhang2019gradient}.

\begin{assumption}
\label{Delta}
We assume  $\Delta := F(x_0)- F^* < \infty$ where  $F^*  =  \inf_{x\in\RR^d} F(x)$ is the global infimum value of $F(x)$.
\end{assumption}

\begin{assumption}\label{ass:2}
\label{assumption_ls+}
We assume  that $F(x)$ is $(L_0,L_1)$-smooth. 
\end{assumption}

\begin{remark}
\label{remark_l0l1smooth}
This assumption can be relaxed to the following: there exists $K_0,K_1 >0$ such that for all $x, y \in \mathbb{R}^d$, if $\|x-y\| \leq \frac{1}{K_1}$, then
$$\|\nabla F(x) - \nabla F(y) \| \leq (K_0 + K_1 \|\nabla F(y)\|) \|x-y\|$$
It does not need $F$ to be twice differentiable and is \textit{strictly weaker than $L$-smoothness}.
\end{remark}
In the stochastic setting, for the briefness of our analysis,  we assume that the noise is unbiased and bounded.
\begin{assumption}
\label{BN}
For all $x \in \mathbb{R}^d$, $\mathbb{E}_{\xi } \left[ \nabla f(x,\xi) \right] = \nabla F(x)$. Furthermore, there exists $\sigma >0$ such that for all $x \in \mathbb{R}^d$, the noise satisfies $\|\nabla f(x,\xi ) - \nabla F(x) \| \leq \sigma$ with probability $1$.
\end{assumption}

Note that another commonly used noise assumption in the optimization literature is \textit{bounded variance} assumption, i.e. $\mathbb{E}_{\xi} \left[ \|\nabla f(x,\xi ) - \nabla F(x) \|^2  \right] \leq \sigma^2$, therefore Assumption \ref{BN} is stronger than it. However, we adopt Assumption \ref{BN} as it is also used in the original work of gradient clipping in \cite{zhang2019gradient}, and it makes our analysis much simpler.

\subsection{ Comparisons of  Results}\label{sub:comparation}

In this paper, we mainly take our concern on the stochastic setting, though we also establish the complexities for the deterministic setting.  We summarize the comparison in the stochastic setting with existing complexity results in Table \ref{complexity-table}, in which we only present the dominating complexities with respect to $\epsilon$.
  
\begin{table}[h]
\centering
\caption{Comparisons of  gradient complexity  
in the stochastic setting. }
\label{complexity-table}
\centering
\begin{tabular}{cc}
\hline
Algorithms                               & Complexities$^{*}$   \\\hline
SGD \citep{ghadimi2013stochastic} & $\mathcal{O}\left( \Delta (L_0+L_1M)\sigma^2\epsilon^{-4} \right)$ $^{**}$\\
Clipped SGD \citep{zhang2019gradient} & $\mathcal{O}\left((\Delta + (L_0+L_1\sigma)\sigma^2+\sigma L_0^2/L_1 )^2\epsilon^{-4} \right)$       \\
Clipping Framework (this paper) & $\mathcal{O}\left(\Delta L_0\sigma^2\epsilon^{-4}\right)$   \\
Lower Bound  & $\Omega\left(\Delta L_0\sigma^2\epsilon^{-4}\right)$ $^{***}$  \\
\hline
\end{tabular}
\\
\footnotesize
\begin{flushleft}
{$^{*}$For clarity, we only present the dominating term (with respect to $\epsilon$) here.\\}
{$^{**}$For SGD, we further assume the gradient norm is upper bounded by $M$.\\}
{$^{***}$See section \ref{lower_bound_discussion} for a detailed discussion of the lower bound.}
\end{flushleft}
\end{table}
  
For standard SGD, if we further assume that the gradient is upper bounded by $M$, i.e. 
$M:=\sup_{x\in \RR^d}\|\nabla f(x)\|<\infty$, Assumption \ref{ass:2} leads to an upper bound of the global Lipschitz constants $L:=L_0+L_1M$. Therefore, the standard results for $L$-smooth functions (e.g. \citep{ghadimi2013stochastic}) implies that  Gradient Descent with a constant step size  can achieve  complexity of $\mathcal{O}(\Delta(L_0+L_1 M)\sigma^{2}\epsilon^{-4}) $ for finding a first-order stationary point in the  stochastic setting\footnote {We will show in Appendix \ref{appendix_lower_bound} that even using Assumption \ref{BN}, such upper bound can not be improved.}. However, the upper bound of the gradient $M$ is typically very large, especially when the parameters have a poor initialization, which makes SGD converges arbitrarily slow. In contrast, our result indicates that the clipping framework (shown in Algorithm \ref{FW} which includes a variety of clipping methods) achieves complexity of $\mathcal{O}(\Delta L_0\sigma^{2}\epsilon^{-4})$, therefore the dominating term of our bound is independent of both $M$ and $L_1$.  This provides a strong justification for the efficacy of clipping methods.

Compared with the bound for clipped SGD established  in \citet{zhang2019gradient}, our results improve theirs on the dependencies for all problem-dependent parameters, i.e. $\Delta$, $\sigma$, $L_0$, and especially $L_1$ by order. For SGD with arbitrarily chosen step sizes (thus include clipped SGD), the example in \citet{drori2019complexity} can be used to show that clipped SGD is optimal (cf. Section \ref{lower_bound_discussion}).


\section{General Analysis of Clipping}
\label{section_analysis}
We  aim to present a general framework in which we can provide a unified analysis for commonly used clipping-based algorithms. Since momentum is one of the most popular acceleration technique in optimization community, our framework takes this acceleration procedure  into account. We show our framework  in  Algorithm \ref{FW}, where  we can simply replace $\nabla f(x_t,\xi_t)$ by $\nabla F(x_t)$ for  the deterministic setting.

\begin{algorithm}[!htbp]
\label{FW}
\SetKwInOut{KIN}{Input}
\caption{The General Clipping Framework}
\KIN{Initial point $x_0$, learning rate $\eta$, clipping parameter $\gamma$, momentum $\beta\in [0,1)$, interpolation parameter $\nu\in [0,1]$ and the total number of iterations $T$}
Initialize $m_0$ arbitrarily\;
\For{$t \gets 0$ {to} $T-1$}{
    Compute the stochastic gradient $\nabla f(x_t,\xi_t)$ for the current point $x_t$\;
	$m_{t+1} \gets \beta m_{t} + (1-\beta )\nabla f(x_t,\xi_t)$\;
	$x_{t+1} \gets x_t - \left[ \nu \min \left( \eta, \dfrac{\gamma}{\|m_{t+1}\|} \right)m_{t+1} + (1-\nu )\min \left(\eta, \dfrac{\gamma}{\|\nabla f(x_t,\xi_t)\|}\right)\nabla f(x_t,\xi_t) \right]$\;
}

\end{algorithm}

 We notice that our framework is similar to the Quasi-Hyperbolic Momentum(QHM) algorithm proposed by \citet{ma2018quasi}, while they did not consider the clipping technique. They pointed out that QHM contains a wide range of popular algorithms (e.g. SGD+momentum, Nesterov Accelerated SGD, AccSGD, etc). As a result, for different choice of hyper-parameters, our framework encompasses the clipping version of all these algorithms. We now discuss several representative examples in our framework.
\begin{itemize}
    \item \textbf{Gradient Clipping.} By choosing $\nu = 0$ in Algorithm \ref{FW}, we obtain the \textit{clipped GD/SGD algorithm} which can be written as:
    $$x_{t+1} \gets x_t - \min \left(\eta, {\gamma}/{\|\nabla f(x_t,\xi_t)\|}\right)\nabla f(x_t,\xi_t) $$
    It follows that in gradient clipping, the gradient is clipped to have its norm no more than $\gamma/\eta$.
    \item \textbf{Momentum Clipping.} By choosing $\nu = 1$ in Algorithm \ref{FW}, we perform the update using a clipped version of momentum which can be written as:
    $$x_{t+1} \gets x_t - \min \left(\eta, {\gamma}/{\|m_{t+1}\|}\right)m_{t+1} $$
    The approach has already been used in previous works \citep{zhang2019adam,zhang2020complexity}, albeit in different settings. To the best of our knowledge, there is no existing analysis of this algorithm even for optimizing standard $L$-smooth functions.
    \item \textbf{Mixed Clipping.} By choosing  $\nu\in(0,1)$, we obtain the mixed clipping algorithm. Although this form of clipping is not widely used in practice, we observe from experiments that it typically converges faster than both gradient clipping and momentum clipping. Some explanations of this observation are provided in Appendix \ref{section_explanation}.
    \item \textbf{Normalized Momentum.} By choosing $\nu = 1$ and $\eta \rightarrow + \infty$ in Algorithm \ref{FW}, we recover the \textit{normalized SGD+momentum algorithm}. This algorithm performs a normalized (rather than clipped) update in each iteration. It has been analyzed in \citet{cutkosky2020momentum} for $L$-smooth functions and a layer-wise variant was used in the LARS algorithm \citep{you2017scaling}. We will provide a detailed discussion of this algorithm in the Appendix \ref{appendix_normalized_momentum}.
\end{itemize}

\subsection{Main Results}
In this section we first deal with the deterministic case, in which we can get a strong justification that clipping is a natural choice to optimize $(L_0,L_1)$-smooth functions. We have the following Theorem.
\begin{theorem}
\label{theorem3}
\textbf{[Convergence of Algorithm \ref{FW}, Deterministic Setting]} Let the function $F$ satisfy Assumptions \ref{Delta} and \ref{assumption_ls+}. Set $m_0=\nabla F(x_0)$ in Algorithm \ref{FW} for simplicity. Fix $\epsilon>0$ be a small constant. For any $0\le \beta<1$ and $0\le \nu\le 1$, if $\gamma \le \frac {1-\beta} {10BL_1}$ and $\eta \le \frac{1-\beta}{10AL_0}$
where $A=1.06$, $B=1.06$, then 
$$\dfrac{1}{T}\sum_{t=1}^T \|\nabla F(x_t)\| \le 2\epsilon $$
as long as
\begin{equation}
\begin{aligned}
T &\ge  3\Delta \max \left\{ \dfrac{1}{\epsilon^2\eta},\dfrac{25\eta}{\gamma^2} \right\}.
\end{aligned}
\end{equation}
\end{theorem}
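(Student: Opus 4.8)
The plan is to track a Lyapunov function that augments the objective with a term measuring how far the momentum buffer has drifted from the true gradient, and to run a descent-plus-telescoping argument adapted to $(L_0,L_1)$-smoothness. The first observation is that every clipped update has bounded length: since $\min(\eta,\gamma/\|v\|)\|v\|=\min(\eta\|v\|,\gamma)\le\gamma$, both the momentum and the gradient pieces of the step have norm at most $\gamma$, so $\|x_{t+1}-x_t\|\le\gamma$. Because $L_1\gamma\le\frac{1-\beta}{10B}$ is small, the gradient cannot change much along the segment $[x_t,x_{t+1}]$, so integrating the bound on $\nabla^2F$ from Assumption \ref{assumption_ls+} (or Remark \ref{remark_l0l1smooth} in the non-twice-differentiable case) yields a descent inequality of the form $F(x_{t+1})\le F(x_t)+\langle\nabla F(x_t),x_{t+1}-x_t\rangle+\frac{A(L_0+L_1\|\nabla F(x_t)\|)}{2}\|x_{t+1}-x_t\|^2$, where $A=1.06$ absorbs the mild gradient growth over the step.

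Next I would control the momentum error. Setting $e_{t+1}=m_{t+1}-\nabla F(x_t)$ and using $m_{t+1}=\beta m_t+(1-\beta)\nabla F(x_t)$ gives $e_{t+1}=\beta e_t+\beta(\nabla F(x_{t-1})-\nabla F(x_t))$, hence $\|e_{t+1}\|\le\beta\|e_t\|+\beta(L_0+L_1\|\nabla F(x_{t-1})\|)\|x_t-x_{t-1}\|$, while the initialization $m_0=\nabla F(x_0)$ makes $e_1=0$. I would then define the Lyapunov function $\Phi_t=F(x_t)+\lambda\|e_t\|$ --- using the norm $\|e_t\|$ rather than its square, which is what matches the linear-in-gradient behavior of clipping --- and expand $\Phi_{t+1}-\Phi_t$. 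Writing $m_{t+1}=\nabla F(x_t)+e_{t+1}$, the term $\langle\nabla F(x_t),x_{t+1}-x_t\rangle$ produces the genuine descent $-\,g_t\min(\eta g_t,\gamma)$, where $g_t:=\|\nabla F(x_t)\|$, together with a cross term $\nu\min(\eta,\gamma/\|m_{t+1}\|)\,g_t\|e_{t+1}\|$ coming from the momentum direction.

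The crux --- and the step I expect to be hardest --- is choosing $\lambda$ (of order $\gamma/(1-\beta)$) so that the negative contribution $-\lambda(1-\beta)\|e_t\|$ from the error recursion, together with the smoothness constant, absorbs this cross term and the quadratic term, leaving a clean per-step bound $\Phi_{t+1}\le\Phi_t-c\,g_t\min(\eta g_t,\gamma)$ for an absolute constant $c\ge 1/3$. This is exactly where the hypotheses $\gamma\le\frac{1-\beta}{10BL_1}$ and $\eta\le\frac{1-\beta}{10AL_0}$ are used: the factor $L_1\gamma/(1-\beta)\le 1/(10B)$ makes the $L_1$-part of the accumulated error only a small fraction of the descent, the factor $L_0\eta/(1-\beta)\le 1/(10A)$ tames the quadratic term, and the $(1-\beta)$ scalings compensate the $1/(1-\beta)$ blow-up of the geometric accumulation of the momentum error. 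Fitting every cross term under the descent with a concrete constant is the delicate bookkeeping; note that in the pure gradient-clipping case $\nu=0$ the momentum term vanishes and this reduces to the descent lemma alone.

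Finally I would telescope from $t=1$ to $T$, using $e_1=0$ and $\Phi_{T+1}\ge F^*$ to obtain $\sum_{t=1}^T g_t\min(\eta g_t,\gamma)\le\Delta/c$. Splitting the indices into the non-clipping set $N=\{t:\eta g_t\le\gamma\}$ and the clipping set $C=\{t:\eta g_t>\gamma\}$ gives $\sum_{t\in N} g_t^2\le\frac{\Delta}{c\eta}$ and $\sum_{t\in C} g_t\le\frac{\Delta}{c\gamma}$. Cauchy--Schwarz on $N$ (with $|N|\le T$) bounds $\frac1T\sum_{t\in N} g_t\le\sqrt{\Delta/(c\eta T)}$, which is at most $\epsilon$ once $T\ge\frac{\Delta}{c\eta\epsilon^2}$ --- the first term in the maximum. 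The clipping part satisfies $\frac1T\sum_{t\in C} g_t\le\frac{\Delta}{c\gamma T}$, so requiring it to be at most $\epsilon$ amounts to $T\ge\frac{\Delta}{c\gamma\epsilon}$; a short case analysis on whether $\epsilon$ lies below or above the clipping threshold $\gamma/\eta$ shows this requirement is always implied by $T\ge 3\Delta\max\{1/(\epsilon^2\eta),\,25\eta/\gamma^2\}$, so the two regimes collapse into the stated maximum and the average is at most $2\epsilon$.
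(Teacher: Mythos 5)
There is a genuine gap, and it sits exactly at the step you flag as ``delicate bookkeeping'': with the purely linear potential $\Phi_t=F(x_t)+\lambda\|e_t\|$ and constant $\lambda=\Theta\left(\gamma/(1-\beta)\right)$, the per-step bound $\Phi_{t+1}\le\Phi_t-c\,g_t\min(\eta g_t,\gamma)$ is not merely hard to prove --- it is false. The problem is a scale mismatch: maintaining the linear potential costs $\lambda\beta\|\nabla F(x_t)-\nabla F(x_{t-1})\|\le\lambda\beta(AL_0+BL_1g_{t-1})\|x_t-x_{t-1}\|$ per step, which in the unclipped regime is of order $\frac{\gamma}{1-\beta}\cdot(1-\beta)\,g_{t-1}=\Theta(\gamma\,g_{t-1})$, while the descent available there is only $\Theta(\eta g_t^2)$; when $g_t\ll\gamma/\eta$ the cost swamps the earnings. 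Concretely, take $F(x)=\frac{L_0}{2}\|x\|^2$, $\nu=1$, $\eta=\frac{1-\beta}{10AL_0}$, $m_0=\nabla F(x_0)$, and $g_0:=\|\nabla F(x_0)\|$ small. Then $m_1=\nabla F(x_0)$, $e_1=0$, the first step is unclipped, and a direct computation gives $e_2=\beta\eta L_0\nabla F(x_0)$, hence
\begin{equation*}
\Phi_2-\Phi_1\;\ge\;-\eta g_0^2+\lambda\beta\eta L_0\,g_0\;=\;-\eta g_0^2+\frac{K\beta\gamma}{10A}\,g_0
\qquad\text{for }\lambda=\frac{K\gamma}{1-\beta},
\end{equation*}
which is \emph{strictly positive} whenever $g_0<\frac{K\beta}{10A}\cdot\frac{\gamma}{\eta}$. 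Since your clipped-regime argument forces $K=\Omega(1)$ (you need $\lambda(1-\beta)\gtrsim\gamma$ for the term $-\lambda(1-\beta)\|e_t\|$ to dominate cross terms of size $\gamma\|e_t\|$ when the momentum points the wrong way), the failure region is a fixed fraction of the ball $\{g<\gamma/\eta\}$ --- precisely where the algorithm must make quadratic progress to reach an $\epsilon$-stationary point. So the Lyapunov function can increase, and no choice of the absolute constant $c$ rescues the telescoping step.

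This is structural, not a matter of constants: a potential that is \emph{linear} in the tracking error pays a $\Theta(\gamma)\times(\text{gradient scale})$ maintenance cost, which is affordable only in the clipped (linear-descent) regime; in the unclipped regime the potential must be quadratic. This is exactly where the paper's proof differs. Its Lyapunov function is $G(x,m)=F(x)+\frac{\nu\beta}{2(1-\beta)}\min\left(\eta\|m\|^2,\gamma\|m\|\right)$ --- quadratic below the threshold $\gamma/\eta$ and linear above it --- and, crucially, it is a function of the momentum $m$ itself rather than of the error $m-\nabla F$: the recursion $m^+=\beta m+(1-\beta)\nabla F(x)$ is exact, so in the unclipped regime the potential difference is computed in closed form and merged with the descent inequality into a positive semi-definite quadratic form (the paper's Lemmas B.5--B.7), producing descent $\frac{\eta}{2}\left((1-\nu\beta)\|\nabla F(x_t)\|^2+\nu\beta\|m_t\|^2\right)$ with no $\gamma$-scale parasitic term at all. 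Error-accumulation bounds of your type appear only over the clipped index set (Corollary B.4) and in converting $\sum\|m_t\|$ into $\sum\|\nabla F(x_t)\|$ (Lemma B.8), where each parasitic term is either $\gamma$-scale against a $\gamma$-scale descent or a small constant multiple of $\|m_t\|$, $\|\nabla F(x_t)\|$ themselves. Your outer architecture (descent inequality under $(L_0,L_1)$-smoothness, error recursion, telescoping, splitting into clipped/unclipped sets, Cauchy--Schwarz, and the final reconciliation of the two lower bounds on $T$) is sound and matches the paper; but repairing the core would require replacing $\lambda\|e_t\|$ by a quadratic/linear hybrid and a genuine two-regime analysis --- at which point you are reconstructing the paper's argument rather than giving a shorter alternative.
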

In Theorem \ref{theorem3}, the $(L_0,L_1)$-smoothness is precisely reflected in the restriction of hyper-parameters $\gamma=\mathcal O(1/L_1)$ and $\eta=\mathcal O(1/L_0)$. For large $L_1$, we must use a small clipping hyper-parameter to guarantee convergence. 
This also coincides with the intuition that in highly non-smooth regions we should take a small step.

Theorem \ref{theorem3} states that in the deterministic setting, for any $\epsilon>0$, our framework can find an $\epsilon$-approximate stationary point in $\mathcal{O}\left(\Delta \max \left\{ \frac{L_0}{\epsilon^2},\frac{L_1^2}{L_0} \right\} \right)$ gradient evaluations if we choose $\gamma = \Theta\left( {1}/{L_1} \right)$ and $\eta = \Theta \left( {1}/{L_0} \right)$. When $\epsilon = \mathcal{O}(L_0/L_1)$ , the dominating term is $\mathcal{O}\left(\Delta{ L_0} {\epsilon^{-2}} \right)$.

Now we turn to our main result in the stochastic setting. We have the following theorem.

\begin{theorem}
\label{theorem4}
\textbf{[Convergence of Algorithm \ref{FW}, Stochastic setting]} Let the function $F$ satisfy Assumptions \ref{Delta} and \ref{assumption_ls+}, and the noise satisfies Assumption \ref{BN} with $\sigma\ge 1$. Set $m_0=\nabla F(x_0)$ in Algorithm \ref{FW} for simplicity. Fix $0<\epsilon\le 0.1$ be a small constant. For any $0\le \beta<1$ and $0\le \nu\le 1$, if $\gamma\le\frac {\epsilon}{2\sigma}\min\left\{\frac {\epsilon}{AL_0}, \frac {1-\beta}{AL_0},\frac {1-\beta}{25BL_1}\right\}$ and $\gamma/\eta=5\sigma$ where constants $A=1.01, B=1.01$,  then  
 \begin{equation}
     \dfrac{1}{T} \sum_{t=1}^T \mathbb{E} \|\nabla F(x_t)\| \leq 3\epsilon
 \end{equation}
as long as
\begin{equation}
\begin{aligned}
\label{Tstoc}
T &\ge \frac {3}{\epsilon^2\eta}\Delta.
\end{aligned}
\end{equation}
Here the expectation is taken over all the randomness $\xi_0,\cdots,\xi_{T-1}$.
\end{theorem}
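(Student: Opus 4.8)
The plan is to establish a descent inequality tailored to $(L_0,L_1)$-smoothness and then control the iterates with a Lyapunov function that augments $F$ with a correction for the lag between the momentum buffer $m_{t+1}$ and the true gradient $\nabla F(x_t)$. The design hinges on the choice $\gamma/\eta=5\sigma$, which places the clipping threshold a fixed multiple above the noise level and partitions the trajectory into two regimes: a signal-dominated regime where $\|\nabla F(x_t)\|\gtrsim\sigma$ and the clipped update behaves like a normalized step with progress of order $\gamma\|\nabla F(x_t)\|$, and a noise-dominated regime where $\|\nabla F(x_t)\|\lesssim\sigma$, no clipping occurs, and the update reduces to momentum SGD with progress of order $\eta\|\nabla F(x_t)\|^2$. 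In both regimes the per-step progress is at least of order $\eta\epsilon\|\nabla F(x_t)\|$ whenever $\|\nabla F(x_t)\|\ge\epsilon$, which is exactly what the target bound on $\frac1T\sum_t\mathbb E\|\nabla F(x_t)\|$ demands.

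First I would record the descent inequality. Each clipped increment in Algorithm \ref{FW} has norm at most $\gamma$, so $\|x_{t+1}-x_t\|\le\gamma$, and because the hypotheses force $\gamma L_1$ to be tiny, the relaxed smoothness of Remark \ref{remark_l0l1smooth} yields the quadratic bound $F(x_{t+1})\le F(x_t)+\langle\nabla F(x_t),x_{t+1}-x_t\rangle+\tfrac{A(L_0+L_1\|\nabla F(x_t)\|)}{2}\|x_{t+1}-x_t\|^2$, the constant $A$ measuring the inflation of the effective smoothness over a step of finite length; the twin constant $B$ controls the consecutive-gradient estimate $\|\nabla F(x_t)-\nabla F(x_{t-1})\|\le B(L_0+L_1\|\nabla F(x_{t-1})\|)\|x_t-x_{t-1}\|$, which I will need to bound the momentum lag.

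The heart of the argument is the treatment of the stochastic momentum. Writing $g_t=\nabla f(x_t,\xi_t)$ and the fresh noise $\zeta_t=g_t-\nabla F(x_t)$ (mean-zero given $\mathcal F_t$, bounded by $\sigma$ by Assumption \ref{BN}), the buffer decomposes as $m_{t+1}=\bar m_t+(1-\beta)\zeta_t$ with $\bar m_t$ being $\mathcal F_t$-measurable, and the discrepancy obeys the contraction $m_{t+1}-\nabla F(x_t)=\beta(m_t-\nabla F(x_{t-1}))+\beta(\nabla F(x_{t-1})-\nabla F(x_t))+(1-\beta)\zeta_t$. The Lyapunov function is built so that, after conditional expectations, the fresh noise $\zeta_t$ enters only through a mean-zero channel (in the unclipped regime) or a controlled second-order channel (in the clipped regime, via the $5\sigma$ margin, as in the clipped-SGD estimate of \citet{zhang2019gradient}), while the momentum-lag bias is charged to $\|\nabla F(x_{t-1})-\nabla F(x_t)\|$ and hence to the step length $\gamma$; the mixed case $\nu\in(0,1)$ is handled in parallel since the increment is a convex combination of the two clipped directions. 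Telescoping the potential against $\Delta$, taking total expectations, and inserting the regime-wise progress lower bounds leaves only a second-order residual and a momentum-lag residual, and the constraints $\gamma\le\tfrac{\epsilon}{2\sigma}\min\{\ldots\}$ and $\gamma/\eta=5\sigma$ are precisely what force each residual below a constant multiple of $\epsilon$, giving $\frac1T\sum_t\mathbb E\|\nabla F(x_t)\|\le 3\epsilon$ once $T\ge 3\Delta/(\epsilon^2\eta)$; substituting $\gamma=\Theta(\epsilon^2/(\sigma L_0))$ and $\eta=\gamma/(5\sigma)$ then recovers the complexity $\Theta(\Delta L_0\sigma^2\epsilon^{-4})$.

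The main obstacle is that the clipping nonlinearity blocks the standard device for momentum SGD, in which a shifted iterate makes the stochastic update linear in $g_t$ so that the noise telescopes as a martingale with no residual dependence on $\beta$. Under clipping the step is a nonlinear function of the entire noise history, so a naive potential carrying $\|m_{t+1}-\nabla F(x_t)\|^2$ incurs an irreducible noise floor of order $(1-\beta)\sigma^2$ and only closes when $1-\beta\lesssim\epsilon^2/\sigma^2$. The role of the novel Lyapunov function is to route the fresh noise through a linear, conditionally mean-zero term while attributing the accumulated discrepancy entirely to the momentum-lag bias, which is tamed by letting $\eta$ and $\gamma$ scale with $1-\beta$ exactly as the hypotheses prescribe, so that the analysis closes for every $\beta\in[0,1)$ rather than only for vanishing momentum. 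Verifying that this routing survives the case distinction at the clipping boundary, where a single noise realization can toggle whether the step is clipped, is the most delicate point.
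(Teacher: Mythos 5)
Your plan reproduces the paper's skeleton — the two-regime split governed by the threshold $\gamma/\eta=5\sigma$, a potential-function telescope against $\Delta$, and a recursion for the momentum lag — but the steps that carry the actual difficulty are left unresolved, and one of them you explicitly concede as unverified. First, the central object of the proof, the Lyapunov function, is never written down, and the design you describe (a correction for the lag between $m_{t+1}$ and $\nabla F(x_t)$) is precisely the one your own obstruction paragraph rules out: any potential charging $\|m-\nabla F\|$ sees the $(1-\beta)\sigma^2$ noise floor. The paper's potential is not lag-based: it is $G(x,\tilde m)=F(x)+\frac{\nu\beta}{2(1-\beta)}\min\left(\eta\|\tilde m\|^2,\gamma\|\tilde m\|\right)$, built on the \emph{noise-free} momentum $\tilde m_{t+1}=\beta\tilde m_t+(1-\beta)\nabla F(x_t)$; it penalizes the magnitude of the true momentum (clipped quadratically), and all stochasticity is isolated in the difference $m_t-\tilde m_t$, which is a pure geometrically-weighted sum of past noises (Lemma \ref{lemma_mom_clip_3}). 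Without an explicit potential there is nothing to telescope, and the per-step inequalities cannot even be formed, let alone checked to close.

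Second, your "mean-zero channel" only disposes of the fresh noise $\zeta_t$; after conditioning, the surviving term is $\beta\,\mathbb{E}\langle\nabla F(x_t),m_t-\tilde m_t\rangle$, and the accumulated noise $m_t-\tilde m_t$ is \emph{correlated with} $x_t$ (past noise moved the iterate), so it is not mean zero — this is exactly where an $O(\sigma^2)$-per-step bias could destroy the bound, and it is what you flag as "the most delicate point" without resolving. The paper resolves it with a dedicated recursive argument: unroll $\mathbb{E}\langle\nabla F(x_t),m_t-\tilde m_t\rangle$ backward along the trajectory and bound each increment $\mathbb{E}\langle\nabla F(x_\tau)-\nabla F(x_{\tau+1}),m_{\tau+1}-\tilde m_{\tau+1}\rangle$ via the Hessian integral representation in \eqref{hessian} combined with Lemma \ref{lemma_mom_clip_3}. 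The clipping-boundary toggling you worry about is defused structurally rather than probabilistically: the case split uses $\max(5\|\nabla F(x_t)\|/4,\|m_{t+1}\|,\|\tilde m_t\|)$ against $\gamma/\eta$, so with $\gamma/\eta=5\sigma$ the complement case is \emph{deterministically} unclipped, and the noise cross terms are carried for all $t$ rather than per regime. Finally, your claimed unclipped-regime progress $\Omega(\eta\|\nabla F(x_t)\|^2)$ does not hold uniformly in $\beta$: the true one-step decrease is $\frac{\eta}{2}\left[(1-\nu\beta)\|\nabla F(x_t)\|^2+\nu\beta\|\tilde m_t\|^2\right]$ (Lemma \ref{lemma_mom_clip_stochastic_small_basic}), whose gradient coefficient vanishes as $\nu\beta\to 1$; recovering gradient progress requires converting $\sum_{t}\|\tilde m_t\|$ into $\sum_t\|\nabla F(x_t)\|$ (Lemma \ref{lemma_convert_m_to_grad}) together with the inequality $x^2\ge 2\epsilon x-\epsilon^2$, none of which appears in your plan. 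These are the load-bearing components of the theorem, so the proposal as written has genuine gaps.
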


Theorem \ref{theorem4} shows that in the stochastic setting, for any $\epsilon>0$, our framework can find an $\epsilon$-approximate stationary point in $\mathcal{O}\left(\Delta\sigma^2 \left( \max\left\{\frac{ L_0}{\epsilon^{4}},\frac{L_1^4}{L_0^3}\right\}\right) \right)$ gradient evaluations. When $\epsilon < \min \left\{1, \frac{L_0}{25L_1} \right\} (1-\beta)$ , the term $\min\left\{ \frac {\epsilon}{AL_0},\frac {1-\beta}{AL_0},\frac {1-\beta}{25BL_1}\right\}$ reduces to $\frac {\epsilon} {AL_0}$. In this case $L_1$ no longer affects the choice of steps sizes $\eta$ and $\gamma$, and the complexity in \eqref{Tstoc} reduces to $\mathcal{O}\left(\Delta L_0 \sigma^2 \epsilon^{-4}\right) $.

Theorem \ref{theorem4} suggests that the clipping threshold should take $\gamma/\eta=\Theta(\sigma)$, which only depends on the noise and is several times larger than the its variance. This matches previous understanding of gradient clipping, in that clipping the stochastic gradient controls the variance while introducing some additional bias, and the clipping threshold should be tuned
to trade-off variance with the introduced bias \citep{zhang2019adam}.

We emphasize that in both settings, the dominating  terms in our upper bounds are independent of the gradient upper bound $M$ and the smoothness parameter $L_1$.
In other words, the efficiency of Algorithm \ref{FW} is essentially unaffected by these quantities. Recall that $M$ and $L_1$ are related to steep cliffs in the landscape where the gradient may be large or fluctuate violently. Therefore, our results suggest that such non-smoothness can be tackled with clipping methods without sacrificing efficiency.

\subsection{Proof Sketch}
The analysis of Algorithm \ref{FW} is in fact challenging, as it uses both momentum and adaptive step sizes. Also, the general $(L_0,L_1)$-smoothness assumption makes things more complicated. In this subsection we briefly introduce our proof technique. We hope our proof is also useful to a better understanding of other adaptive algorithms that combine momentum (such as Adam \citep{kingma2014adam}).

\textbf{Proof sketch of Theorem \ref{theorem3}.} Due to the momentum term, each step in Algorithm \ref{FW} is not necessarily a descent one, which makes it difficult to prove convergence using traditional techniques. Instead, we construct a novel \textit{Lyapunov function} as follows:
\begin{equation}
    G(x,m)=F(x)+\frac {\beta\nu} {2(1-\beta)} \min\left({\eta} \|m\|^2,{\gamma}\|m\|\right)
\end{equation}
We aim to analyze the descent property of the sequence $\{G(x_t,m_t)\}_{t=0}^{T}$. Define $\rho=\gamma/\eta$, $\mathcal{S} := \{ t\in\mathbb N : t< T, \max(\|\nabla F(x_t)\|,\|m_t\|,\|m_{t+1}\|)\ge \rho \}$ and $\overline{\mathcal{S}}=\{ t\in\mathbb N : t< T\}\backslash \mathcal S$. Let $T_{\mathcal{S}} = \left| \mathcal{S} \right|$. We separately provide one-step analysis for the two cases, as stated in Lemma \ref{proof_techinique_1} and \ref{proof_techinique_1.5} respectively.

\begin{lemma}
\label{proof_techinique_1}
For any $t \in \mathcal{S}$, we have
\begin{equation}
        G(x_{t},m_{t})-G(x_{t+1},m_{t+1})=\Omega\left(\gamma(1-\gamma)(\|\nabla F(x_t)\|+\rho)-\gamma\|\nabla F(x_t)-m_t\|\right)
    \end{equation}
\end{lemma}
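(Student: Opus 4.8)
The plan is to work in the deterministic setting, where $\nabla f(x_t,\xi_t)=\nabla F(x_t)=:g_t$. I would first fix notation for the two clipped steps, $\widehat g_t=\min(\eta,\gamma/\|g_t\|)g_t$ and $\widehat m_{t+1}=\min(\eta,\gamma/\|m_{t+1}\|)m_{t+1}$, and set $d_t=\nu\widehat m_{t+1}+(1-\nu)\widehat g_t$ so that $x_{t+1}=x_t-d_t$. Note immediately that $\|\widehat g_t\|=\min(\eta\|g_t\|,\gamma)\le\gamma$ and likewise $\|\widehat m_{t+1}\|\le\gamma$, hence $\|d_t\|\le\gamma\le\frac{1}{10BL_1}$. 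I would also write $\Phi(m)=\min(\eta\|m\|^2,\gamma\|m\|)$ for the function entering the Lyapunov correction, and record the identity $g_t-m_{t+1}=\beta(g_t-m_t)$ coming from the momentum recursion.

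First I would establish a local descent inequality adapted to $(L_0,L_1)$-smoothness. Along the segment $y_s=x_t-sd_t$, a Gr\"onwall estimate for $\|\nabla F(y_s)\|$ using $\|\nabla^2 F\|\le L_0+L_1\|\nabla F\|$ and the smallness $\|d_t\|\le\gamma\le\frac{1}{10BL_1}$ shows the gradient changes by at most a factor close to $1$ along the step; integrating the Taylor remainder then yields
$$F(x_{t+1})\le F(x_t)-\langle g_t,d_t\rangle+\tfrac12\left(L_0+AL_1\|g_t\|\right)\|d_t\|^2,$$
which is exactly where the constants $A,B=1.06$ are calibrated (they absorb the $e^{L_1\gamma}$-type factor from Gr\"onwall together with $L_0\eta,\,L_1\gamma\le\frac1{10}$).

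Next I would decompose the first-order term. The gradient part is exact, $\langle g_t,\widehat g_t\rangle=\min(\eta\|g_t\|^2,\gamma\|g_t\|)=\Phi(g_t)$, while the momentum part splits as $\langle g_t,\widehat m_{t+1}\rangle=\langle m_{t+1},\widehat m_{t+1}\rangle+\langle \beta(g_t-m_t),\widehat m_{t+1}\rangle$, so that $\langle m_{t+1},\widehat m_{t+1}\rangle=\Phi(m_{t+1})$ and, by Cauchy--Schwarz with $\|\widehat m_{t+1}\|\le\gamma$, the cross term is at least $-\beta\gamma\|g_t-m_t\|$; this is the origin of the penalty $-\gamma\|g_t-m_t\|$. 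I would then combine the surviving $\nu\Phi(m_{t+1})$ with the Lyapunov difference $\frac{\beta\nu}{2(1-\beta)}(\Phi(m_t)-\Phi(m_{t+1}))$. Using the one-step estimates $\|m_{t+1}\|\ge\|g_t\|-\beta\|g_t-m_t\|$ and $\|m_t\|-\|m_{t+1}\|\ge(1-\beta)(\|m_t\|-\|g_t\|)$ (upgraded to statements about $\Phi$ via its monotonicity and convexity in $\|m\|$, with a short case split on whether each norm lies above or below $\rho$), the momentum contribution collapses to $\nu\gamma\|g_t\|$ minus an $O(\gamma)\|g_t-m_t\|$ remainder, so that together with the gradient part the total first-order progress is $\gtrsim\gamma\|g_t\|-O(\gamma)\|g_t-m_t\|$.

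Finally I would invoke $t\in\mathcal S$, i.e. $\max(\|g_t\|,\|m_t\|,\|m_{t+1}\|)\ge\rho$, through a short case analysis: whichever of the three norms exceeds $\rho$ forces either the clipped gradient term, the clipped momentum term, or the Lyapunov term $\Phi(m_t)$ to be of order $\gamma\rho$, producing the extra $+\rho$ and upgrading the bound to $\gamma(\|g_t\|+\rho)$. Subtracting the second-order error and bounding $\tfrac12(L_0+AL_1\|g_t\|)\|d_t\|^2\le\tfrac12(L_0\eta+AL_1\gamma)\gamma(\|g_t\|+\rho)$ via $\|d_t\|\le\gamma$, $\gamma/\eta=\rho$, and the step-size constraints leaves a factor of the form $1-O(\gamma)$ and yields $G(x_t,m_t)-G(x_{t+1},m_{t+1})=\Omega(\gamma(1-\gamma)(\|g_t\|+\rho)-\gamma\|g_t-m_t\|)$. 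The hard part will be this conversion step: reconciling the two-regime ($\min$) structure of $\Phi$ with the momentum recursion so that progress measured in $\|m_{t+1}\|$ is correctly translated into progress in $\|g_t\|$ while isolating exactly the drift $\|g_t-m_t\|$. The Lyapunov coefficient $\frac{\beta\nu}{2(1-\beta)}$ is precisely what makes the spurious $\|m_t\|$ and $\|m_{t+1}\|$ terms cancel.
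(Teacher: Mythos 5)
Your plan follows essentially the same route as the paper's own proof of this lemma (Lemma \ref{lemma_mom_clip_deterministic_large_basic}, built on Lemmas \ref{lemma_mom_clip_1} and \ref{lemma_mom_clip_2}): the same Lyapunov function, a Gr\"onwall-based descent inequality for $(L_0,L_1)$-smoothness, isolation of the drift $\beta\|\nabla F(x_t)-m_t\|$ out of the momentum inner product, a case analysis over which of $\|\nabla F(x_t)\|$, $\|m_t\|$, $\|m_{t+1}\|$ exceeds $\rho$, and absorption of the second-order error via the step-size restrictions. Most of your intermediate claims are correct, including $\langle m_{t+1},\widehat m_{t+1}\rangle=\Phi(m_{t+1})$ (in your notation $\Phi(m)=\min(\eta\|m\|^2,\gamma\|m\|)$) and the one-step norm estimates.

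The one step that fails as written is precisely the step you rely on to produce the $+\rho$: ``whichever of the three norms exceeds $\rho$ forces either the clipped gradient term, the clipped momentum term, or the Lyapunov term $\Phi(m_t)$ to be of order $\gamma\rho$.'' The term that becomes large may carry a vanishing weight, so it cannot ``produce'' anything. Take $\nu=0$ (pure gradient clipping) with $\|m_t\|\ge\rho$ but $\|\nabla F(x_t)\|<\rho$ and $\|m_{t+1}\|<\rho$: the clipped momentum term and the Lyapunov correction both have coefficient zero ($\nu$ and $\beta\nu/(2(1-\beta))$ respectively), and the only surviving first-order term is $\Phi(\nabla F(x_t))=\eta\|\nabla F(x_t)\|^2$, which can be arbitrarily small --- nothing is of order $\gamma\rho$; the symmetric failure occurs for $\nu=1$ when only $\|\nabla F(x_t)\|\ge\rho$. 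The correct mechanism, which the paper uses in the second and third cases of both \eqref{MomIneq} and \eqref{GradIneq}, is to charge the $\rho$-deficit to the drift: by the perfect square, $\eta\|\nabla F(x_t)\|^2\ge 2\gamma\|\nabla F(x_t)\|-\gamma\rho$, and since $t\in\mathcal S$ gives $\rho\le\max(\|m_t\|,\|m_{t+1}\|)\le\|\nabla F(x_t)\|+\|\nabla F(x_t)-m_t\|$, the deficit $\gamma\bigl(\rho-\|\nabla F(x_t)\|\bigr)$ is at most $\gamma\|\nabla F(x_t)-m_t\|$; the same inequality $\gamma\rho\le\gamma\|\nabla F(x_t)\|+\gamma\|\nabla F(x_t)-m_t\|$ then restores the $+\rho$ term, up to constants, in the final bound. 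Note that both the gradient part and the momentum part must individually be bounded in this common form, since $\nu$ is arbitrary in $[0,1]$. This repair uses only tools already present in your write-up --- it is your own drift-isolation step applied once more --- but without it the case analysis, and hence the lemma, does not go through at the extreme values of $\nu$.
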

We prove this lemma by using the $(L_0,L_1)$-smoothness properties deduced in Appendix \ref{appendix_property} and conducting a comprehensive discussion on three cases in Lemmas B.1-B.3. Furthermore, we show in Lemma B.4 that $\sum_{t \in \mathcal{S}} \|\nabla F(x_t)-m_t\| = \mathcal{O}\left( \gamma T_{\mathcal{S}} (\rho+ \sum_{t \in \mathcal{S}}\|\nabla F(x_t)\|) \right)$. By choosing a small enough $\gamma$ and carefully dealing with constants, we can conclude that the total amount of decrease of the Lyapunov function is $\Omega \left( \rho\gamma T_{\mathcal{S}} \right)$.

\begin{lemma}
\label{proof_techinique_1.5}
For any $t \in \overline{\mathcal{S}}$, if $\eta=\mathcal O(1/L_0)$ and $\gamma=\mathcal O(1/L_1)$, then we have
\begin{equation}
    \label{proof_techinique_2}
    G(x_{t},m_{t})-G(x_{t+1},m_{t+1})=\Omega\left(\eta\left((1-\nu\beta)\|\nabla F(x_t)\|^2+\nu\beta \|m_t\|^2\right)\right)
\end{equation}
\end{lemma}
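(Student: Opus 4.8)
The plan is to exploit the defining property of $\overline{\mathcal S}$: when $t\in\overline{\mathcal S}$ we have $\max(\|\nabla F(x_t)\|,\|m_t\|,\|m_{t+1}\|)<\rho=\gamma/\eta$, so $\min(\eta,\gamma/\|\nabla F(x_t)\|)=\min(\eta,\gamma/\|m_{t+1}\|)=\eta$ and clipping is inactive. The update then collapses to the plain momentum step
$$x_{t+1}=x_t-\eta\big(\nu m_{t+1}+(1-\nu)\nabla F(x_t)\big)=x_t-\eta\big(\nu\beta\, m_t+(1-\nu\beta)\nabla F(x_t)\big),$$
using $m_{t+1}=\beta m_t+(1-\beta)\nabla F(x_t)$; likewise $\|m_t\|,\|m_{t+1}\|<\rho$ forces the $\min$ in $G$ onto the $\eta\|m\|^2$ branch, so $G(x,m)=F(x)+\tfrac{\beta\nu\eta}{2(1-\beta)}\|m\|^2$ at both endpoints. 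Writing $d_t:=x_{t+1}-x_t$, a convexity bound gives $\|d_t\|<\eta\rho=\gamma\le 1/L_1$, which legitimizes the $(L_0,L_1)$ descent inequality from Appendix \ref{appendix_property}, $F(x_{t+1})\le F(x_t)+\langle\nabla F(x_t),d_t\rangle+\tfrac{\tilde L}{2}\|d_t\|^2$ with effective constant $\tilde L=AL_0+BL_1\|\nabla F(x_t)\|$. Since $\|\nabla F(x_t)\|<\rho$, the step-size conditions give $\tilde L\eta< AL_0\eta+BL_1\gamma\le\tfrac15$; the crucial point is that $L_1$ drops out of the effective smoothness in this regime.

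Next I would substitute into $G(x_t,m_t)-G(x_{t+1},m_{t+1})$, expand via $m_{t+1}=\beta m_t+(1-\beta)\nabla F(x_t)$, and collect the result as a quadratic form in $\|\nabla F(x_t)\|^2$, $\|m_t\|^2$ and the cross term $\langle m_t,\nabla F(x_t)\rangle$. A direct computation shows that, before the curvature correction, the coefficient of $\|\nabla F(x_t)\|^2$ is $\eta\big(1-\tfrac{\nu\beta(3-\beta)}{2}\big)$, that of $\|m_t\|^2$ is $\tfrac{\eta\nu\beta(1+\beta)}{2}$, and that of $\langle m_t,\nabla F(x_t)\rangle$ is $\eta\nu\beta(1-\beta)$. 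The algebraic heart is the identity $1-\tfrac{\nu\beta(3-\beta)}{2}\ge\tfrac{(1-\nu\beta)(2-\nu\beta)}{2}\ge\tfrac{1-\nu\beta}{2}$ (using $\beta\ge\nu\beta$), which recovers exactly the target weight $(1-\nu\beta)$ on the gradient term.

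The main obstacle is the indefinite cross term $\langle m_t,\nabla F(x_t)\rangle$, which can be as negative as $-\|m_t\|\|\nabla F(x_t)\|$ and threatens positivity. I would handle it by a discriminant (Young) argument: a form $a\|\nabla F(x_t)\|^2+b\langle m_t,\nabla F(x_t)\rangle+c\|m_t\|^2$ dominates $\tfrac12\big(a\|\nabla F(x_t)\|^2+c\|m_t\|^2\big)$ whenever $b^2\le ac$. With $a\ge\tfrac{\eta(1-\nu\beta)}{2}$, $c=\tfrac{\eta\nu\beta(1+\beta)}{2}$, $b=\eta\nu\beta(1-\beta)$, this reduces to $4\nu\beta(1-\beta)^2\le(1-\nu\beta)(1+\beta)$; as both sides are monotone in $\nu$ the worst case is $\nu=1$, where it becomes $4\beta^2-3\beta+1\ge0$, true since the discriminant $9-16<0$. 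This is exactly where the Lyapunov coefficient $\tfrac{\beta\nu}{2(1-\beta)}$ is needed: a smaller weight would fail the test as $\beta\to1$.

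Finally I would absorb the curvature term. Because $\nu\beta+(1-\nu\beta)=1$, Jensen gives $\|d_t\|^2\le\eta^2\big(\nu\beta\|m_t\|^2+(1-\nu\beta)\|\nabla F(x_t)\|^2\big)$, so $-\tfrac{\tilde L}{2}\|d_t\|^2$ removes at most $\tfrac{\tilde L\eta^2}{2}(1-\nu\beta)$ and $\tfrac{\tilde L\eta^2}{2}\nu\beta$ from the two surviving coefficients $\tfrac{\eta(1-\nu\beta)}{4}$ and $\tfrac{\eta\nu\beta(1+\beta)}{4}$. Since $\tilde L\eta<\tfrac15$, this leaves coefficients at least $\tfrac{3}{20}\eta(1-\nu\beta)$ and $\tfrac{3}{20}\eta\nu\beta$, yielding $G(x_t,m_t)-G(x_{t+1},m_{t+1})\ge\tfrac{3}{20}\eta\big((1-\nu\beta)\|\nabla F(x_t)\|^2+\nu\beta\|m_t\|^2\big)$, which is the claimed $\Omega\big(\eta((1-\nu\beta)\|\nabla F(x_t)\|^2+\nu\beta\|m_t\|^2)\big)$ bound.
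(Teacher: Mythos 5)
Your proof is correct, and its skeleton matches the paper's (Lemmas \ref{lemma_mom_clip_deterministic_small_basic}--\ref{lemma_mom_clip_6}): for $t\in\overline{\mathcal S}$ the update is unclipped, $G$ sits on its quadratic branch, and the Lyapunov decrease becomes a quadratic form in $\|\nabla F(x_t)\|$, $\|m_t\|$, $\langle \nabla F(x_t),m_t\rangle$ whose positivity must be certified; your curvature-free coefficients $\eta\left(1-\nu\beta(3-\beta)/2\right)$, $\eta\nu\beta(1+\beta)/2$, $\eta\nu\beta(1-\beta)$ agree exactly with the paper's $\tfrac{\eta}{2}c_1$, $\tfrac{\eta}{2}c_3$, $\eta c_2$ once the $L\eta$ terms are stripped out. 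Where you genuinely diverge is the positivity certificate. The paper keeps the curvature correction inside the form and proves exact positive semidefiniteness of the excess matrix $H$ (Lemma \ref{lemma_mom_clip_5}), exploiting the factorization $\det H=\nu(1-\nu)\beta^2(1-L\eta)$ with $L=AL_0+BL_1\gamma/\eta$; this needs only $L\eta\le 1$ and yields the constant $\tfrac12$ in Corollary \ref{lemma_mom_clip_6}. You instead peel the curvature off, run a discriminant test $b^2\le ac$ on the curvature-free form --- reduced by monotonicity in $\nu$ to the worst case $\nu=1$, where it becomes $4\beta^2-3\beta+1\ge 0$ --- and then absorb the curvature via Jensen under $\tilde L\eta<\tfrac15$, ending with constant $\tfrac{3}{20}$. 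Your route is more modular and avoids computing the determinant of the full correction matrix, at the price of worse constants and a (harmlessly) stronger step-size requirement, which is anyway satisfied under the hypotheses of Theorem \ref{theorem3}; both arguments deliver the claimed $\Omega\left(\eta\left((1-\nu\beta)\|\nabla F(x_t)\|^2+\nu\beta\|m_t\|^2\right)\right)$ bound.
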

We prove \eqref{proof_techinique_2} in Lemma B.5-B.7 by the fact that Algorithm \ref{FW} performs an unclipped update if $t\in \overline{\mathcal{S}}$. Since the bound \eqref{proof_techinique_2} is small in term of $\|\nabla F(x_t)\|$ if $\beta$ and $\nu$ are close to 1, we convert $\|m_t\|$ to $\|\nabla F(x_t)\|$ by proving that $\sum_{t \in \overline{\mathcal{S}}} \|m_t\| = \Omega \left( \sum_{t \in \overline{\mathcal{S}}} \|\nabla F(x_t)\| -  \gamma(\rho T_{\mathcal{S}} + \sum_{t \in \mathcal{S}}\|\nabla F(x_t)\| )\right)$ in Lemma B.8. The term related to $\mathcal{S}$ can all be offset by the terms in Lemma \ref{proof_techinique_1}, and the term related to $\overline{\mathcal{S}}$ combined with $\eta (1-\nu\beta)\|\nabla F(x_t)\|^2$ can be shown to ensure a descent amount of $\Omega \left(\epsilon\eta \|\nabla F(x_t)\| - \epsilon^2\eta\right)$, which is $\Omega \left( \epsilon^2\eta\right)$ as long as $\|\nabla F(x_t)\|\geq 2\epsilon$.

Finally, by combining the above two cases we obtain the conclusion in Theorem \ref{theorem3}.
 
\textbf{Proof sketch of Theorem \ref{theorem4}.} In the stochastic setting, it requires a different treatment to deal with the noise. We define the \textit{true momentum} $\tilde{m}$ recursively by
\begin{equation}
\label{proof_techinique_3}
    \tilde m_{t+1}=\beta \tilde m_t+(1-\beta) \nabla F(x_t)
\end{equation}
where $\tilde m_0=m_0$, and analyze the descent property of the following sequence $\{G(x_t,\tilde m_t)\}_{t=0}^{T}$. We also consider two cases:  $\max(5\|\nabla F(x_t)\|/4,\|\tilde m_t\|,\|m_{t+1}\|)\ge \rho$ and $\max(5\|\nabla F(x_t)\|/4,\|\tilde m_t\|,\|m_{t+1}\|)< \rho$. We split $m_t$ into $\tilde m_t$ and $m_t-\tilde m_t$ such that the latter term is merely composed of noises $\nabla f(x_{\tau},\xi_{\tau})-\nabla F(x_{\tau})$ ($\tau<t$). While most of the procedure (Lemmas B.10-B.14) parallels the deterministic setting, there are two additional challenges due to the presence of noise:
\begin{itemize}
    \item \textit{Firstly}, since the gradients are not exact, the stochastic gradient we have access to is not guaranteed to be small even for the case of $t\in\overline{\mathcal S}$. Fortunately, the choice of parameters in Theorem \ref{theorem4} ($\rho = 5\sigma$) settles such difficulty.
    \item \textit{Secondly}, we need to deal with the noise in momentum, i.e. $m_t - \tilde m_t$. In particular, we use a recursive argument to obtain a good bound of $ \mathbb{E} \left\langle \nabla F(x_t),m_{t+1}-\tilde m_{t+1}\right\rangle$.
\end{itemize}
Finally, by choosing proper $\eta$ and $\gamma$, we can obtain Theorem \ref{theorem4}.

\subsection{Lower Bounds and Discussions}
\label{lower_bound_discussion}
Theorem \ref{theorem3} and \ref{theorem4} provide \textit{upper bounds} for the complexity of Algorithm \ref{FW}. Now we compare these results with existing lower bounds and discuss the tightness of our results.

\noindent{\bf Deterministic Setting.} \citet{carmon2019lower} have shown that there exists an $L$-smooth function $F$ such that any (possibly randomized) algorithm requires at least $\Omega \left( \Delta L\epsilon^{-2} \right)$ queries to gradient to ensure finding a point $x$ such that $\|\nabla F(x)\|\leq \epsilon$. Since Assumption \ref{ass:2} is weaker than $L$-smoothness ($L_0\leq L$), we have that  the lower bound for $(L_0,L_1)$-smooth functions is $\Omega \left( \Delta L_0\epsilon^{-2} \right)$.  From Theorem \ref{theorem3}, Algorithm \ref{FW} is \emph{optimal} since it can achieve the lower bound when ignoring numerical constants.

\noindent{\bf Stochastic Setting.} 
From the example constructed in \citet{drori2019complexity},  we have that  for any \textit{SGD} method with arbitrary (possibly adaptive) step sizes and aggregation schemes\footnote{See details in Appendix \ref{appendix_lower_bound}.},  the complexity lower bound  is exactly  $\Omega \left( \Delta L_0 \sigma^2 \epsilon^{-4} \right)$ for $(L_0,L_1)$-smooth functions that have   $\sigma$-bounded gradient noises.  Therefore Theorem \ref{theorem4} indicates that  clipped SGD \emph{matches} the lower bound.

One may ask : what is the lower bound for general stochastic gradient-based algorithms? In fact, from  the  example  in \citet{arjevani2019lower}, we have that
any algorithm needs $\Omega \left( \Delta L_0 \sigma^2_1 \epsilon^{-4}\right)$ stochastic gradient queries to find an $\epsilon$-approximate stationary point for a  hard $(L_0,L_1)$-smooth function whose  gradient noise has a $\sigma^2_1$-bounded variance.  It is our conjecture that the lower bound  for optimizing $(L_0,L_1)$-smooth functions that have  $\sigma$-bounded gradient noises  is also   $\mathcal{O} \left( \Delta L_0 \sigma^2 \epsilon^{-4} \right)$.  We leave  the study as a future work. 

\begin{figure}
  \centering
  \subfigure[CIFAR-10]{
  \label{fig_cifar}
  \includegraphics[width=0.31\textwidth]{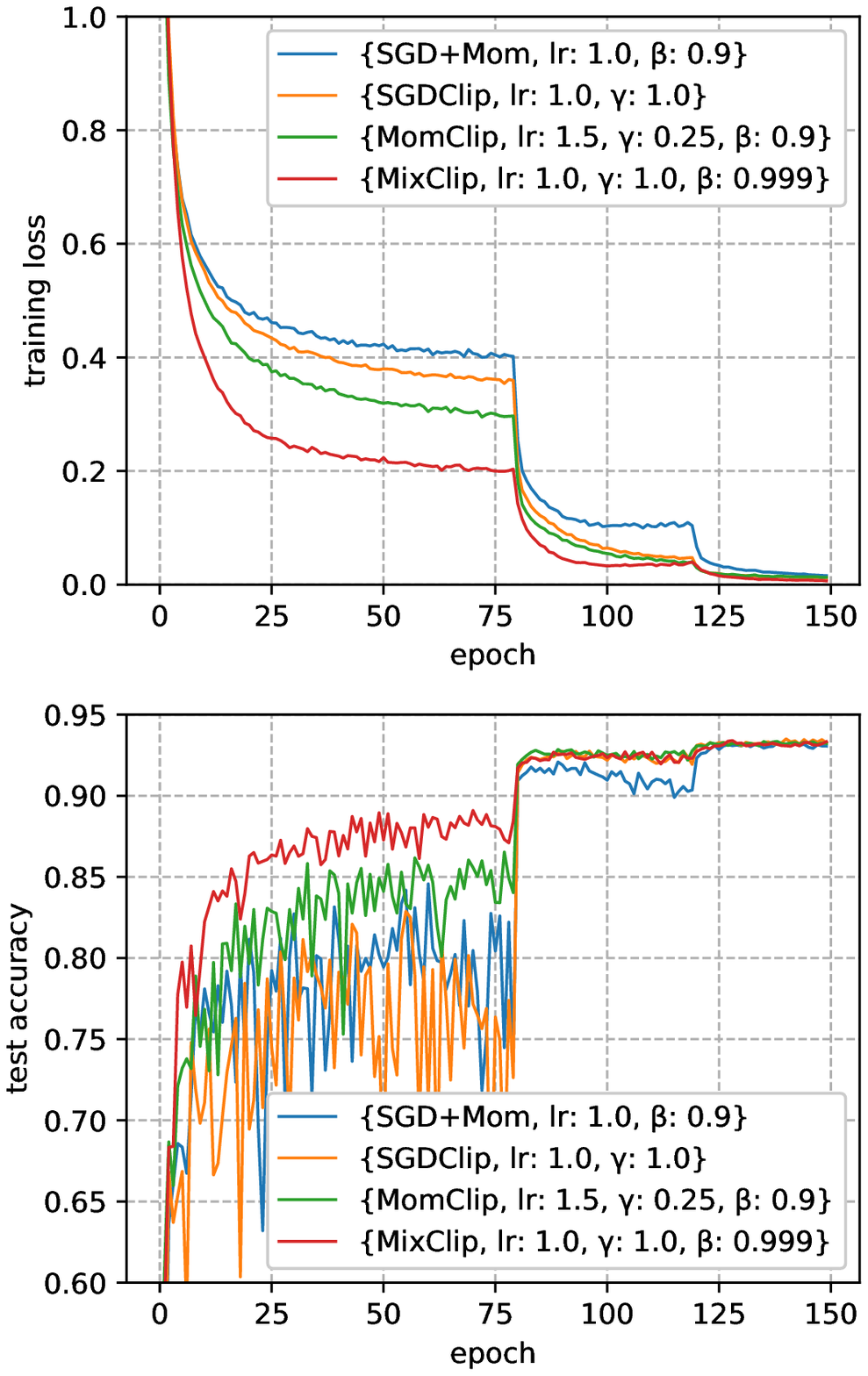}
  }
  \subfigure[ImageNet]{
  \label{fig_imagenet}
  \includegraphics[width=0.31\textwidth]{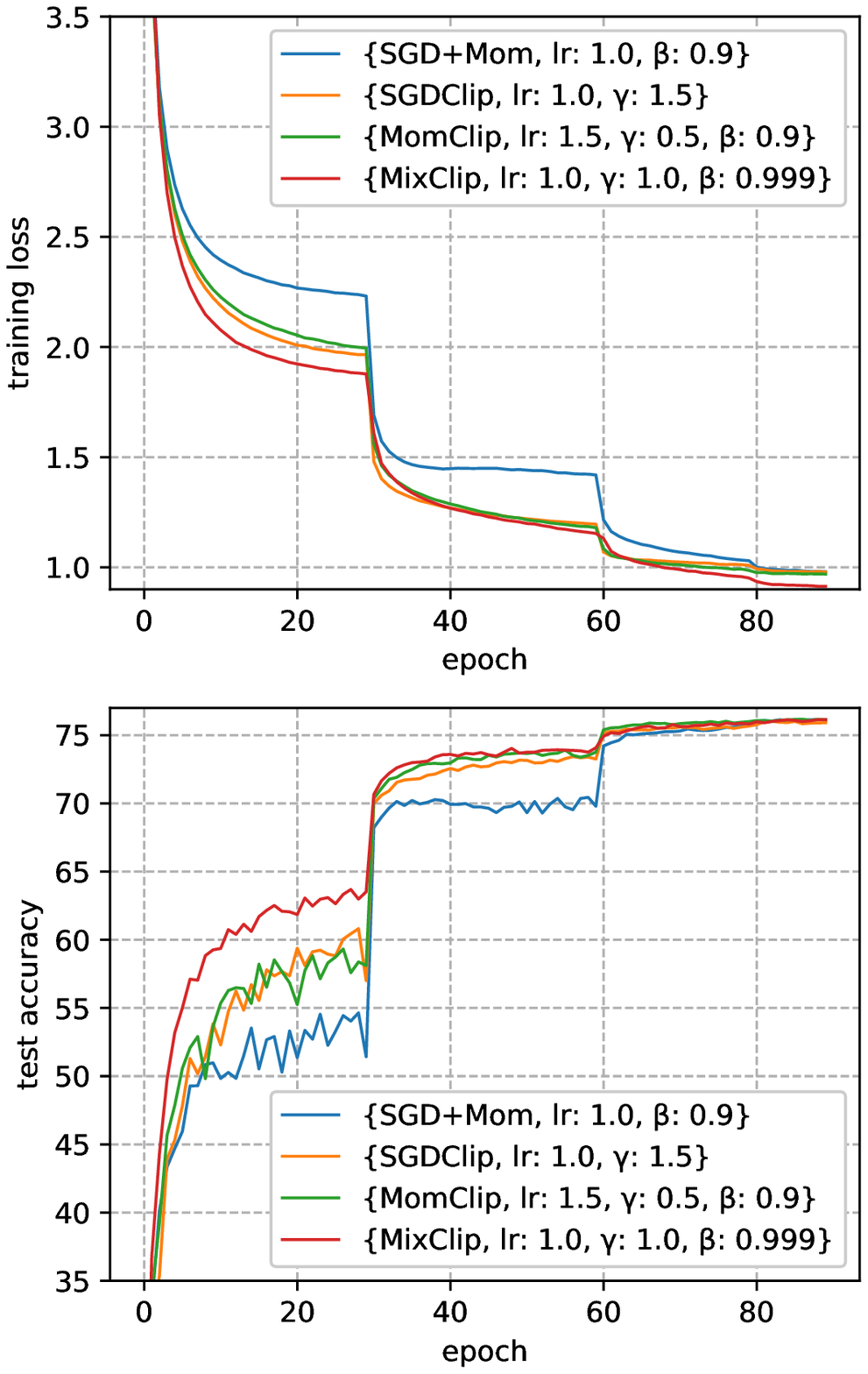}
  }
  \subfigure[PTB]{
  \label{fig_lstm}
  \includegraphics[width=0.31\textwidth]{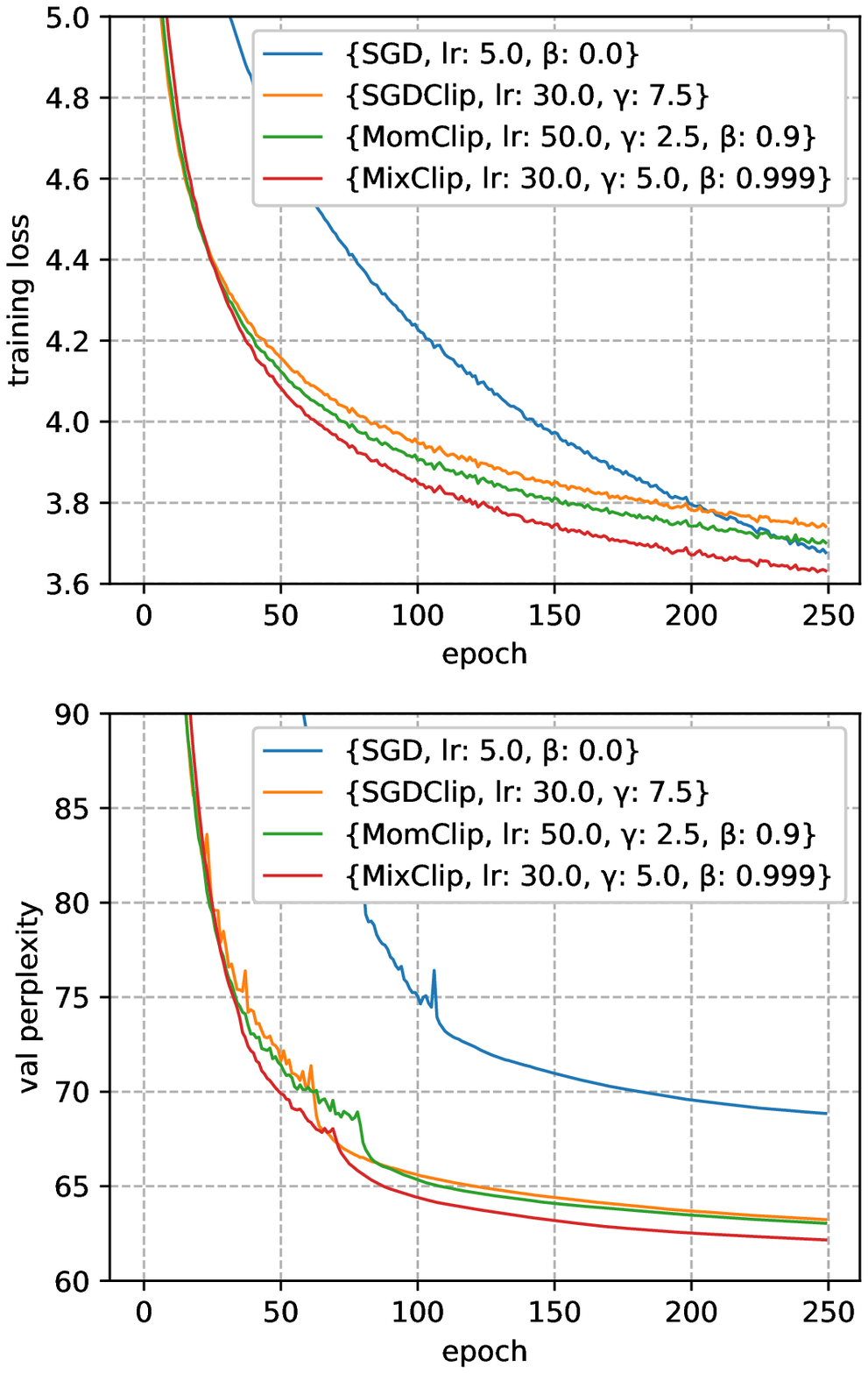}
  }
  \caption{Training loss curve and test accuracy/perplexity curve on CIFAR-10, ImageNet and PTB datasets. All clipping algorithms outperform SGD. Mixed clipping has the best training speed on these three datasets.}
  \label{fig_real_experiments}
\end{figure}

\section{Experiments}
\label{section_experiments}
We conduct extensive experiments and find the clipping algorithms indeed consistently outperform their unclipped counterpart. We present experimental results on three deep learning benchmarks: CIFAR-10 classification using ResNet-32, Imagenet classification using ResNet-50 and language modeling on Penn Treebank (PTB) dataset using AWD-LSTM. We put all the experimental details in the Appendix \ref{appendix_experiment}. Our code is available at \url{https://github.com/zbh2047/clipping-algorithms}. \par

\textbf{CIFAR-10 classification with ResNet.}
We train the standard ResNet-32 \citep{he2016deep} architecture on CIFAR-10. We use SGD with momentum for the baseline algorithm with a decaying learning rate schedule, which is the standard choice to train the ResNet architecture. We set learning rate $\eta=1.0$, momentum $\beta=0.9$ and minibatch size 128, following the common practice. For all the clipping algorithms, we choose the best $\eta$ and $\gamma$ based on a course grid search, while keeping other hyper-parameters and training strategy the same as SGD+momentum. We simply set the hyper-parameters $\nu=0.7$ and $\beta=0.999$ in mixed clipping, as suggested in \citet{ma2018quasi} (for its unclipped counterpart QHM). We run 5 times for each algorithm using different random seeds to make the results more reliable.\par
Figures \ref{fig_cifar} demonstrates the results. It can be seen that all the algorithms achieve a test accuracy more than 93\% on CIFAR-10. Note that all clipping algorithms converge faster than SGD+momentum. Particularly, the mixed clipping (Algorithm \ref{FW}) outperforms SGD+momentum by a large margin in term of training speed. As a result, one can possibly adopt a more aggressive learning rate decaying schedule to reduce training time considerably.\par

\textbf{ImagNet classification with ResNet.} We train the standard ResNet-50 \citep{he2016deep} architecture on ImageNet. For the baseline algorithm, we choose SGD with learning rate $lr=1.0$ and momentum $\beta=0.9$, following \citet{goyal2017accurate}. We use batch size 256 on 4 GPUs.\par
Figure \ref{fig_imagenet} plot the training loss curve and validation accuracy curve on ImageNet. All the algorithms reach a validation accuracy of about 76\%. However, all the clipping algorithms train faster than the baseline SGD. Mixed clipping performs the best among the four algorithms.

\textbf{Language modeling with LSTM.} We train the state-of-the-art AWD-LSTM \citep{merity2017regularizing} on Penn Treebank (PTB) dataset \citep{mikolov2010recurrent}. We first follow the training strategy in \citet{merity2017regularizing}, where they use averaged SGD without momentum with learning rate $\eta=30$ and clipping parameter $\gamma=7.5$. Since our purpose is to compare different algorithms rather than to achieve state-of-the-art results, we only train AWD-LSTM for 250 epochs. We then evaluate other algorithms including standard SGD without clipping, momentum clipping, and mixed clipping. We choose the best $\eta$ and $\gamma$ (using validation perplexity criterion) based on a course grid search. Results are shown in Figure \ref{fig_lstm}.\par
Figure \ref{fig_lstm} clearly shows all clipping methods  converge much faster than SGD without clipping, and are much better in term of validation perplexity. This is consistent with our theory, in that the vanilla SGD must use a very small learning rate to guarantee convergence \citep{zhang2019gradient}, which will be slow and be harmful to generalization on validation set according to previous works \citep{huang2017snapshot,kleinberg2018alternative} . Therefore clipping technique is crucial in LSTM models. We can also find that the training and test curve of mixed clipping is much better than both gradient clipping and momentum clipping. The mixed clipping improves validation perplexity for more than 1 point compared to clipped SGD after 250 epochs.

\textbf{Other experiments.} We also conduct experiments to compare clipping algorithms with Adam, and to directly compare our work with previous results \citep{zhang2019gradient} under the same setting. See Appendix \ref{appendix_experiment_other} for details. Finally, we construct a provably $(L_0,L_1)$-smooth optimization problem using MNIST dataset. We then run experiments in both deterministic setting and stochastic setting. The results are shown in Appendix \ref{appendix_experiment_mnist}.

\section{Conclusion}
\label{conclusion}
This paper proposes a detailed study for clipping methods under a general framework. In particular, we explore the possibility of combining clipping with other popular techniques, e.g. momentum acceleration, in deep learning. We provide a general and tight analysis for the framework, showing the efficiency of clipping methods in optimizing a class of non-convex and non-smooth (in traditional sense) functions. Experiments confirm that these methods have superior performance. We hope that our work affords more understandings  on the clipping technique and $(L_0, L_1)$ smooth functions.

There are still many open questions that have not yet been answered. Firstly, as discussed in Section \ref{lower_bound_discussion}, we are not aware of any lower bounds for general first-order methods that can be applied our setting. Thus, it is interesting to explore such lower bound, or to relax Assumption \ref{BN} to the more general bounded variance assumption. Secondly, although we have shown the superiority of clipping-based methods, we do not provide theoretical explanation why some clipping schemes are better than others as observed in experiments. We believe that this can only be done by exploring new and better smoothness assumptions. Thirdly, the empirical superiority of other adaptive methods ( e.g. AdaGrad \citep{duchi2011adaptive}, Adam \citep{kingma2014adam} ) have not been justified from a theoretical point of view. We hope that our analysis is helpful for the analysis of these methods. Finally, we are looking forward to seeing better optimization algorithms with better convergence properties in future work.

\section*{Acknowledgement}
This work was supported by National Key R\&D Program of China (2018YFB1402600), Key-Area Research and Development Program of Guangdong Province (No. 2019B121204008)] and Beijing Academy of Artificial Intelligence.

\bibliographystyle{plainnat}
\bibliography{reference}

\medskip
\newpage

\titleformat{\section}{\normalfont\large\bfseries}{Appendix \thesection}{1em}{}
\newenvironment{proof}{\paragraph{Proof:}}{\hfill$\square$}
\renewcommand{\thesection}{\Alph{section}}
\setcounter{section}{0}

\section{Properties of $(L_0,L_1)$-smooth functions}
\label{appendix_property}
In this section, we prove some important properties of $(L_0,L_1)$-smooth functions. These properties will be frequently used in subsequent sections.

We first present a basic lemma without proof.
\begin{lemma}
(Gronwall's inequality) \citep{gronwall1919note}  Let $I=[a,b]$ denote an interval of the real line with $a < b$. Let $f,g,h$ be continuous real-valued functions defined on $I$. Assume $g$ is non-decreasing, $h$ is non-negative, and the negative part of $g$ is integrable on every closed and bounded subinterval of $I$.If 
\begin{equation}
    f(t) \leq g(t)+ \int_{a}^{t} h(s) f(s)\mathrm{d} s, \quad \forall t \in I,
\end{equation}
then
\begin{equation}
    f(t) \leq g(t) \exp \left(\int_{a}^{t} h(s) \mathrm{d} s\right), \quad \forall t \in I.
\end{equation}
\end{lemma}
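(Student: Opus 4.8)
The plan is to linearize the integral inequality by introducing an auxiliary primitive and then eliminating the linear term with an integrating factor, which is the standard route for Gronwall-type estimates. First I would define $u(t) := \int_a^t h(s) f(s)\,\mathrm ds$, so that the hypothesis reads $f(t) \le g(t) + u(t)$. Since $f$ and $h$ are continuous on the compact interval $I$, the fundamental theorem of calculus shows that $u$ is continuously differentiable with $u'(t) = h(t) f(t)$, and clearly $u(a)=0$. Combining $u'(t) = h(t) f(t)$ with $f(t)\le g(t)+u(t)$ and the non-negativity of $h$ then yields the first-order differential inequality $u'(t) \le h(t)\,u(t) + h(t)\,g(t)$.

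The key step is to remove the $h(t)u(t)$ term by an integrating factor. Set $v(t) := \exp\!\big(-\int_a^t h(r)\,\mathrm dr\big)$, which is strictly positive and satisfies $v'(t) = -h(t)v(t)$. Multiplying the differential inequality by $v(t)$ and recognizing the left-hand side as a total derivative gives $\frac{\mathrm d}{\mathrm dt}\big[u(t) v(t)\big] \le h(t) g(t) v(t)$. Integrating from $a$ to $t$ and using $u(a)=0$ produces $u(t) v(t) \le \int_a^t h(s) g(s) v(s)\,\mathrm ds$.

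This is where the monotonicity of $g$ enters. For $s\le t$ we have $g(s)\le g(t)$, so I would bound the right-hand integral by $g(t)\int_a^t h(s) v(s)\,\mathrm ds$. The remaining integral evaluates in closed form because $h(s) v(s) = -v'(s)$, hence $\int_a^t h(s) v(s)\,\mathrm ds = v(a)-v(t) = 1 - v(t)$. Dividing through by $v(t)>0$ gives $u(t) \le g(t)\big(v(t)^{-1} - 1\big) = g(t)\big(\exp(\int_a^t h)-1\big)$, and substituting back into $f(t)\le g(t)+u(t)$ collapses to the claimed bound $f(t) \le g(t)\exp\!\big(\int_a^t h(s)\,\mathrm ds\big)$.

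The main obstacle here is not the computation but the regularity bookkeeping: I must justify that $u$ is differentiable and that every integral in sight is finite. Continuity of $f,g,h$ on the compact set $I$ makes all of this routine, and it is precisely the stated integrability condition on the negative part of $g$ that secures the same conclusion in the more general non-continuous formulation, so the argument transfers verbatim. As a backup I would keep the Picard-style iteration in reserve: substituting the inequality into itself $n$ times and using the monotonicity of $g$ to extract the partial sums of $\exp\!\big(\int_a^t h\big)$, with the remainder vanishing because $\frac{1}{n!}\big(\int_a^t h\big)^n\to 0$; this recovers the same estimate without explicitly invoking the integrating factor.
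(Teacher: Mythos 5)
Your proof is correct. Note, however, that the paper itself does not prove this lemma at all: it is stated explicitly ``without proof'' and attributed to the citation \citep{gronwall1919note}, so there is no in-paper argument to compare against; your integrating-factor derivation supplies exactly what the paper delegates to the reference. The argument is the standard one and every step checks out: the auxiliary primitive $u(t)=\int_a^t h(s)f(s)\,\mathrm{d}s$ is $C^1$ by continuity of $h$ and $f$, the differential inequality $u'\le hu+hg$ follows because $h\ge 0$, and the two places where signs matter are handled soundly --- replacing $g(s)$ by $g(t)$ inside the integral only multiplies the inequality $g(s)\le g(t)$ by the nonnegative factor $h(s)v(s)$, and the final division is by $v(t)>0$, so no positivity assumption on $g$ is needed. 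The closed-form evaluation $\int_a^t h(s)v(s)\,\mathrm{d}s = 1-v(t)$ is what collapses the bound to $f(t)\le g(t)\exp\bigl(\int_a^t h(s)\,\mathrm{d}s\bigr)$ rather than leaving an additive error term, which is the version the paper actually uses (e.g.\ in Lemma \ref{BasicProperty}). Your observation that the hypothesis on the negative part of $g$ is vacuous here (continuity on a compact interval already gives integrability) is also accurate, and the Picard-iteration fallback you sketch is a legitimate alternative route to the same constant.
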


The following result, Lemma \ref{BasicProperty} , is a generalization of Lemma 9 in \citet{zhang2019gradient}.

\begin{lemma}
\label{BasicProperty}
Let $F$ be $(L_0,L_1)$-smooth, and $c>0$ be a constant. Given $x$, for any $x^{+}$ such that $\left\|x^{+}-x\right\| \leq c / L_{1}$, we have $\left\|\nabla f\left(x^{+}\right)\right\| \leq e^{c}\left(\frac{c L_{0}}{L_{1}}+\|\nabla F(x)\|\right)$.\par
\end{lemma}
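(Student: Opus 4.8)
The plan is to reduce the bound to a one-dimensional differential-inequality argument along the straight segment connecting $x$ and $x^+$, and then invoke Gronwall's inequality (the lemma stated just above). First I would introduce the displacement $u = x^+ - x$, which satisfies $\|u\| \le c/L_1$, and parametrize the segment by $\gamma(t) = x + tu$ for $t \in [0,1]$. Setting $g(t) = \|\nabla F(\gamma(t))\|$, the quantity to be bounded is $g(1) = \|\nabla F(x^+)\|$, while $g(0) = \|\nabla F(x)\|$.

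Next, using the chain rule together with the fundamental theorem of calculus, I would write $\nabla F(\gamma(t)) = \nabla F(x) + \int_0^t \nabla^2 F(\gamma(s))\, u \, \mathrm{d}s$. Taking norms and applying the $(L_0,L_1)$-smoothness bound $\|\nabla^2 F(\gamma(s))\| \le L_0 + L_1 g(s)$ gives
\begin{equation}
g(t) \le \|\nabla F(x)\| + \int_0^t (L_0 + L_1 g(s))\|u\|\, \mathrm{d}s = \Big(\|\nabla F(x)\| + L_0\|u\|\, t\Big) + L_1\|u\| \int_0^t g(s)\, \mathrm{d}s.
\end{equation}
This is exactly the hypothesis of Gronwall's inequality on $I=[0,1]$, with the non-decreasing majorant $G(t) = \|\nabla F(x)\| + L_0\|u\|\,t$ and the non-negative kernel $h(s) = L_1\|u\|$. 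Applying the lemma yields $g(t) \le \big(\|\nabla F(x)\| + L_0\|u\|\,t\big)\exp(L_1\|u\|\,t)$, and evaluating at $t=1$ gives $g(1) \le \big(\|\nabla F(x)\| + L_0\|u\|\big)\exp(L_1\|u\|)$. Finally, substituting $\|u\| \le c/L_1$, so that $L_1\|u\| \le c$ and $L_0\|u\| \le cL_0/L_1$, produces the claimed bound $\|\nabla F(x^+)\| \le e^{c}\big(cL_0/L_1 + \|\nabla F(x)\|\big)$.

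The argument is essentially mechanical once the right framing is found, so the only real care is in casting the integral inequality into the precise form Gronwall requires. In particular, the main subtlety is recognizing that the additive $L_0\|u\|\,t$ term should be absorbed into the non-decreasing function $G(t)$ rather than treated as part of the kernel $h$; this is what makes the majorant monotone and the kernel non-negative, matching the lemma's hypotheses exactly.

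A secondary technical point is justifying the integral representation for $\nabla F(\gamma(t))$: this relies on $F$ being twice differentiable, so that $s \mapsto \nabla F(\gamma(s))$ is $C^1$ and $g$ is continuous, which is precisely what Gronwall demands. Under the twice-differentiable assumption the continuous argument above suffices; under the weaker non-differentiable relaxation of Remark \ref{remark_l0l1smooth} one would instead discretize the segment into steps of length at most $1/K_1$ and iterate the finite-difference bound, but that refinement is not needed here.
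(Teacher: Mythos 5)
Your proposal is correct and follows essentially the same route as the paper's proof: parametrize the segment, write the gradient as an integral of the Hessian, bound via $(L_0,L_1)$-smoothness, and apply Gronwall's inequality with the affine majorant and constant kernel. The only cosmetic difference is that you keep $\|u\|$ symbolic until the final substitution $\|u\|\le c/L_1$, whereas the paper substitutes this bound before invoking Gronwall; both are valid and yield the identical estimate.
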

\begin{proof}
Let $\gamma(t)$ be defined as $\gamma(t)=t(x^{+}-x)+x, t \in[0,1]$, then we have
$$
\nabla F(\gamma(t))=\int_{0}^{t} \nabla^2 F(\gamma(\tau))\left(x^{+}-x\right) \mathrm{d} \tau+\nabla F(\gamma(0))
$$
We then bound the norm of $\nabla F(\gamma(t))$:
\begin{align}
\|\nabla F(\gamma(t))\| & \leq\int_{0}^{t} \|\nabla^2 F(\gamma(\tau))\left(x^{+}-x\right)\| \mathrm{d}\tau+\|\nabla F(\gamma(0))\| \\
& \leq\left\|x^{+}-x\right\| \int_{0}^{t}\left\|\nabla^{2} F(\gamma(\tau))\right\| \mathrm{d} \tau+\|\nabla F(x)\| \\
& \leq \frac{c}{L_{1}} \int_{0}^{t}\left(L_{0}+L_{1}\|\nabla F(\gamma(\tau))\|\right) \mathrm{d} \tau+\|\nabla F(x)\|
\end{align}
The first inequality uses the triangular inequality of 2-norm; The second inequality uses the property of spectral norm; The third inequality uses the definition of $(L_0,L_1)$-smoothness. By applying the Gronwall’s inequality we get
\begin{equation}
    \label{eq:first_lemma}
    \|\nabla F(\gamma(t))\| \leq \left(\frac{L_{0}}{L_{1}}ct+\|\nabla F(x)\|\right)\exp (ct)
\end{equation}
The Lemma follows by setting $t=1$.
\end{proof}

Now we are able to prove a \textit{descent inequality}, which is similar to the descent inequality for $L$-smooth functions. In fact, if a function $F$ is $L$-smooth, it is well-known that for any $x,y$, we have
\begin{equation}
    F(y) \leq F(x) + \left\langle \nabla F(x),y-x \right\rangle + \dfrac{L}{2} \|y-x\|^2 \notag
\end{equation}
\begin{lemma}
\label{DesIneq}
(Descent Inequality) Let $F$ be $(L_0,L_1)$-smooth, and $c>0$ be a constant. For any $x_k$ and $x_{k+1}$, as long as $\|x_k - x_{k+1}\| \le c/L_1$, we have
\begin{equation}
    F\left(x_{k+1}\right) \leq F\left(x_{k}\right)+\left\langle\nabla F\left(x_{k}\right), x_{k+1}-x_{k}\right\rangle+\frac{A L_{0}+B L_{1}\left\|\nabla F\left(x_{k}\right)\right\|}{2}\left\|x_{k+1}-x_{k}\right\|^{2}
\end{equation}
where $A=1+e^c-\frac {e^c-1} c,B=\frac {e^c-1} c$.\par
\end{lemma}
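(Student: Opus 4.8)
The plan is to prove the descent inequality via a Taylor-type expansion along the segment connecting $x_k$ and $x_{k+1}$, using Lemma \ref{BasicProperty} to control the Hessian norm uniformly along that segment. Concretely, I would parametrize the path by $\gamma(t)=x_k+t(x_{k+1}-x_k)$ for $t\in[0,1]$, and write the second-order integral remainder form of Taylor's theorem:
\begin{equation}
    F(x_{k+1})=F(x_k)+\langle\nabla F(x_k),x_{k+1}-x_k\rangle+\int_0^1(1-t)\,(x_{k+1}-x_k)^\top\nabla^2F(\gamma(t))\,(x_{k+1}-x_k)\,\mathrm dt. \notag
\end{equation}
The task then reduces to bounding the quadratic-form remainder by $\frac{AL_0+BL_1\|\nabla F(x_k)\|}{2}\|x_{k+1}-x_k\|^2$.

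To bound the integrand, I would first apply the spectral-norm/Cauchy–Schwarz estimate $(x_{k+1}-x_k)^\top\nabla^2F(\gamma(t))(x_{k+1}-x_k)\le\|\nabla^2F(\gamma(t))\|\,\|x_{k+1}-x_k\|^2$, and then invoke $(L_0,L_1)$-smoothness to get $\|\nabla^2F(\gamma(t))\|\le L_0+L_1\|\nabla F(\gamma(t))\|$. The crucial step is to control $\|\nabla F(\gamma(t))\|$ for each intermediate point $\gamma(t)$. Since $\|\gamma(t)-x_k\|=t\|x_{k+1}-x_k\|\le c/L_1$, Lemma \ref{BasicProperty} applies at each $t$; in fact the sharper intermediate estimate \eqref{eq:first_lemma} from its proof gives $\|\nabla F(\gamma(t))\|\le\bigl(\frac{L_0}{L_1}ct+\|\nabla F(x_k)\|\bigr)e^{ct}$. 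Substituting this into the Hessian bound yields
\begin{equation}
    \|\nabla^2F(\gamma(t))\|\le L_0+L_1 e^{ct}\|\nabla F(x_k)\|+L_0\,c\,t\,e^{ct}. \notag
\end{equation}

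The remaining work is purely a matter of evaluating the weighted integral $\int_0^1(1-t)\,\|\nabla^2F(\gamma(t))\|\,\mathrm dt$ and extracting the constants $A$ and $B$. For the $\|\nabla F(x_k)\|$ coefficient I would compute $L_1\int_0^1(1-t)e^{ct}\,\mathrm dt$ and aim to match it to $\frac{B}{2}L_1$ with $B=\frac{e^c-1}{c}$; for the $L_0$ coefficient I would collect the constant term $\int_0^1(1-t)\,\mathrm dt=\tfrac12$ together with $c\int_0^1(1-t)t\,e^{ct}\,\mathrm dt$ and match it to $\frac{A}{2}L_0$ with $A=1+e^c-\frac{e^c-1}{c}$. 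I expect the main obstacle to be precisely this constant-chasing: the integrals $\int_0^1(1-t)e^{ct}\,\mathrm dt$ and $\int_0^1 t(1-t)e^{ct}\,\mathrm dt$ must be evaluated by integration by parts and then simplified so that the stated closed forms for $A$ and $B$ emerge cleanly, which requires careful bookkeeping of the exponential terms rather than any conceptual difficulty.
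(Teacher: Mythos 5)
Your proposal follows essentially the same route as the paper's proof: Taylor's theorem with integral remainder along $\gamma(t)=x_k+t(x_{k+1}-x_k)$, the Cauchy--Schwarz/spectral-norm estimate, the Gronwall bound \eqref{eq:first_lemma} from the proof of Lemma \ref{BasicProperty} to control $\|\nabla F(\gamma(t))\|$, and then integration in $t$. The one caveat is in your final step: because you keep the weight $(1-t)$, the stated constants will \emph{not} ``emerge cleanly'' as equalities --- you will find $\int_0^1(1-t)e^{ct}\,\mathrm dt=\frac{e^c-1-c}{c^2}$ and $\tfrac12+c\int_0^1 t(1-t)e^{ct}\,\mathrm dt=\tfrac12+\frac{e^c+1}{c}-\frac{2(e^c-1)}{c^2}$, which are \emph{strictly smaller} than $\frac{B}{2}=\frac{e^c-1}{2c}$ and $\frac{A}{2}$ for $c>0$, so you must finish by checking the two elementary inequalities $e^c(2-c)\le 2+c$ and $e^c(c^2-3c+4)\ge c+4$ (both hold for all $c\ge 0$, with equality at $c=0$), after which the lemma follows with room to spare. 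Alternatively, since your upper bound on $\|\nabla^2F(\gamma(t))\|$ is nondecreasing in $t$ while $(1-t)$ is decreasing, Chebyshev's integral inequality lets you replace $(1-t)$ by its mean $\tfrac12$; the unweighted integral then evaluates exactly to $AL_0+BL_1\|\nabla F(x_k)\|$, which is in effect what the paper does (its step \eqref{eq:next2} is written as an equality but is really this inequality).
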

\begin{proof}
Let $\gamma(t)$ be defined as $\gamma(t)=t(x_{k+1}-x_k)+x_k, t \in[0,1]$. The following derivation uses Taylor's theorem (in \eqref{eq:taylor}), then uses triangular inequality, Cauchy-Schwarz inequality and the property of spectral norm (in \eqref{eq:next_derivation}):
\begin{align}
    \label{eq:taylor}
    F\left(x_{k+1}\right) &\leq F\left(x_{k}\right)+\left\langle\nabla F\left(x_{k}\right), x_{k+1}-x_{k}\right\rangle + \int_{0}^{1} (x_{k+1}-x_k)^T\nabla^{2} F(\gamma(t))(x_{k+1}-\gamma(t)) \mathrm{d} t\\
    \label{eq:next_derivation}
    &\leq F\left(x_{k}\right)+\left\langle\nabla F\left(x_{k}\right), x_{k+1}-x_{k}\right\rangle +  \int_{0}^{1} \|(x_{k+1}-x_k)\|\|\nabla^{2} F(\gamma(t))\|\|x_{k+1}-\gamma(t)\| \mathrm{d} t\\
    \label{eq:next2}
    &=F\left(x_{k}\right)+\left\langle\nabla F\left(x_{k}\right), x_{k+1}-x_{k}\right\rangle + \frac{\left\|x_{k+1}-x_{k}\right\|^{2}}{2} \int_{0}^{1}\left\|\nabla^{2} F(\gamma(t))\right\| \mathrm{d} t
\end{align}
Then we use $(L_0,L_1)$-smoothness and \eqref{eq:first_lemma} to bound $\left\|\nabla^{2} F(\gamma(t))\right\|$:
\begin{equation}
\begin{aligned}
    \left\|\nabla^{2} F(\gamma(t))\right\| &\le L_0 + L_1 \left\|\nabla F(\gamma(t))\right\|\\
    &\le L_0 + L_1  \left(\frac{L_{0}}{L_{1}}ct+\|\nabla F(x_k)\|\right)\exp (ct)
\end{aligned}
\end{equation}
Taking integration we get
\begin{equation}
    \label{eq:final}
    \int_0^1 \left\|\nabla^{2} F(\gamma(t))\right\| \mathrm d t\le L_0\left(1+e^c-\frac {e^c-1} c\right) + \frac {e^c-1} c L_1 \|\nabla F(x_k)\|
\end{equation}
Substituting \eqref{eq:final} into \eqref{eq:next2} concludes the proof.
\end{proof}

\begin{corollary}
\label{DesCor}
Let $F$ be $(L_0,L_1)$-smooth, and $c>0$ be a constant. For any $x_k$ and $x_{k+1}$, as long as $\|x_k - x_{k+1}\| \le c/L_1$, we have
\begin{equation}
    \|\nabla F(x_{k+1})-\nabla F(x_k)\|\le ({A L_{0}+B L_{1}\left\|\nabla F\left(x_{k}\right)\right\|})\left\|x_{k+1}-x_{k}\right\|
\end{equation}
where $A=1+e^c-\frac {e^c-1} c,B=\frac {e^c-1} c$.
\end{corollary}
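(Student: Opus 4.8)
The plan is to recycle almost verbatim the machinery already built for the descent inequality (Lemma \ref{DesIneq}), replacing the scalar Taylor expansion of $F$ by the vector-valued fundamental theorem of calculus applied to $\nabla F$. First I would parametrize the segment joining the two points by $\gamma(t)=x_k+t(x_{k+1}-x_k)$ for $t\in[0,1]$, and write the gradient increment as the path integral of the Hessian against the displacement, namely $\nabla F(x_{k+1})-\nabla F(x_k)=\int_0^1 \nabla^2 F(\gamma(t))(x_{k+1}-x_k)\,\mathrm dt$. This is the natural matrix-valued analogue of the Taylor step \eqref{eq:taylor} used in the lemma.

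The next step is to take norms, pull the constant displacement out of the integral, and apply the triangle inequality together with the sub-multiplicativity of the spectral norm, giving $\|\nabla F(x_{k+1})-\nabla F(x_k)\|\le \|x_{k+1}-x_k\|\int_0^1\|\nabla^2 F(\gamma(t))\|\,\mathrm dt$. At this point the entire problem reduces to estimating the single integral $\int_0^1\|\nabla^2 F(\gamma(t))\|\,\mathrm dt$, but this is exactly the quantity that was already bounded in the course of proving Lemma \ref{DesIneq}: combining $(L_0,L_1)$-smoothness with the gradient-growth bound \eqref{eq:first_lemma} and integrating produces precisely \eqref{eq:final}, i.e. $\int_0^1\|\nabla^2 F(\gamma(t))\|\,\mathrm dt\le A L_0+B L_1\|\nabla F(x_k)\|$ with $A=1+e^c-\tfrac{e^c-1}{c}$ and $B=\tfrac{e^c-1}{c}$. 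Substituting this bound yields the claimed inequality immediately.

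There is essentially no serious obstacle here; the corollary is labelled as such precisely because it shares all its analytic content with the preceding lemma. The only point I would take care to verify is that the hypothesis $\|x_k-x_{k+1}\|\le c/L_1$ guarantees $\|\gamma(t)-x_k\|=t\|x_{k+1}-x_k\|\le c/L_1$ for every $t\in[0,1]$, so that the bound \eqref{eq:first_lemma} from Lemma \ref{BasicProperty} is valid uniformly along the entire path and the integral estimate \eqref{eq:final} may be invoked without modification. The one genuine (if minor) difference from Lemma \ref{DesIneq} is conceptual rather than technical: because we integrate $\nabla^2 F$ against the \emph{full} displacement $x_{k+1}-x_k$ rather than against $x_{k+1}-\gamma(t)$, the factor of $\tfrac12$ appearing in the descent inequality is absent here, which is exactly what one expects for a Lipschitz-type bound on the gradient as opposed to a quadratic upper bound on the function value.
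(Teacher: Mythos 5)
Your proposal is correct and follows essentially the same route as the paper's own proof: write $\nabla F(x_{k+1})-\nabla F(x_k)$ as the path integral of the Hessian along the segment, bound it via the triangle inequality and spectral norm, and invoke the already-established estimate \eqref{eq:final} on $\int_0^1\|\nabla^2 F(\gamma(t))\|\,\mathrm dt$. Your additional check that the hypothesis $\|x_{k+1}-x_k\|\le c/L_1$ keeps the whole segment within the range of validity of \eqref{eq:first_lemma} is a point the paper leaves implicit, so nothing is missing.
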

\begin{proof}
\begin{equation}
\begin{aligned}
\|\nabla F(x_{k})-\nabla F(x_k-1)\|
&=\left\|\int_{0}^{1} \nabla^2 F(t x_{k-1} + (1-t) x_k)(x_k-x_{k-1})\mathrm d t\right\|\\
&\le \int_{0}^{1} \|\nabla^2 F(t x_{k-1} + (1-t x_k)\|\|x_k-x_{k-1}\|\mathrm d t\\
\end{aligned}
\end{equation}
Using \eqref{eq:final} leads to the results.
\end{proof}

Finally we prove a result which provides a way to upper-bound the gradient norm. A similar result for $L$-smooth functions is the following: if $F$ is $L$-smooth, then for any $x$, we have
\begin{equation}
    \|\nabla F(x)\|^2 \leq 2L \left( F(x)- \inf_{y \in \mathbb{R}^d} F(y) \right) \notag
\end{equation}
\begin{lemma}
\label{GradNormBd}
(Bounding the gradient norm) Let $F(x)$ be an $(L_0,L_1)$-smooth function, and $F^*$ be the optimal value. Then for any $x_0$, we have
\begin{equation}
    \min\left(\frac {\|\nabla F(x_0)\|} {L_1}, \frac {\|\nabla F(x_0)\|^2} {L_0}\right)\le 8(F(x_0)-F^*)
\end{equation}
\end{lemma}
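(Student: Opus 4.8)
The plan is to mimic the standard $L$-smooth argument $\|\nabla F(x)\|^2\le 2L(F(x)-F^*)$: take a single gradient-descent step from $x_0$, invoke the descent inequality to bound the resulting decrease of $F$, and close the argument with $F^*\le F(x_1)$. The twist specific to the $(L_0,L_1)$-smooth case is that the descent inequality (Lemma \ref{DesIneq}) is only valid for a \emph{short} step, $\|x_1-x_0\|\le c/L_1$, and its effective curvature constant $\tfrac{AL_0+BL_1\|\nabla F(x_0)\|}{2}$ depends on the gradient magnitude. It is precisely this gradient-dependent curvature that will produce the $\min$ on the left-hand side: the $AL_0$ part yields the $\|\nabla F(x_0)\|^2/L_0$ term and the $BL_1\|\nabla F(x_0)\|$ part yields the $\|\nabla F(x_0)\|/L_1$ term.

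Concretely, I would fix $c=1$, so that $A=2$ and $B=e-1$ in Lemma \ref{DesIneq}, and assume $\|\nabla F(x_0)\|>0$ (otherwise the claim is trivial). Set $x_1=x_0-\eta\nabla F(x_0)$ with the optimal step $\eta=(AL_0+BL_1\|\nabla F(x_0)\|)^{-1}$, which minimizes the descent bound. First I would check feasibility: since $\eta\|\nabla F(x_0)\|=\frac{\|\nabla F(x_0)\|}{AL_0+BL_1\|\nabla F(x_0)\|}<\frac{1}{BL_1}=\frac{1}{(e-1)L_1}<\frac{1}{L_1}=c/L_1$, the step is short enough and Lemma \ref{DesIneq} applies. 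Substituting $x_{k+1}-x_k=-\eta\nabla F(x_0)$ and simplifying the quadratic in $\eta$ gives
$$F(x_1)\le F(x_0)-\frac{\|\nabla F(x_0)\|^2}{2\left(AL_0+BL_1\|\nabla F(x_0)\|\right)}.$$

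Using $F(x_1)\ge F^*$ then yields $F(x_0)-F^*\ge \frac{\|\nabla F(x_0)\|^2}{2(AL_0+BL_1\|\nabla F(x_0)\|)}$. Finally I would apply the elementary inequality $\frac{1}{a+b}\ge\frac12\min(1/a,1/b)$ (from $a+b\le 2\max(a,b)$) with $a=AL_0$ and $b=BL_1\|\nabla F(x_0)\|$, together with $A=2$ and $B=e-1<2$, to turn the single fraction into $\frac18\min\!\left(\frac{\|\nabla F(x_0)\|^2}{L_0},\frac{\|\nabla F(x_0)\|}{L_1}\right)$; tracking these constants gives exactly the factor $8$ claimed.

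The main obstacle is the feasibility check and the choice of $c$. Unlike the $L$-smooth setting, the descent inequality holds only for short steps, so a priori the optimal gradient step could violate $\|x_1-x_0\|\le c/L_1$. The computation above shows that the optimal step is automatically admissible whenever $1/B\le c$, i.e. $\frac{c}{e^c-1}\le c$, which holds exactly for $c\ge\ln 2$; taking $c=1$ both satisfies this and produces clean constants. Once this interplay between the step length and the $c$-dependent constants $A,B$ is resolved, the remainder is routine algebra.
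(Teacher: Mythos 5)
Your proof is correct and follows essentially the same route as the paper's: a single gradient step of length $\frac{\|\nabla F(x_0)\|}{AL_0+BL_1\|\nabla F(x_0)\|}$, the descent inequality of Lemma \ref{DesIneq}, and a split according to which of $AL_0$ and $BL_1\|\nabla F(x_0)\|$ dominates, with $A,B\le 2$ delivering the factor $8$. The only difference is that you fix $c=1$ and verify admissibility of the step directly, whereas the paper defines $c$ self-consistently as a fixed point of $c=\frac{L_1\|\nabla F(x_0)\|}{A(c)L_0+B(c)L_1\|\nabla F(x_0)\|}$ and merely asserts that such a $c\in[0,1)$ exists; your choice sidesteps that unproven existence claim and is the cleaner write-up.
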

\begin{proof}
Define the constant $c=\frac {L_1\|\nabla F(x_0)\|}{AL_0+BL_1 \|\nabla F(x_0)\|}$ and $A=1+e^c-\frac {e^c-1} c,B=\frac {e^c-1} c$. It is easy to see that such $0\le c< 1$ exists. Let $\lambda = \frac 1 {AL_0+BL_1 \|\nabla F(x_0)\|}$ and $x=x_0-\lambda \nabla F(x_0)$. Then $\|x-x_0\|\le c/L_1$. By the descent inequality we have
\begin{equation}
\begin{aligned}
    F^*\le F(x)&\le F(x_0)-\lambda \|\nabla F(x_0)\|^2+\frac {AL_0+BL_1\|\nabla F(x_0)\|} 2 \lambda^2\|\nabla F(x_0)\|^2\\
    &=F(x_0)-\frac 1 2 \lambda \|\nabla F(x_0)\|^2
\end{aligned}
\end{equation}
If $\|\nabla F(x)\|\ge \frac {AL_0} {BL_1}$, then
\begin{equation}
\begin{aligned}
    F(x_0)-F^*\ge \frac{\|\nabla F(x_0)\|} {2\left(\frac {AL_0} {\|\nabla F(x_0)\|}+BL_1\right)} \ge \frac {\|\nabla F(x_0)\|} {4BL_1}\ge\frac {\|\nabla F(x_0)\|} {8L_1}
\end{aligned}
\end{equation}
If $\|\nabla F(x)\|< \frac {AL_0} {BL_1}$, then
\begin{equation}
\begin{aligned}
    F(x_0)-F^*\ge \frac 1{2}\lambda {\|\nabla F(x_0)\|^2}\ge \frac{\|\nabla F(x_0)\|^2} {4AL_0}\ge  \frac {\|\nabla F(x_0)\|^2} {8L_0}
\end{aligned}
\end{equation}
\end{proof}

\subsection{Relaxation of $(L_0,L_1)$-smoothness (Remark 2.3)}
The original definition of $(L_0,L_1)$-smoothness requires the function to be twice-differentiable. Under this definition, $(L_0,L_1)$-smoothness is actually \textit{not} weaker than $L$-smoothness, which only requires the function to be continuous differentiable. In this section we prove that the alternative definition provided in Remark 2.3 is sufficient for all the results in this paper. 

Now, suppose that there exists $K_0,K_1 >0$ such that for all $x, y \in \mathbb{R}^d$, if $\|x-y\| \leq \frac{1}{K_1}$, then
\begin{equation}
\label{new_ass}
   \|\nabla F(x) - \nabla F(y) \| \leq (K_0 + K_1 \|\nabla F(y)\|) \|x-y\| 
\end{equation}
We check that Lemma \ref{BasicProperty} and \ref{DesIneq} still holds under the new assumption (with $L_0,L_1$ replaced by $K_0,K_1$, up to numerical constants)
We immediately obtain from \eqref{new_ass} above that
\begin{equation}
    \|\nabla F(x)\| \leq 2\|\nabla F(y)\| + \frac{K_0}{K_1}
\end{equation}
which is of the same form as Lemma \ref{BasicProperty}. Next, we have
\begin{equation}\begin{aligned}
&\quad F(y) - F(x) - \left\langle y-x, \nabla F(x) \right\rangle \\
&= \int_{0}^{1} \left\langle \nabla F \left(\theta y+(1-\theta )x \right) - \nabla F(x), x-y \right\rangle \text{d}\theta \\
&\leq \int_{0}^{1} \left( K_0\theta\|x-y\|^{2} + K_1\theta \|x-y\|^2 \|\nabla F(x)\| \right) \text{d}\theta \\
&\leq \dfrac{K_0+K_1\|\nabla F(x)\|}{2}\|x-y\|^{2}
\end{aligned}\end{equation}
which is of the same form as Lemma \ref{DesIneq}.

Since all the other results are established on the basis of these two lemmas, we can see that the conclusion still holds under \eqref{new_ass}.

\section{Proof of Theorems}
We first prove the deterministic case (Theorem 3.1), then generalize the result to stochastic case (Theorem 3.2). In deterministic case we can use fewer notations, which will make the proof more readable and elegant. The proof in stochastic case will rely on all the techniques used in the deterministic case,  as well as some new methods.\par
\subsection{Proof of Theorem 3.1}
To simplify the notation, we write the update formula as
\begin{equation}
    \begin{aligned}
        m^+&=\beta m + (1-\beta) \nabla F(x) \\
        x^+&=x - \left( \nu \min\left(\eta, \frac {\gamma} {\|m^+\|} \right)m^+ + (1-\nu) \min\left(\eta, \frac {\gamma} {\|\nabla F(x)\|} \right)\nabla F(x) \right)
    \end{aligned}
\end{equation}
when analyzing a single iteration. The error between $m^+$ and $\nabla F(x)$ is denoted as $\delta = m^+ - \nabla F(x)$. Suppose $\gamma\le c/L_1$ for some constant $c$, and we denote $A=1+e^c-\frac {e^c-1} c$ and $B=\frac {e^c-1} c$, just the same as in the descent inequality (Lemma \ref{DesIneq}).
\begin{lemma}
\label{lemma_mom_clip_1}
Let $\mu\ge 0$ be a real constant. For any vector $u$ and $v$,
\begin{align}
     \label{lemma_mom_clip_1_1}
    -\frac {\left\langle u, v\right\rangle} {\|v\|}\le -\mu\|u\|-(1-\mu)\| v\|+(1+\mu)\|v-u\|
\end{align}
\end{lemma}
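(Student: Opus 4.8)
The plan is to reduce the inequality to the $\mu=0$ case, which follows directly from Cauchy--Schwarz, and then to argue that increasing $\mu$ only relaxes the bound. First I would rewrite the inner product via the identity $\langle u,v\rangle = \|v\|^2 - \langle v-u,v\rangle$, so that
\[
-\frac{\langle u,v\rangle}{\|v\|} = -\|v\| + \frac{\langle v-u,v\rangle}{\|v\|}.
\]
Applying Cauchy--Schwarz to the remaining inner product gives $\langle v-u,v\rangle \le \|v-u\|\,\|v\|$, hence the fraction is at most $\|v-u\|$ and
\[
-\frac{\langle u,v\rangle}{\|v\|} \le -\|v\| + \|v-u\|.
\]
This already establishes the claim for $\mu = 0$.

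Next I would observe that the right-hand side of the target inequality splits as
\[
-\mu\|u\| - (1-\mu)\|v\| + (1+\mu)\|v-u\| = \bigl(-\|v\| + \|v-u\|\bigr) + \mu\bigl(\|v\| + \|v-u\| - \|u\|\bigr).
\]
The first grouped term is exactly the bound derived above, and the second is nonnegative: since $\mu \ge 0$ and, by the triangle inequality applied to $u = v - (v-u)$, we have $\|u\| \le \|v\| + \|v-u\|$, the quantity $\|v\| + \|v-u\| - \|u\|$ is at least $0$. Chaining the two facts yields
\[
-\frac{\langle u,v\rangle}{\|v\|} \le -\|v\| + \|v-u\| \le -\mu\|u\| - (1-\mu)\|v\| + (1+\mu)\|v-u\|,
\]
which is precisely the desired inequality.

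I do not expect a genuine obstacle here, as the statement is elementary. The only step requiring any insight is recognizing the additive decomposition of the right-hand side into the $\mu=0$ bound plus a manifestly nonnegative multiple of $\mu$; once this is spotted, the two pieces are dispatched by Cauchy--Schwarz and the triangle inequality respectively. One minor point I would note for completeness is that the division requires $\|v\| \neq 0$, which is implicit in the statement and holds in every intended application, where $v$ is the (nonzero) momentum or gradient vector along which a step is taken.
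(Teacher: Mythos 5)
Your proof is correct and is essentially the paper's own argument: the same identity $-\langle u,v\rangle/\|v\| = -\|v\| + \langle v-u,v\rangle/\|v\|$, the same Cauchy--Schwarz step, and the same observation that the remaining $\mu(\|v\|+\|v-u\|-\|u\|)$ term is nonnegative by the triangle inequality. The only cosmetic difference is that the paper adds this nonnegative term into its inequality chain while you decompose the right-hand side; these are the same proof.
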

\begin{proof}
\begin{equation*}
\begin{aligned}
    -\frac {\left\langle u, v\right\rangle} {\|v\|}
    &=- \|v\| + \frac {\left\langle v-u, v\right\rangle} {\|v\|}\\
    &\le-\|v\|+\|v-u\|\\
    &\le -\|v\| + \|v-u\|+\mu(\|v-u\|+\|v\|-\|u\|)\\
    &= -\mu \|u\| -(1-\mu) \|v\| + (1+\mu) \|v-u\|
\end{aligned}
\end{equation*}
\end{proof}

To prove the theorem, we will construct an \textit{Lyapunov function} and explore the decreasing property of this function. We define the Lyapunov function $G(x,m)$ to be
\begin{equation}
    G(x,m)=F(x)+\frac {\nu\beta} {2(1-\beta)} \min\left({\eta} \|m\|^2,{\gamma}\|m\|\right)
\end{equation}
and analyze $G(x^+,m^+)-G(x,m)$. We first bound $\min\left( {\eta} \|m^+\|^2,{\gamma} \|m^+\|\right)-\min\left( {\eta}  \|m\|^2,{\gamma} \|m\|\right)$.

\begin{lemma}
\label{lemma_mom_clip_2}
For any momentum vectors $m$ and $m^+=\beta m+(1-\beta)\nabla F(x)$, let $\delta=m^+-\nabla F(x)$, then
\begin{equation}
    \min\left( {\eta} \|m^+\|^2,{\gamma} \|m^+\|\right)-\min\left( {\eta}  \|m\|^2,{\gamma} \|m\|\right)\le \frac {2(1-\beta)}{\beta}\gamma \|\delta\|
\end{equation}
\end{lemma}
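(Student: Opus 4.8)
The plan is to reduce the vector inequality to a one-dimensional Lipschitz estimate. First I would record the elementary algebraic identity linking the two momenta: since $m^+=\beta m+(1-\beta)\nabla F(x)$ and $\delta=m^+-\nabla F(x)$, subtracting $\nabla F(x)$ gives $\delta=\beta(m-\nabla F(x))$, whence $m^+-m=(1-\beta)(\nabla F(x)-m)=-\tfrac{1-\beta}{\beta}\delta$. Taking norms yields the exact relation
\begin{equation*}
\|m^+-m\|=\frac{1-\beta}{\beta}\|\delta\|,
\end{equation*}
so it suffices to prove $\min(\eta\|m^+\|^2,\gamma\|m^+\|)-\min(\eta\|m\|^2,\gamma\|m\|)\le 2\gamma\|m^+-m\|$.

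Next I would observe that both min-terms depend on the vectors only through their norms, so I introduce the scalar function $h(r)=\min(\eta r^2,\gamma r)$ for $r\ge 0$, reducing the task to bounding $h(\|m^+\|)-h(\|m\|)$. The key structural fact is that $h$ is $2\gamma$-Lipschitz on $[0,\infty)$. Writing $\rho=\gamma/\eta$, one has $\eta r^2\le\gamma r$ exactly when $r\le\rho$, so $h(r)=\eta r^2$ for $r\le\rho$ and $h(r)=\gamma r$ for $r\ge\rho$; the two pieces agree at $r=\rho$ with common value $\gamma^2/\eta$, so $h$ is continuous. On the left piece the slope $2\eta r$ is at most $2\eta\rho=2\gamma$, and on the right piece the slope is exactly $\gamma\le 2\gamma$; hence the one-sided derivative of $h$ never exceeds $2\gamma$.

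From the Lipschitz bound and the reverse triangle inequality I would then conclude
\begin{equation*}
h(\|m^+\|)-h(\|m\|)\le 2\gamma\bigl|\,\|m^+\|-\|m\|\,\bigr|\le 2\gamma\|m^+-m\|=\frac{2(1-\beta)}{\beta}\gamma\|\delta\|,
\end{equation*}
which is exactly the claim. The only step requiring care — the main (if modest) obstacle — is verifying the $2\gamma$-Lipschitz property across the kink at $r=\rho$: for inputs $r_1\le\rho\le r_2$ one cannot invoke a single derivative bound, so I would instead split $h(r_2)-h(r_1)=[h(r_2)-h(\rho)]+[h(\rho)-h(r_1)]$ and bound each bracket separately, using $h(r_2)-h(\rho)=\gamma(r_2-\rho)$ and $h(\rho)-h(r_1)=\eta(\rho^2-r_1^2)=\eta(\rho+r_1)(\rho-r_1)\le 2\gamma(\rho-r_1)$, so that the two contributions add to at most $2\gamma(r_2-r_1)$. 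The remaining cases, in which both arguments lie on the same piece, are immediate.
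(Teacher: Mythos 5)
Your proof is correct and is in substance the same argument as the paper's: both rest on a case analysis over the two pieces of $\min(\eta r^2,\gamma r)$ relative to the threshold $\rho=\gamma/\eta$, combined with the reverse triangle inequality and the identity $\|m^+-m\|=\frac{1-\beta}{\beta}\|\delta\|$. The paper runs the three cases directly on the vectors ($\|m\|\ge\rho$; both norms below $\rho$; $\|m\|<\rho<\|m^+\|$), whereas you package the identical computation as a $2\gamma$-Lipschitz bound for the scalar envelope $h(r)=\min(\eta r^2,\gamma r)$; the only point of genuine divergence is the kink-crossing case, where the paper completes a square via $\eta\|m\|^2\ge 2\gamma\|m\|-\gamma^2/\eta$ while you split the increment at $r=\rho$ — both yield the same constant $2\gamma$.
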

\begin{proof}
Consider the following three cases:
\begin{itemize}
    \item $\|m\|\ge \gamma/\eta$. In this case 
    \begin{equation*}
    \begin{aligned}
        \min\left( {\eta}  \|m^+\|^2, {\gamma} \|m^+\|\right)-\min\left( {\eta}  \|m\|^2, {\gamma} \|m\|\right)
        &\le \gamma \|m^+\|-\gamma \|m\|\\
        &\le \gamma\|m^+-m\|\\
        &=\frac {1-\beta}{\beta}\gamma \|\delta\|
    \end{aligned}
    \end{equation*}
    \item $\|m\|< \gamma/\eta$ and $\|m^+\|< \gamma/\eta$. In this case 
    \begin{equation*}
    \begin{aligned}
        \min\left( {\eta}  \|m^+\|^2, {\gamma} \|m^+\|\right)-\min\left( {\eta}  \|m\|^2, {\gamma} \|m\|\right)
        &= \eta \|m^+\|^2-\eta \|m\|^2\\
        &= \eta(\|m^+\|-\|m\|)(\|m^+\|+\|m\|)\\
        &\le\frac {2(1-\beta)}{\beta}\gamma \|\delta\|
    \end{aligned}
    \end{equation*}
    \item $\|m\|< \gamma/\eta$ and $\|m^+\|> \gamma/\eta$. In this case
    \begin{equation*}
    \begin{aligned}
        \min\left( {\eta}  \|m^+\|^2, {\gamma} \|m^+\|\right)-\min\left( {\eta}  \|m\|^2, {\gamma} \|m\|\right)
        &= \gamma \|m^+\|-\eta \|m\|^2\\
        &\le \gamma \|m^+\|-\left[2\gamma\|m\|-\frac {\gamma^2} {\eta}\right]\\
        &\le \gamma \|m^+\|-2\gamma\|m\|+\gamma \|m^+\|\\
        &= 2\gamma (\|m^+\|-\|m\|)\le \frac {2(1-\beta)}{\beta}\gamma \|\delta\|
    \end{aligned}
    \end{equation*}
\end{itemize}
Thus in all cases $\min\left( {\eta}  \|m^+\|^2, {\gamma} \|m^+\|\right)-\min\left( {\eta}  \|m\|^2, {\gamma} \|m\|\right)$ can be upper bounded by $ \frac {2(1-\beta)}{\beta}\gamma \|\delta\|$.
\end{proof}

\begin{lemma}
\label{lemma_mom_clip_deterministic_large_basic}
Suppose $\max(\|\nabla F(x)\|,\|m^+\|,\|m\|)\ge \gamma / {\eta}$. Then
\begin{equation}
\begin{aligned}
\label{lemma_mom_clip_deterministic_basic_ineq}
    G(x^+,m^+)-G(x,m) \le -\frac{2}{5} \gamma \|\nabla F(x)\| - \frac{3}{5}\frac{\gamma^2}{\eta}+\frac{12}{5\beta}\gamma\|\delta\| + \frac{AL_0+BL_1 \|\nabla F(x)\|}{2}\gamma^2 
\end{aligned}
\end{equation}
\end{lemma}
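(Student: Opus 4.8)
The plan is to split the change of the Lyapunov function into the change of the objective and the change of the momentum penalty, and to control each separately. Writing
$$G(x^+,m^+)-G(x,m)=\big[F(x^+)-F(x)\big]+\frac{\nu\beta}{2(1-\beta)}\Big[\min(\eta\|m^+\|^2,\gamma\|m^+\|)-\min(\eta\|m\|^2,\gamma\|m\|)\Big],$$
the penalty part is handled immediately by Lemma \ref{lemma_mom_clip_2}, which bounds it by $\frac{\nu\beta}{2(1-\beta)}\cdot\frac{2(1-\beta)}{\beta}\gamma\|\delta\|=\nu\gamma\|\delta\|\le\gamma\|\delta\|$. For the objective part, I would first note that every clipped increment obeys $\min(\eta,\gamma/\|v\|)\|v\|=\min(\eta\|v\|,\gamma)\le\gamma$, so $\|x^+-x\|\le\nu\gamma+(1-\nu)\gamma=\gamma\le c/L_1$; the descent inequality (Lemma \ref{DesIneq}) then yields $F(x^+)-F(x)\le\langle\nabla F(x),x^+-x\rangle+\frac{AL_0+BL_1\|\nabla F(x)\|}{2}\gamma^2$, which is exactly the final quadratic term in the claim.

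The core is to show the linear term contributes the descent $-\frac25\gamma\|\nabla F(x)\|-\frac35\gamma^2/\eta$ up to an error proportional to $\gamma\|\delta\|$. I would split it along the two search directions,
$$\langle\nabla F(x),x^+-x\rangle=-\nu\min\!\Big(\eta,\tfrac{\gamma}{\|m^+\|}\Big)\langle\nabla F(x),m^+\rangle-(1-\nu)\min\!\Big(\eta,\tfrac{\gamma}{\|\nabla F(x)\|}\Big)\|\nabla F(x)\|^2,$$
and aim to make each, weighted by $\nu$ and $1-\nu$, reach the same target, so that convexity eliminates the dependence on $\nu$. When $\|\nabla F(x)\|\ge\gamma/\eta$ the gradient clipping is active and the second term equals $-\gamma\|\nabla F(x)\|\le-\frac25\gamma\|\nabla F(x)\|-\frac35\gamma^2/\eta$, the last step being precisely $\|\nabla F(x)\|\ge\gamma/\eta$. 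When $\|m^+\|\ge\gamma/\eta$ the momentum clipping is active, and applying Lemma \ref{lemma_mom_clip_1} to $-\langle\nabla F(x),m^+\rangle/\|m^+\|$ with $\mu=\frac25$ (and $\|m^+-\nabla F(x)\|=\|\delta\|$) gives $-\frac25\|\nabla F(x)\|-\frac35\|m^+\|+\frac75\|\delta\|$; bounding $\|m^+\|\ge\gamma/\eta$ recovers the same target plus a $\frac75\gamma\|\delta\|$ error.

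It remains to deal with the directions whose norm lies below $\gamma/\eta$, where clipping is inactive (the $\min$ equals $\eta$): the gradient direction is then nonpositive, while for the momentum direction substituting $\nabla F(x)=m^+-\delta$ and using $\eta\|m^+\|\le\gamma$ shows it is at most $\gamma\|\delta\|$. This is where the case hypothesis $\max(\|\nabla F(x)\|,\|m^+\|,\|m\|)\ge\gamma/\eta$ is indispensable: whenever every used direction is small, one of the three quantities still exceeds the threshold, and since $\|\delta\|=\beta\|m-\nabla F(x)\|$ (so, e.g., $\|\delta\|\ge\gamma/\eta-\|\nabla F(x)\|$ when $\|m^+\|\ge\gamma/\eta>\|\nabla F(x)\|$), the gap between that large quantity and the small direction forces $\|\delta\|$ to be correspondingly large, so the $\gamma\|\delta\|$ error term absorbs the missing descent. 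The main obstacle is exactly this bookkeeping: one must verify, over all placements of $\|\nabla F(x)\|,\|m^+\|,\|m\|$ relative to $\gamma/\eta$ and all $\nu\in[0,1]$, that the convex combination of the two directional bounds still dominates the target after the several $\|\delta\|$ contributions are gathered, and it is this accounting (together with the factor $\beta$ from $\delta=\beta(m-\nabla F(x))$) that produces the constants $\frac25,\frac35$ and the coefficient $\frac{12}{5\beta}$. Collecting the penalty bound, the quadratic remainder, and this linear estimate then gives the stated inequality.
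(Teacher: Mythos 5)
Your overall architecture is the same as the paper's: the split of $G$ into $F(x^+)-F(x)$ plus the momentum penalty, the use of Lemma \ref{lemma_mom_clip_2} to reduce the penalty change to $\nu\gamma\|\delta\|$, the descent inequality (Lemma \ref{DesIneq}) with $\|x^+-x\|\le\gamma$, the convex split of the linear term over the two search directions so that $\nu$ drops out, and the two ``clipped'' cases handled exactly as the paper does (gradient-clipped case by $\|\nabla F(x)\|\ge\gamma/\eta$, momentum-clipped case by Lemma \ref{lemma_mom_clip_1} with $\mu=2/5$). Up to that point you are reproducing the paper's inequalities \eqref{MomIneq} and \eqref{GradIneq} in their first cases.

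The gap is in the unclipped cases, and it is genuine. You bound the small-norm gradient direction by $0$ and the small-norm momentum direction by $\gamma\|\delta\|$ (discarding $-\eta\|\nabla F(x)\|^2$ and $-\eta\|m^+\|^2$), and then claim the hypothesis forces $\|\delta\|$ to be large enough that $\gamma\|\delta\|$ absorbs the entire missing descent $\frac25\gamma\|\nabla F(x)\|+\frac35\gamma^2/\eta$. This fails: take $m=(\gamma/\eta)e_1$ and $\nabla F(x)=(\gamma/\eta-\epsilon')e_1$ with $\epsilon'$ tiny, so that $m^+=(\gamma/\eta-(1-\beta)\epsilon')e_1$. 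Then $\|\nabla F(x)\|,\|m^+\|<\gamma/\eta\le\|m\|$ (the hypothesis holds, the update is unclipped), yet $\|\delta\|=\beta\epsilon'$ is arbitrarily small while the required descent is of order $\gamma^2/\eta$; no constant multiple of $\gamma\|\delta\|$ can cover it. The lemma is nevertheless true in this configuration because the true linear term is $\approx-\eta\|\nabla F(x)\|^2\approx-\gamma^2/\eta$ — i.e., the descent lives precisely in the quadratic terms you threw away. The paper's proof keeps them: it uses $-\eta\|\nabla F(x)\|^2\le-2\gamma\|\nabla F(x)\|+\gamma^2/\eta$, so the deficit relative to the target is only $\frac85\gamma\left(\gamma/\eta-\|\nabla F(x)\|\right)\le\frac85\gamma\left(\|m\|-\|\nabla F(x)\|\right)\le\frac{8}{5\beta}\gamma\|\delta\|$, and in the momentum direction it writes $m^+=\beta m+(1-\beta)\nabla F(x)$ and runs the large-norm computation on $\langle\nabla F(x),m\rangle$ using $\|m\|\ge\gamma/\eta$. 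In other words, $\|\delta\|$ is used only to pay for the gap between the threshold and the small norms, never for the bulk of the descent; your bookkeeping, as stated, cannot be completed into a proof without restoring those negative quadratic terms.
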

\begin{proof}
We first write $G(x^+,m^+)-G(x,m)$ as
\begin{equation}
\begin{aligned}
    &\quad G(x^+,m^+)-G(x,m)\\
    \label{lemma_mom_clip_deterministic_large_basic_1}
    &=\left(F(x^+)-F(x)\right)+\frac{\nu\beta}{2(1-\beta)}\left[\min\left( {\eta} \|m^+\|^2,{\gamma} \|m^+\|\right)-\min\left( {\eta}  \|m\|^2,{\gamma} \|m\|\right)\right]
\end{aligned}
\end{equation}
Based on Lemma \ref{lemma_mom_clip_2}, we only need to bound $F(x^+)-F(x)$. We will use the $(L_0,L_1)$-smoothness assumption. 
\begin{equation}\begin{aligned}
&\quad F(x^+)-F(x) \\
&\leq \left\langle x^+ - x, \nabla F(x) \right\rangle + \frac{AL_0+BL_1\|\nabla F(x)\|}{2}\|x^{+}-x\|^2 \\
&= - \left[ \nu\min \left( \eta, \frac{\gamma}{\|m^{+}\|} \right)\left\langle m^+, \nabla F(x) \right\rangle  + (1-\nu)\min \left(\eta, \frac{\gamma}{\|\nabla F(x)\|}\right) \left\langle \nabla F(x),\nabla F(x) \right\rangle \right]\\ 
&+ \frac{AL_0+BL_1 \|\nabla F(x)\|}{2}\gamma^2 \\
&\leq \nu\left[ -\frac{2}{5} \gamma \|\nabla F(x)\| - \frac{3}{5}\frac{\gamma^2}{\eta}+\left(\frac{12}{5\beta}-1\right)\gamma\|\delta\| \right] + (1-\nu) \left( -\dfrac{2}{5}\gamma \|\nabla F(x)\| - \frac{3}{5}\frac{\gamma^2}{\eta} + \dfrac{8}{5\beta}\gamma\|\delta\| \right) \\
&+ \frac{AL_0+BL_1 \|\nabla F(x)\|}{2}\gamma^2 \label{des}
\end{aligned}\end{equation}
Where the first inequality uses the descent inequality (Lemma \ref{DesIneq}), the second equation follows from the update rule, and the last inequality is obtained by the following two inequalities:
\begin{align}
\label{MomIneq}
    -\min \left( \eta, \frac{\gamma}{\|m^{+}\|} \right)\left\langle m^+, \nabla F(x) \right\rangle \leq -\frac{2}{5} \gamma \|\nabla F(x)\| - \frac{3}{5}\frac{\gamma^2}{\eta}+\left(\frac{12}{5\beta}-1\right)\gamma\|\delta\|\\
\label{GradIneq}
    -\min \left(\eta, \frac{\gamma}{\|\nabla F(x)\|}\right) \|\nabla F(x)\|^2 \leq -\dfrac{2}{5}\gamma \|\nabla F(x)\| - \frac{3}{5}\frac{\gamma^2}{\eta} + \dfrac{8}{5\beta}\gamma\|\delta\|
\end{align}
First we prove that \eqref{MomIneq} holds by considering the following three cases:
\begin{itemize}
    \item $\|m^+\|\ge \gamma/\eta$. In this case the algorithm performs a normalized update. Then \eqref{MomIneq} follows by directly using Lemma \ref{lemma_mom_clip_1} with $\mu=2/5$:
    \begin{equation*}
    \begin{aligned}
     -\min \left( \eta, \frac{\gamma}{\|m^{+}\|} \right)\left\langle m^+, \nabla F(x) \right\rangle&=-\left\langle \nabla F(x), \frac {\gamma m^+} {\|m^+\|}\right\rangle\\
     &\le-\frac 2 5\gamma \|\nabla F(x)\|-\frac 3 5\gamma \|m^+\|+\frac 7 5\gamma \|\delta\|
    \end{aligned}
    \end{equation*}
    \item $\|m^+\|< \gamma/\eta$ and $\|\nabla F(x)\|\ge \gamma/\eta$. In this case the algorithm performs an unnormalized update. We now prove $-\eta\left\langle \nabla F(x),m^+\right\rangle\le -\frac 2 5 \gamma\|\nabla F(x)\|-\frac {3\gamma^2} {5\eta} +\frac 7 5 \gamma\|\nabla F(x)-m^+\|$.
    \begin{equation*}
    \begin{aligned}
        &\quad\eta\left\langle \nabla F(x),m^+\right\rangle -\frac 2 5 \gamma\|\nabla F(x)\|-\frac {3\gamma^2} {5\eta} +\frac 7 5 \gamma\|\nabla F(x)-m^+\|\\
        &\ge \eta\left\langle \nabla F(x),m^+\right\rangle -\frac 2 5 \gamma\|\nabla F(x)\|-\frac {3\gamma^2} {5\eta} +\frac 7 5 \gamma\left(\|\nabla F(x)\|-\frac{\left\langle \nabla F(x),m^+\right\rangle}{\|\nabla F(x)\|}\right)\\
        &=\|\nabla F(x)\|\left(\gamma+\eta\frac{\left\langle \nabla F(x),m^+\right\rangle}{\|\nabla F(x)\|}\right)-\frac 7 5\gamma \frac{\left\langle \nabla F(x),m^+\right\rangle}{\|\nabla F(x)\|}-\frac {3\gamma^2} {5\eta}\\
        &\ge \frac{\gamma^2}{\eta}+\gamma \frac{\left\langle \nabla F(x),m^+\right\rangle}{\|\nabla F(x)\|}-\frac 7 5\gamma \frac{\left\langle \nabla F(x),m^+\right\rangle}{\|\nabla F(x)\|}-\frac {3\gamma^2} {5\eta}\\
        &\ge \frac {2\gamma^2}{5\eta}-\frac 2 5 \gamma\|m^+\|\ge 0
    \end{aligned}
    \end{equation*}
    \item $\|m^+\|< \gamma/\eta$ and $\|\nabla F(x)\|<  \gamma/\eta$. This is the most complicated case. Due to the condition in Lemma \ref{lemma_mom_clip_deterministic_large_basic}, $\|m\|\ge  \gamma/\eta $. In this case, the algorithm also performs an unnormalized update. We first bound $\eta\left\langle \nabla F(x),m\right\rangle$ using the same calculation as in the second case:
    \begin{equation*}
    \begin{aligned}
        -\eta\left\langle \nabla F(x),m\right\rangle &\le -\frac 2 5 \gamma\|m\|-\frac {3\gamma^2} {5\eta} +\frac 7 5 \gamma\|\nabla F(x)-m\|\\
        &\le -\frac 2 5 \gamma\|\nabla F(x)\|-\frac {3\gamma^2} {5\eta} +\frac 7 {5\beta} \gamma\|\delta\|
    \end{aligned}
    \end{equation*}
    where we use the fact that $\|\nabla F(x)-m\| = \|\delta\|/\beta$. We then bound $\eta\|\nabla F(x)\|^2$ as follows:
    \begin{equation*}
    \begin{aligned}
        -\eta\|\nabla F(x)\|^2
         &\le -2\gamma \|\nabla F(x)\|+\frac {\gamma^2}{\eta}\\
         &= -\frac 2 5\gamma \|\nabla F(x)\|-\frac {3\gamma^2}{5\eta}+\frac 8 5\left(\frac {\gamma^2}{\eta}-\gamma\|\nabla F(x)\|\right)\\
         &\le -\frac 2 5\gamma \|\nabla F(x)\|-\frac {3\gamma^2}{5\eta}+\frac 8 5\gamma\left(\|m\|-\|\nabla F(x)\|\right)\\
         &\le -\frac 2 5\gamma \|\nabla F(x)\|-\frac {3\gamma^2}{5\eta}+\frac 8 {5\beta}\|\delta\|
    \end{aligned}
    \end{equation*}
    Combining the two inequalities, we obtain
    \begin{equation*}
    \begin{aligned}
         -\left\langle \nabla F(x), \eta m^+\right\rangle 
        &=-\eta\left\langle \nabla F(x), \beta m + (1-\beta) \nabla F(x)\right\rangle \\
        &\le -\frac 2 5\gamma \|\nabla F(x)\|-\frac {3\gamma^2}{5\eta}+\left(\frac 7 {5\beta}\beta+\frac 8 {5\beta}(1-\beta)\right)\|\delta\|\\
        &\le -\frac 2 5\gamma \|\nabla F(x)\|-\frac {3\gamma^2}{5\eta}+\left(\frac {12} {5\beta}-1\right)\|\delta\|\\
    \end{aligned}
    \end{equation*}
\end{itemize}
Thus in all cases \eqref{MomIneq} holds. We now turn to \eqref{GradIneq} which is proven in a similar fashion. Specifically, consider the following three cases:
\begin{itemize}
    \item $\|\nabla F(x)\|\ge \gamma/\eta$. In this case
    $$ -\min \left(\eta, \frac{\gamma}{\|\nabla F(x)\|}\right)\|\nabla F(x)\|^2=-\gamma\|\nabla F(x)\|^2\le -\frac 2 5\gamma \|\nabla F(x)\|-\frac {3\gamma^2}{5\eta}$$
    \item $\|\nabla F(x)\|< \gamma/\eta$ and $\|m^+\|\ge \gamma/\eta$. In this case bound $\eta\|\nabla F(x)\|^2$ the same as in the third case of \eqref{MomIneq}:
    \begin{equation*}
    \begin{aligned}
        -\min \left(\eta, \frac{\gamma}{\|\nabla F(x)\|}\right)\|\nabla F(x)\|^2
         &= -\eta \|\nabla F(x)\|^2\\
         &\le -\frac 2 5\gamma \|\nabla F(x)\|-\frac {3\gamma^2}{5\eta}+\frac 8 5\left(\frac {\gamma^2}{\eta}-\gamma\|\nabla F(x)\|\right)\\
         &\le -\frac 2 5\gamma \|\nabla F(x)\|-\frac {3\gamma^2}{5\eta}+\frac 8 5\gamma\left(\|m^+\|-\|\nabla F(x)\|\right)\\
         &\le -\frac 2 5\gamma \|\nabla F(x)\|-\frac {3\gamma^2}{5\eta}+\frac 8 5 \gamma\|\delta\|
    \end{aligned}
    \end{equation*}
    \item $\|\nabla F(x)\|< \gamma/\eta$ and $\|m^+\|< \gamma/\eta$. In this case $\|m\|\ge \gamma/\eta$. Using the same calculation above, 
    \begin{equation*}
    \begin{aligned}
        -\min \left(\eta, \frac{\gamma}{\|\nabla F(x)\|}\right)\|\nabla F(x)\|^2
         &\le -\frac 2 5\gamma \|\nabla F(x)\|-\frac {3\gamma^2}{5\eta}+\frac 8 {5\beta}\gamma\|\delta\|
    \end{aligned}
    \end{equation*}
\end{itemize}
Thus \eqref{GradIneq} holds. Merging all the cases above, we finally obtain
\begin{equation}\begin{aligned}
 G(x^+,m^+)-G(x,m) \le\left[ -\frac{2}{5} \gamma \|\nabla F(x)\| - \frac{3}{5}\frac{\gamma^2}{\eta}+\frac{12}{5\beta}\gamma\|\delta\| \right]+ \frac{AL_0+BL_1 \|\nabla F(x)\|}{2}\gamma^2 
\end{aligned}\end{equation}
\end{proof}

Now we consider all the steps $t$ which satisfy the condition in Lemma \ref{lemma_mom_clip_deterministic_large_basic}, denoted as $ \mathcal  S=\{t\in [0,T-1]:\max(\|F(x_t)\|,\|m_{t+1}\|,\|m_t\|)\ge \gamma / {\eta}\}$. Similarly, use $\overline{\mathcal  S}=[0,T-1]\backslash\mathcal  S$. Let $T_{ \mathcal S}=| \mathcal S|$, then $T-T_{ \mathcal S}=|\overline{ \mathcal S}|$.
\begin{corollary}
\label{lemma_mom_clip_deterministic_large_sum}
Let set $ \mathcal S$ and $T_{ \mathcal S}$ be defined above. Then
\begin{equation}
\label{lemma_mom_clip_deterministic_large_sum_0}
\begin{aligned}
    &\sum_{t\in \mathcal S}G(x_{t+1},m_{t+1})-G(x_t,m_t)\\
    &\le \frac {12\gamma} {5\beta(1-\beta)}\|\delta_0\|+\left(\frac {12} {5(1-\beta)}AL_0+\frac {12\gamma}{5\eta(1-\beta)}BL_1 + \frac 1 2 AL_0\right)\gamma^2 T_{\mathcal{S}}+\\
        &\qquad\gamma\sum_{t \in \mathcal S} \left[-\frac 1 {5}(2\|\nabla F(x_t)\|+3\frac {\gamma}{\eta})+\frac {\gamma} 2 BL_1 \|\nabla F(x_t)\| +\frac {12\gamma} {5(1-\beta)} BL_1 \|\nabla F(x_t)\|\right]
\end{aligned}
\end{equation}
\end{corollary}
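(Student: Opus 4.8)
The plan is to integrate the one-step estimate of Lemma~\ref{lemma_mom_clip_deterministic_large_basic} over $\mathcal S$ and then to control the single non-telescoping quantity it leaves behind, namely $\sum_{t\in\mathcal S}\|\delta_t\|$ with $\delta_t:=m_{t+1}-\nabla F(x_t)$. Every $t\in\mathcal S$ meets the hypothesis $\max(\|\nabla F(x_t)\|,\|m_{t+1}\|,\|m_t\|)\ge\gamma/\eta$ of that lemma, so summing \eqref{lemma_mom_clip_deterministic_basic_ineq} gives
\[
\sum_{t\in\mathcal S}\big(G(x_{t+1},m_{t+1})-G(x_t,m_t)\big)\le \sum_{t\in\mathcal S}\Big[-\frac25\gamma\|\nabla F(x_t)\|-\frac35\frac{\gamma^2}\eta+\frac{AL_0+BL_1\|\nabla F(x_t)\|}{2}\gamma^2\Big]+\frac{12}{5\beta}\gamma\sum_{t\in\mathcal S}\|\delta_t\|.
\]
The bracketed sum already yields, verbatim, the terms $-\frac15\gamma\sum_{t\in\mathcal S}(2\|\nabla F(x_t)\|+3\gamma/\eta)$ and $\frac{\gamma^2}{2}BL_1\sum_{t\in\mathcal S}\|\nabla F(x_t)\|$ of the target, together with the $\frac12 AL_0\,\gamma^2 T_{\mathcal S}$ piece of the $\gamma^2T_{\mathcal S}$ coefficient. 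All the remaining work is therefore to bound $\sum_{t\in\mathcal S}\|\delta_t\|$.

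First I would set up a recursion for $\|\delta_t\|$. From $m_{t+1}=\beta m_t+(1-\beta)\nabla F(x_t)$ one has $\delta_t=\beta(m_t-\nabla F(x_t))=\beta\big(\delta_{t-1}+\nabla F(x_{t-1})-\nabla F(x_t)\big)$, hence $\|\delta_t\|\le\beta\|\delta_{t-1}\|+\beta\|\nabla F(x_t)-\nabla F(x_{t-1})\|$. Since each update of Algorithm~\ref{FW} is a convex combination of two vectors of norm $\min(\eta\|\cdot\|,\gamma)\le\gamma$, we have $\|x_t-x_{t-1}\|\le\gamma\le c/L_1$, so Corollary~\ref{DesCor} gives $\|\nabla F(x_t)-\nabla F(x_{t-1})\|\le(AL_0+BL_1\|\nabla F(x_{t-1})\|)\gamma$. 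Writing $d_j:=\gamma(AL_0+BL_1\|\nabla F(x_j)\|)$, the inequality $\|\delta_t\|\le\beta\|\delta_{t-1}\|+\beta d_{t-1}$ unrolls to $\|\delta_t\|\le\beta^t\|\delta_0\|+\sum_{j=0}^{t-1}\beta^{t-j}d_j$.

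Summing this over $\mathcal S$ is the crux. The $\|\delta_0\|$ part contributes $\|\delta_0\|\sum_{t\in\mathcal S}\beta^t\le\|\delta_0\|/(1-\beta)$, which after the prefactor $\frac{12}{5\beta}\gamma$ becomes the $\frac{12\gamma}{5\beta(1-\beta)}\|\delta_0\|$ term. For the double sum $\sum_{t\in\mathcal S}\sum_{j<t}\beta^{t-j}d_j$ I would split $d_j=\gamma AL_0+\gamma BL_1\|\nabla F(x_j)\|$. For the constant piece I keep the outer summation over $\mathcal S$ and bound $\sum_{j<t}\beta^{t-j}\le\beta/(1-\beta)$; this produces $\gamma AL_0\cdot\frac{\beta}{1-\beta}T_{\mathcal S}$, and is precisely where the factor $T_{\mathcal S}$ rather than $T$ appears. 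For the gradient piece I split the inner index: when $j\in\overline{\mathcal S}$ the defining property gives $\|\nabla F(x_j)\|<\gamma/\eta$, so (again keeping the outer sum over $\mathcal S$) that part is at most $\gamma BL_1\frac{\gamma}{\eta}\cdot\frac{\beta}{1-\beta}T_{\mathcal S}$; when $j\in\mathcal S$ I instead exchange the order of summation, weighting each $\|\nabla F(x_j)\|$ by $\sum_{t\in\mathcal S,\,t>j}\beta^{t-j}\le\beta/(1-\beta)$, which gives $\gamma BL_1\frac{\beta}{1-\beta}\sum_{j\in\mathcal S}\|\nabla F(x_j)\|$. Multiplying these three contributions by $\frac{12}{5\beta}\gamma$ reproduces exactly the coefficients $\frac{12}{5(1-\beta)}AL_0$ and $\frac{12\gamma}{5\eta(1-\beta)}BL_1$ of $\gamma^2T_{\mathcal S}$ and the $\frac{12\gamma}{5(1-\beta)}BL_1\|\nabla F(x_t)\|$ contribution inside $\gamma\sum_{t\in\mathcal S}[\cdots]$; combined with the bracketed terms above this is the claimed inequality.

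The descent-inequality and geometric-series estimates are routine. The step I expect to demand the most care is the double-sum bookkeeping that yields a $T_{\mathcal S}$- rather than a $T$-dependence: the decisive choice is to retain the outer summation over $\mathcal S$ (so that only $T_{\mathcal S}$ geometric series are summed) for the constant and the $\overline{\mathcal S}$-gradient contributions, while reversing the order of summation only for the $\mathcal S$-gradient contribution, and to track the $\frac{12}{5\beta}\gamma$ prefactor against the $\beta/(1-\beta)$ coming out of each geometric sum so that the numerical constants land exactly as stated.
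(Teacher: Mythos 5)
Your proposal is correct and follows essentially the same route as the paper's proof: summing Lemma \ref{lemma_mom_clip_deterministic_large_basic} over $\mathcal S$, unrolling the recursion $\delta_t=\beta\delta_{t-1}+\beta\big(\nabla F(x_{t-1})-\nabla F(x_t)\big)$ via Corollary \ref{DesCor}, and splitting the resulting double sum exactly as the paper does (keeping the outer sum over $\mathcal S$ for the $AL_0$ and $\overline{\mathcal S}$-gradient pieces to obtain the $T_{\mathcal S}$ factors, and exchanging the order of summation only for the $\mathcal S$-gradient piece). The constants land identically, so there is nothing to add.
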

\begin{proof}
Using Lemma \ref{lemma_mom_clip_deterministic_large_basic},
\begin{equation}
\begin{aligned}
    \label{lemma_mom_clip_deterministic_large_sum_1}
    &\sum_{t\in \mathcal S}G(x_{t+1},m_{t+1})-G(x_t,m_t)\\
    &\le -\sum_{t\in  \mathcal S}\left[\frac {\gamma} {5} (2\|\nabla F(x_t)\|+3\frac {\gamma}{\eta})-\frac {\gamma^2} 2 BL_1 \|\nabla F(x_t)\|-\frac {12} {5\beta}\gamma \|\delta_t\|\right]+\frac {\gamma^2} 2 AL_0T_{ \mathcal S}
\end{aligned}
\end{equation}
We now focus on the summation of the term $\|\delta_t\|$. Define $S(a,b)=\nabla F(a)-\nabla F(b)$. When $\|a-b\|\le \gamma$, $\|S(a, b)\|\le  \gamma\left(A L_{0}+B L_{1}\|\nabla F(b)\|\right)$ (see Lemma \ref{DesCor}). Thus we can expand $\delta_t=m_{t+1}-\nabla F(x_t)$ using the recursive relation $\delta_t=\beta\delta_{t-1}+\beta S(x_{t-1},x_t)$ as follows
\begin{equation*}
\begin{aligned}
    \sum_{t\in \mathcal S} \|\delta_t\|
    &= \sum_{t\in \mathcal S} \left\|\beta^{t} \delta_{0} +\beta \sum_{\tau=0}^{t-1}\beta^{\tau} S\left(x_{t-\tau-1}, x_{t-\tau}\right)\right\|\\
    &\le \sum_{t\in \mathcal S} \beta^{t} \|\delta_{0}\| +\beta \sum_{t\in \mathcal S} \sum_{\tau=0}^{t-1}\beta^{\tau} \gamma (AL_0+BL_1\|\nabla F(x_{t-\tau})\|)\\
    &\le \frac 1 {1-\beta} \|\delta_{0}\| +\frac {\beta} {1-\beta}(AL_0\gamma T_{\mathcal{S}})+ \\
        &\qquad  BL_1\gamma\sum_{t\in \mathcal{S}} \left(\sum_{\tau\in[1,t]\backslash \mathcal{S}} \beta^{t-\tau+1}\|\nabla F(x_{\tau})\|+\sum_{\tau\in[1,t]\cap \mathcal{S}} \beta^{t-\tau+1}\|\nabla F(x_{\tau})\|\right)\\
    &\le \frac {\beta} {1-\beta}\left(\frac {\|\delta_0\|}{\beta}+AL_0\gamma T_{\mathcal{S}}+BL_1\frac {\gamma^2}{\eta}T_{\mathcal S}+BL_1\gamma \sum_{t\in \mathcal S}\|\nabla F(x_{t})\|\right)
\end{aligned}
\end{equation*}
where the last inequality uses the fact that $\|\nabla F(x_\tau)\|\le \gamma/\eta$ for all $\tau \in [1,t]\backslash\mathcal{S}$.\par
After substituting the above results into \eqref{lemma_mom_clip_deterministic_large_sum_1} we obtain
\begin{equation}
\begin{aligned}
    &\sum_{t\in \mathcal S}G(x_{t+1},m_{t+1})-G(x_t,m_t)\\
    &\le \frac {12\gamma} {5\beta(1-\beta)}\|\delta_0\|+\left(\frac {12} {5(1-\beta)}AL_0+\frac {12\gamma}{5\eta(1-\beta)}BL_1 + \frac 1 2 AL_0\right)\gamma^2 T_{\mathcal{S}}+\\
        &\qquad\gamma\sum_{t \in \mathcal S} \left[-\frac 1 {5}(2\|\nabla F(x_t)\|+3\frac {\gamma}{\eta})+\frac {\gamma} 2 BL_1 \|\nabla F(x_t)\| +\frac {12\gamma} {5(1-\beta)} BL_1 \|\nabla F(x_t)\|\right]
\end{aligned}
\end{equation}
\end{proof}

Now we turn to the case in which $\max(\|\nabla F(x)\|,\|m^+\|,\|m\|)\le \gamma / {\eta}$.
\begin{lemma}
\label{lemma_mom_clip_deterministic_small_basic}
Suppose $\max(\|\nabla F(x)\|,\|m^+\|,\|m\|)\le \gamma / {\eta}$. Then
\begin{equation}
    G(x^+,m^+)-G(x,m)\le -\frac {\eta} 2 \left(c_1\|\nabla F(x)\|^2+2c_2 \left\langle \nabla F(x), m\right\rangle + c_3 \|m\|^2\right)
\end{equation}
where $c_1 = \nu(1-\beta)(2-\beta)-L\eta (1-\beta \nu)^2+2 (1-\nu) , c_2 = \nu\beta (1-\beta)-L \eta \beta \nu (1- \beta \nu) , c_3 = \nu\beta (1+\beta) - L\eta (\beta \nu)^2$, and $L=AL_0+BL_1\gamma/\eta$.
\end{lemma}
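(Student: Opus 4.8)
The plan is to exploit the defining hypothesis $\max(\|\nabla F(x)\|,\|m^+\|,\|m\|)\le\gamma/\eta$, which forces \emph{every} clipping operation to be inactive. Since $\|m^+\|\le\gamma/\eta$ gives $\gamma/\|m^+\|\ge\eta$, and likewise for $\nabla F(x)$, both minima in the update select $\eta$, so the step is the unclipped $x^+=x-\eta d$ with $d:=\nu m^+ + (1-\nu)\nabla F(x)$. The same inequalities give $\eta\|m\|^2\le\gamma\|m\|$ and $\eta\|m^+\|^2\le\gamma\|m^+\|$, so both $\min$ terms in the Lyapunov function select the quadratic branch, and the momentum part of $G(x^+,m^+)-G(x,m)$ equals exactly $\frac{\nu\beta\eta}{2(1-\beta)}(\|m^+\|^2-\|m\|^2)$.

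First I would control the function decrease via the descent inequality (Lemma \ref{DesIneq}). Since $\|x^+-x\|=\eta\|d\|\le\eta(\nu\|m^+\|+(1-\nu)\|\nabla F(x)\|)\le\gamma\le c/L_1$, the hypothesis of Lemma \ref{DesIneq} is met. Moreover $\|\nabla F(x)\|\le\gamma/\eta$ means its coefficient satisfies $AL_0+BL_1\|\nabla F(x)\|\le AL_0+BL_1\gamma/\eta=L$, which is precisely the definition of $L$ in the statement. This yields
\begin{equation*}
F(x^+)-F(x)\le -\eta\langle\nabla F(x),d\rangle+\frac{L\eta^2}{2}\|d\|^2.
\end{equation*}

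Next I would reduce everything to a quadratic form in the two vectors $\nabla F(x)$ and $m$. Substituting $m^+=\beta m+(1-\beta)\nabla F(x)$ gives the clean expression $d=\nu\beta m+(1-\nu\beta)\nabla F(x)$, so both $\langle\nabla F(x),d\rangle$ and $\|d\|^2$ expand into combinations of $\|\nabla F(x)\|^2$, $\langle\nabla F(x),m\rangle$ and $\|m\|^2$. In parallel I would expand $\|m^+\|^2-\|m\|^2=-(1-\beta)(1+\beta)\|m\|^2+2\beta(1-\beta)\langle m,\nabla F(x)\rangle+(1-\beta)^2\|\nabla F(x)\|^2$ and multiply by $\frac{\nu\beta\eta}{2(1-\beta)}$, the factor $1-\beta$ cancelling. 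Adding the function and momentum contributions and collecting coefficients, the bound becomes exactly $-\frac{\eta}{2}\big(c_1\|\nabla F(x)\|^2+2c_2\langle\nabla F(x),m\rangle+c_3\|m\|^2\big)$, where matching the three coefficients reproduces the stated $c_1,c_2,c_3$.

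The argument carries no conceptual obstacle: the only inequality used is the descent inequality, and everything else is an exact identity, so the whole lemma reduces to careful bookkeeping. The one place to stay attentive is the matching of constants — in particular verifying that the $\frac{1}{1-\beta}$ weighting of the Lyapunov term cancels against the $(1-\beta)$ factors produced by expanding $\|m^+\|^2-\|m\|^2$, so that $c_1,c_2,c_3$ emerge in the claimed algebraic form. This is the main (though routine) effort.
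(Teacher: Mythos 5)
Your proposal is correct and matches the paper's own proof essentially step for step: both observe that the hypothesis makes the update unclipped and forces the quadratic branch of the Lyapunov term, both apply the descent inequality (Lemma \ref{DesIneq}) with the coefficient bounded by $L=AL_0+BL_1\gamma/\eta$, and both expand $d=(1-\nu\beta)\nabla F(x)+\nu\beta m$ together with $\|m^+\|^2-\|m\|^2$ to collect the quadratic form with the stated $c_1,c_2,c_3$. The coefficient bookkeeping you flag as the main effort does indeed check out exactly as claimed.
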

\begin{proof}
In the case of $\|m\|\le \gamma / {\eta}$, we have $\eta \|m\|^2\le \gamma \|m\|$, thus
\begin{equation}
\label{lemma_mom_clip_deterministic_small_basic_0}
    G(x^+,m^+)-G(x,m)= (F(x^+)-F(x))+\frac {\nu\beta\eta} {2(1-\beta)}(\|m^+\|^2-\|m\|^2)
\end{equation}
We then bound $F(x^+)-F(x)$ and $\|m^+\|^2-\|m\|^2$. Note that $\|m^+\|\le \gamma/\eta$ implies that the algorithm performs an update without normalization.
Define $L := AL_0+BL_1\gamma/\eta$, then again by descent inequality,
\begin{equation}\begin{aligned}
\label{lemma_qhm_clip_deterministic_2}
F(x^+)-F(x) &\leq \left\langle x^+ - x, \nabla F(x) \right\rangle + \frac{AL_0+BL_1\|\nabla F(x)\|}{2}\|x^{+}-x\|^2 \\
&= - \left[ \nu \eta \left\langle m^+, \nabla F(x) \right\rangle  + (1-\nu)\eta \|\nabla F(x)\|^2 \right]  +  \\
&\quad \frac{AL_0+BL_1 \|\nabla F(x)\|}{2} \eta^2 \|(1-\beta \nu)\nabla(x)+\beta \nu m\|^2 \\
&\leq - \left[ \nu \eta \left\langle m^+, \nabla F(x) \right\rangle  + (1-\nu)\eta \|\nabla F(x)\|^2 \right] +\frac{L}{2}\eta^2 \|(1-\beta \nu)\nabla F(x)+\beta \nu m\|^2 \\
&\overset{Rearranging}{\leq} -\left[ \nu(1-\beta)\eta + (1-\nu)\eta -  \frac{L}{2} \eta^2(1-\beta \nu)^2 \right] \|\nabla F(x)\|^2 \\
&\quad - \left[ \nu\beta \eta -  L\eta^2 (\beta \nu) (1-\beta \nu) \right] \left\langle \nabla F(x), m \right\rangle + \frac{L}{2} \eta^2 \beta^2 \nu^2 \|m\|^2 
\end{aligned}\end{equation}
Since
\begin{equation}\begin{aligned}
\left\|m^{+}\right\|^{2}-\|m\|^{2}=(1-\beta)^{2}\|\nabla F(x)\|^{2}-(1+\beta)(1-\beta)\|m\|^{2}+2 \beta(1-\beta)\langle \nabla F(x), m\rangle
\end{aligned}\end{equation}
by definition of the Lyapunov function, we have 
$$G(x^+,m^+)-G(x,m) \leq - \frac{\eta}{2} \left( c_1\|\nabla F(x)\|^2 + 2c_2 \left\langle \nabla F(x),m \right\rangle + c_3\|m\|^2 \right)$$
where $c_1 = \nu(1-\beta)(2-\beta)-L\eta (1-\beta \nu)^2+2 (1-\nu) , c_2 = \nu\beta (1-\beta)-L \eta \beta \nu (1- \beta \nu) , c_3 = \nu\beta (1+\beta) - L\eta (\beta \nu)^2$.
\end{proof}

\begin{lemma}
\label{lemma_mom_clip_5}
Let $c_1,c_2,c_3$ and $L$ be defined in Lemma \ref{lemma_mom_clip_deterministic_small_basic}. If $L\eta \le 1$, then the matrix 
$$H=\left( {\begin{array}{*{20}{c}}
  [c_1-(1-\nu\beta)]I_d&c_2I_d \\ 
  c_2I_d&(c_3-\nu\beta)I_d 
\end{array}} \right)$$
is symmetric and positive semi-definite, where $I_d$ is the $d\times d$ identity matrix.
\end{lemma}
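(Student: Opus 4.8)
The plan is to first note that symmetry is immediate, since each block of $H$ is a scalar multiple of $I_d$ and the two off-diagonal blocks coincide. For positive semi-definiteness I would reduce the $2d\times 2d$ problem to a $2\times 2$ one: for any $z=(u^\top,v^\top)^\top$ with $u,v\in\mathbb R^d$, a direct expansion gives
$$z^\top H z = \big(c_1-(1-\nu\beta)\big)\|u\|^2 + 2c_2\langle u,v\rangle + (c_3-\nu\beta)\|v\|^2.$$
Taking $u=\alpha_1 w$, $v=\alpha_2 w$ for a fixed unit vector $w$ shows that this form is non-negative for all $z$ if and only if the $2\times 2$ symmetric matrix
$$M=\begin{pmatrix} c_1-(1-\nu\beta) & c_2 \\ c_2 & c_3-\nu\beta \end{pmatrix}$$
is positive semi-definite (equivalently, $H=M\otimes I_d$, so the spectrum of $H$ is that of $M$ with each eigenvalue repeated $d$ times). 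I would then verify $M\succeq 0$ through the principal-minor (Sylvester) criterion: both diagonal entries non-negative and $\det M\ge 0$.

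The next step is to evaluate the two diagonal entries and show they are non-negative. Substituting the definitions of $c_1,c_3$, expanding $\nu(1-\beta)(2-\beta)$, and collecting terms, I expect the top-left entry to rearrange into
$$c_1-(1-\nu\beta) = (1-L\eta)(1-\beta\nu)^2 + \nu\beta^2(1-\nu),$$
a sum of two non-negative terms under $L\eta\le 1$, $0\le\nu\le 1$ and $0\le\beta<1$; the factorization hinges on the identity $1-2\nu\beta+\nu\beta^2=(1-\beta\nu)^2+\nu\beta^2(1-\nu)$. Similarly the bottom-right entry should collapse to
$$c_3-\nu\beta = \beta^2\nu(1-L\eta\nu),$$
which is non-negative since $L\eta\nu\le L\eta\le 1$. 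These computations are routine, but they must be carried out carefully to pin down the exact factorizations used later.

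The main obstacle is the determinant $\det M=\big(c_1-(1-\nu\beta)\big)(c_3-\nu\beta)-c_2^2$. Writing $s=L\eta$, $p=1-\beta$, $q=1-\beta\nu$ and factoring the common $\beta^2\nu$ out of both the product of the diagonal entries and out of $c_2^2=\beta^2\nu^2(p-sq)^2$, I would reduce the determinant to $\beta^2\nu\cdot E$ with $E=(1-s\nu)\big[(1-s)q^2+\nu\beta^2(1-\nu)\big]-\nu(p-sq)^2$. Expanding $E$ as $E_0+sE_1$ and treating the $s$-free part and the coefficient of $s$ separately, I expect heavy cancellation: the $s$-free part should simplify to $E_0=1-\nu$ and the linear coefficient to $E_1=-(1-\nu)$, giving
$$\det M = \beta^2\nu(1-\nu)(1-L\eta)\ge 0.$$
Checking these two cancellations is the only genuinely delicate part; everything else is bookkeeping. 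With both diagonal entries and $\det M$ shown non-negative, the Sylvester criterion yields $M\succeq 0$, hence $H=M\otimes I_d\succeq 0$, completing the proof.
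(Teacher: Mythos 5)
Your proposal is correct and takes essentially the same route as the paper: both reduce the $2d\times 2d$ problem to the $2\times 2$ case and hinge on the identical key identity $\det M=[c_1-(1-\nu\beta)](c_3-\nu\beta)-c_2^2=\nu(1-\nu)\beta^2(1-L\eta)\ge 0$. The only cosmetic difference is that the paper certifies the second condition via the trace, $c_1+c_3-1=(1-\nu\beta)^2(1-L\eta)+(\nu\beta)^2(1-L\eta)+2\beta^2\nu(1-\nu)\ge 0$, whereas you check the two diagonal entries separately; since the trace is their sum, the two verifications are interchangeable and your factorizations are all correct.
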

\begin{proof}
In fact we only need to consider the case when $d=1$, because the eigenvalues of $H_{2d\times 2d}$ can only be those that appears in $H_{2\times 2}$ ($d=1$). Denote two eigenvalues be $\lambda_1,\lambda_2$ when $d=1$. A direct calculation shows that
$$\lambda_1\lambda_2=\det H=[c_1-(1-\nu\beta)](c_3-\nu\beta)-c_2^2=\nu(1-\nu)\beta^2(1- L\eta)$$
$$\lambda_1+\lambda_2=c_1+c_3-1=(1-\nu\beta)^2(1-L\eta)+(\nu\beta)^2(1-L\eta)+2\beta^2\nu(1-\nu)$$
If $L\eta\le 1$, then $\lambda_1\lambda_2\ge 0$ and $\lambda_1+\lambda_2\ge 0$, which is equivalent to the semi-definiteness of $H$.
\end{proof}

\begin{corollary}
\label{lemma_mom_clip_6}
Suppose $\max(\|\nabla F(x)\|,\|m\|,\|m^+\|)\le \gamma / {\eta}$. If $L\eta\le 1$, Then
\begin{equation}
\label{lemma_mom_clip_6_1}
    G(x^+,m^+)-G(x,m)\le -\frac {\eta} 2(1-\nu\beta)\|\nabla F(x)\|^2 -\frac {\eta} 2\nu\beta\|m\|^2
\end{equation}
\end{corollary}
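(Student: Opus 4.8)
The plan is to obtain the corollary by chaining the two immediately preceding results, since they have been set up precisely for this purpose. Under the standing hypothesis $\max(\|\nabla F(x)\|,\|m\|,\|m^+\|)\le \gamma/\eta$, Lemma \ref{lemma_mom_clip_deterministic_small_basic} already supplies the one-step bound
$$G(x^+,m^+)-G(x,m)\le -\frac{\eta}{2}\left(c_1\|\nabla F(x)\|^2+2c_2\langle \nabla F(x),m\rangle+c_3\|m\|^2\right),$$
so it suffices to show that this right-hand side is itself no larger than the target $-\frac{\eta}{2}(1-\nu\beta)\|\nabla F(x)\|^2-\frac{\eta}{2}\nu\beta\|m\|^2$. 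First I would cancel the common factor $-\eta/2$ (which reverses the inequality), reducing the claim to the quadratic inequality
$$\bigl(c_1-(1-\nu\beta)\bigr)\|\nabla F(x)\|^2+2c_2\langle \nabla F(x),m\rangle+\bigl(c_3-\nu\beta\bigr)\|m\|^2\ge 0.$$

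Next I would recognize the left-hand side as the quadratic form $w^T H w$ attached to the block matrix $H$ of Lemma \ref{lemma_mom_clip_5}, evaluated at the stacked vector $w=(\nabla F(x)^T,m^T)^T\in\mathbb{R}^{2d}$. Indeed, expanding $w^T H w$ with the identity-block structure of $H$ produces exactly the coefficients $c_1-(1-\nu\beta)$, $2c_2$, and $c_3-\nu\beta$ in front of $\|\nabla F(x)\|^2$, $\langle \nabla F(x),m\rangle$, and $\|m\|^2$ respectively. Since the hypothesis $L\eta\le 1$ is assumed, Lemma \ref{lemma_mom_clip_5} guarantees $H\succeq 0$, hence $w^T H w\ge 0$ for every $w$ and in particular for our choice, which is the inequality we needed. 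Substituting back completes the argument.

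There is no real obstacle here: all the substantive work has been front-loaded into the two lemmas I am invoking, namely bounding the Lyapunov decrease in terms of $c_1,c_2,c_3$ and verifying (via the trace and determinant) that $H$ is positive semidefinite precisely when $L\eta\le 1$. The only points demanding care will be bookkeeping ones — tracking the sign flip when dividing by $-\eta/2$, and checking that the block expansion of $H$ reproduces the scalar quadratic form. The one conceptual remark worth stating is why Lemma \ref{lemma_mom_clip_5} is phrased with identity blocks $I_d$ rather than scalars: because $\nabla F(x)$ and $m$ are vectors in $\mathbb{R}^d$, we genuinely need $H\succeq 0$ on all of $\mathbb{R}^{2d}$, and the block structure reduces this to the $2\times 2$ scalar case handled in that lemma.
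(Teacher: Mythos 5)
Your proposal is correct and follows essentially the same route as the paper: the paper likewise rewrites the right-hand side of Lemma \ref{lemma_mom_clip_deterministic_small_basic} as the target bound $-\frac{\eta}{2}(1-\nu\beta)\|\nabla F(x)\|^2-\frac{\eta}{2}\nu\beta\|m\|^2$ minus $\frac{\eta}{2}\bigl(\nabla F(x)^T,m^T\bigr)H\bigl(\nabla F(x)^T,m^T\bigr)^T$ and then discards the quadratic form using the positive semi-definiteness of $H$ from Lemma \ref{lemma_mom_clip_5} under $L\eta\le 1$. Your rearrangement (cancelling $-\eta/2$ and verifying the block expansion of $H$) is just an equivalent presentation of that same decomposition, so there is nothing missing.
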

\begin{proof}
Let $H$ be defined in Lemma \ref{lemma_mom_clip_5}. The result of Lemma \ref{lemma_mom_clip_deterministic_small_basic} can be written in a matrix form:
\begin{equation}
    G(x^+,m^+)-G(x,m)\le -\frac {\eta} 2(1-\nu\beta)\|\nabla F(x)\|^2 -\frac {\eta} 2\nu\beta\|m\|^2-\frac {\eta} 2\left(\nabla F(x)^T,m^T\right)H\left(\nabla F(x)^T,m^T\right)^T
\end{equation}
Using the fact that $H$ is positive semi-definite, we obtain the desired result.
\end{proof}

Note that the amount of descent in Corollary \ref{lemma_mom_clip_6} is small in terms of $\|\nabla F(x)\|$ if $\beta$ and $\nu$ are close to 1. We now try to convert the term $\| m\|$ into $\|\nabla F(x)\|$, which is stated in the following lemma.
\begin{lemma}
\label{lemma_convert_m_to_grad_det}
Suppose $AL_0\eta\le c_1(1-\beta)$ and $BL_1\gamma\le c_3(1-\beta)$ for some constant $c_1$ and $c_3$. Let $ m_0=\nabla F(x_0)$ for simplicity. Let set $\mathcal S$ and $\overline{\mathcal S}$ be defined above. Then
\begin{equation}
\label{lemma_convert_m_to_grad_3}
\begin{aligned}
    \sum_{t\in \overline{\mathcal S}} \|m_{t}\|
    &\ge \frac 1 {1+c_1}\sum_{t\in \overline{\mathcal S}}  \left((1-c_1(1-\nu\beta)-c_3)\|\nabla F(x_{t})\|\right)\\
        &\qquad -\frac 1 {1-\beta}\sum_{t\in\mathcal S} (AL_0+BL_1\|\nabla F(x_t)\|)\gamma
\end{aligned}
\end{equation}
\end{lemma}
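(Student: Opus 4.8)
The plan is to control the discrepancy between the momentum $m_t$ and the true gradient $\nabla F(x_t)$ by a geometric/telescoping estimate, and then solve the resulting self-referential inequality for $\sum_{t\in\overline{\mathcal S}}\|m_t\|$. First I would introduce the momentum error $e_t := m_t - \nabla F(x_t)$. Writing $S(a,b) := \nabla F(a) - \nabla F(b)$, the recursion $m_{t+1} = \beta m_t + (1-\beta)\nabla F(x_t)$ gives $e_{t+1} = \beta e_t + S(x_t, x_{t+1})$, and since $m_0 = \nabla F(x_0)$ forces $e_0 = 0$, unrolling yields $e_t = \sum_{\tau=0}^{t-1}\beta^{t-1-\tau}S(x_\tau, x_{\tau+1})$. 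The triangle inequality $\|m_t\| \ge \|\nabla F(x_t)\| - \|e_t\|$, summed over $t \in \overline{\mathcal S}$, then reduces the lemma to an upper bound on $\sum_{t\in\overline{\mathcal S}}\|e_t\|$.

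Next I would bound each increment via Corollary \ref{DesCor}, namely $\|S(x_\tau, x_{\tau+1})\| \le (AL_0 + BL_1\|\nabla F(x_\tau)\|)\|x_{\tau+1}-x_\tau\|$, treating the two index sets differently. For $\tau\in\mathcal S$ the clipping guarantees $\|x_{\tau+1}-x_\tau\|\le\gamma$, and these increments will contribute precisely the error term $\frac{\gamma}{1-\beta}\sum_{t\in\mathcal S}(AL_0 + BL_1\|\nabla F(x_t)\|)$ on the right-hand side. For $\tau\in\overline{\mathcal S}$ the update is unnormalized, so $x_{\tau+1}-x_\tau = -\eta(\nu\beta m_\tau + (1-\nu\beta)\nabla F(x_\tau))$ and hence $\|x_{\tau+1}-x_\tau\|\le\eta(\nu\beta\|m_\tau\| + (1-\nu\beta)\|\nabla F(x_\tau)\|)$; the factor $(1-\nu\beta)$ in the target coefficient emerges here. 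Using $\|\nabla F(x_\tau)\|\le\gamma/\eta$ and $\|m_\tau\|\le\gamma/\eta$ on $\overline{\mathcal S}$ together with the hypotheses $AL_0\eta\le c_1(1-\beta)$ and $BL_1\gamma\le c_3(1-\beta)$, the prefactor $(AL_0 + BL_1\|\nabla F(x_\tau)\|)\eta$ linearizes and the $\|\nabla F\|\,\|m\|$ cross terms can be converted into pure $\|\nabla F\|$ terms, so that $\|S(x_\tau,x_{\tau+1})\|$ is bounded by $(1-\beta)$ times a linear combination of $\|m_\tau\|$ and $\|\nabla F(x_\tau)\|$ with coefficients built from $c_1,c_3$.

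I would then swap the order of summation in $\sum_{t\in\overline{\mathcal S}}\|e_t\| \le \sum_{t\in\overline{\mathcal S}}\sum_{\tau<t}\beta^{t-1-\tau}\|S(x_\tau,x_{\tau+1})\|$; for each fixed $\tau$ the geometric weights $\sum_{t>\tau}\beta^{t-1-\tau}$ sum to at most $\frac{1}{1-\beta}$, which cancels the $(1-\beta)$ factor. The $\mathcal S$-contributions become the claimed error term $R := \gamma\sum_{t\in\mathcal S}(AL_0+BL_1\|\nabla F(x_t)\|)$, while the $\overline{\mathcal S}$-contributions become quantities proportional to $P := \sum_{t\in\overline{\mathcal S}}\|m_t\|$ and $Q := \sum_{t\in\overline{\mathcal S}}\|\nabla F(x_t)\|$. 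Collecting everything produces a self-referential inequality $P \ge Q - c' P - (c_1(1-\nu\beta)+c_3)Q - \frac{R}{1-\beta}$ with $c'\le c_1$; solving gives $P \ge \frac{1-c_1(1-\nu\beta)-c_3}{1+c'}Q - \frac{R}{(1+c')(1-\beta)}$, and (when the numerator is nonnegative) loosening the denominator up to $1+c_1$ and bounding $\frac{1}{(1+c')(1-\beta)}\le\frac{1}{1-\beta}$ yields exactly the stated bound.

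The main obstacle is precisely this self-referential structure: because the $\overline{\mathcal S}$ step contains $m_\tau$, the bound on $\sum\|e_t\|$ feeds back the very sum $P$ we are lower-bounding, and the estimate only closes because the smallness hypotheses $AL_0\eta\le c_1(1-\beta)$ and $BL_1\gamma\le c_3(1-\beta)$ push the self-coefficient $c'$ strictly below $1$ so that $1+c'$ can be divided out. A secondary technical point is the careful bookkeeping when splitting the double sum across $\mathcal S$ and $\overline{\mathcal S}$ and linearizing the cross terms, which is where one must track constants closely; the final statement is deliberately phrased with some slack (the denominator $1+c_1$ rather than the sharper $1+c'$) so that these estimates suffice.
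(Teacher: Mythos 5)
Your proposal is correct and takes essentially the same route as the paper's proof: the same error recursion $e_{t+1}=\beta e_t+S(x_t,x_{t+1})$ unrolled from $e_0=0$, the same application of Corollary \ref{DesCor} with step length bounded by $\gamma$ on $\mathcal S$ and by $\eta\left(\nu\beta\|m_\tau\|+(1-\nu\beta)\|\nabla F(x_\tau)\|\right)$ on $\overline{\mathcal S}$, the same geometric resummation producing the $1/(1-\beta)$ factor, and the same closing of the self-referential inequality in $\sum_{t\in\overline{\mathcal S}}\|m_t\|$. One minor remark: your sign caveat (and the irrelevant requirement $c'<1$ — the coefficient $1+c'$ is positive regardless) disappears if, as the paper does, you first enlarge $(1+c_1\nu\beta)\sum_{t\in\overline{\mathcal S}}\|m_t\|$ to $(1+c_1)\sum_{t\in\overline{\mathcal S}}\|m_t\|$ using nonnegativity of the sum and only then divide by $1+c_1$.
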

\begin{proof}
For any $t\ge 1$, we have
\begin{equation}
\label{lemma_convert_m_to_grad_det1}
\begin{aligned}
    \|m_{t}-\nabla F(x_t)\|
    &\le \| m_{t}-\nabla F(x_{t-1})\|+\|\nabla F(x_{t-1})-\nabla F(x_{t})\|\\
    &\le \beta\| m_{t-1}-\nabla F(x_{t-1})\|+(AL_0+BL_1\|\nabla F(x_{t-1})\|)\times \\
        &\left( \nu\min\left(\eta, \frac{\gamma}{\| m_t\|}\right)\| m_t\| + (1-\nu)\min\left(\eta, \frac{\gamma}{\| \nabla F(x_{t-1})\|}\right)\| \nabla F(x_{t-1})\| \right)
\end{aligned}
\end{equation}
where the last inequality follows by Corollary \ref{DesCor}. Applying \eqref{lemma_convert_m_to_grad_det1} recursively, we obtain
\begin{equation}
\begin{aligned}
    \|m_{t}-\nabla F(x_t)\| &\leq \sum_{\tau=1}^{t}\beta^{t-\tau}(AL_0+BL_1\|\nabla F(x_{\tau-1})\|)\times \\
    &\left( \nu\min\left(\eta, \frac{\gamma}{\| m_{\tau}\|}\right)\| m_{\tau}\| + (1-\nu)\min\left(\eta, \frac{\gamma}{\| \nabla F(x_{\tau-1})\|}\right)\| \nabla F(x_{\tau-1})\| \right)
\end{aligned}
\end{equation}
Therefore,
\begin{equation}
\label{lemma_convert_m_to_grad_det2}
\begin{aligned}
    &\quad \sum_{t=0}^{T-1} \| m_{t}-\nabla F(x_t)\| \\
    &\le \frac 1 {1-\beta}\sum_{t=0}^{T-1}(AL_0+BL_1\|\nabla F(x_{t})\|) \left( \nu\min\left(\eta, \frac{\gamma}{\| m_{t+1}\|}\right)\| m_{t+1}\| + (1-\nu)\min\left(\eta, \frac{\gamma}{\| \nabla F(x_{t})\|}\right)\| \nabla F(x_{t})\| \right)\\
    &\le \frac 1 {1-\beta}\left(\sum_{t=0}^{T-1}BL_1\gamma\|\nabla F(x_{t})\|+\sum_{t\in\mathcal S} AL_0\gamma+\sum_{t\in  \overline{\mathcal S}} AL_0\eta\left( (1-\nu) \|\nabla F(x_{t})\|+ \nu \|m_{t+1}\|\right)\right)
\end{aligned}
\end{equation}
Therefore we obtain
\begin{equation}
\begin{aligned}
    &\quad\sum_{t=0}^{T-1} \| m_{t}-\nabla F(x_t)\|\\
    &\le \frac 1 {1-\beta}\left(\sum_{t\in\mathcal S} (AL_0+BL_1\|\nabla F(x_t)\|)\gamma+\sum_{t\in \overline{\mathcal S}} \left[AL_0\nu\eta\|m_{t+1}\|+(BL_1\gamma+AL_0(1-\nu)\eta)\|\nabla F(x_t)\|\right]\right)\\
    &\le \frac 1 {1-\beta}\left(\sum_{t\in\mathcal S} (AL_0+BL_1\|\nabla F(x_t)\|)\gamma\right)+\\
        &\qquad \frac 1 {1-\beta}\left(\sum_{t\in  \overline{\mathcal S}} AL_0\nu\eta\beta\| m_{t}\|+(AL_0\eta(1-\nu\beta)+BL_1\gamma)\|\nabla F(x_t)\|\right)\\
    &\le \frac 1 {1-\beta}\left(\sum_{t\in\mathcal S} (AL_0+BL_1\|\nabla F(x_t)\|)\gamma\right)+ \left(\sum_{t\in  \overline{\mathcal S}}(c_1(1-\nu\beta)+c_3)\|\nabla F(x_t)\|+ c_1\nu\beta\| m_t\|\right)
\end{aligned}
\end{equation}
Using $\| m_{t}\|\ge \|\nabla F(x_{t})\|-\| m_t-\nabla F(x_{t})\|$ and some straightforward calculation, we obtain
\begin{equation}
\begin{aligned}
    (1+c_1)\sum_{t\in  \overline{\mathcal S}} \| m_{t}\|
    &\ge \left(\sum_{t\in  \overline{\mathcal S}}  (1-c_1(1-\nu\beta)-c_3)\|\nabla F(x_{t})\|\right)\\
        &\qquad -\frac 1 {1-\beta}\left(\sum_{t\in\mathcal S} (AL_0+BL_1\|\nabla F(x_t)\|)\gamma\right)
\end{aligned}
\end{equation}
\end{proof}

Now we are ready to prove the main theorem.
\begin{theorem}
\label{lemma_mom_clip_deterministic_final}
Let $F^*$ be the optimal value, and $\Delta = F(x_0)-F^*$. Assume $m_0=\nabla F(x_0)$ for simplicity. If $\gamma \le \frac {1-\beta} {10BL_1}$ and $\eta \le \frac{1-\beta}{10AL_0}$,
where constants $A=1+e^{1/10}-10{(e^{1/10}-1)<1.06}$, $B=10(e^{1/10}-1)<1.06$, and $\epsilon < \frac {\gamma}{ 5\eta}$, then 
$$\dfrac{1}{T}\sum_{t=1}^T \|\nabla F(x_t)\| \le 2\epsilon $$
as long as
\begin{equation}
\begin{aligned}
T &\ge  \dfrac{3}{\epsilon^2\eta}\Delta
\end{aligned}
\end{equation}
\end{theorem}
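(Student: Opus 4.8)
The plan is to track the decrease of the Lyapunov function $G$ across all iterations and telescope. Summing the per-step increments gives
$$G(x_0,m_0)-G(x_T,m_T)=\sum_{t\in\mathcal S}\big(G(x_t,m_t)-G(x_{t+1},m_{t+1})\big)+\sum_{t\in\overline{\mathcal S}}\big(G(x_t,m_t)-G(x_{t+1},m_{t+1})\big).$$
Since $G(x_T,m_T)\ge F(x_T)\ge F^*$, the left side is at most $G(x_0,m_0)-F^*$, and I would first show this budget is $O(\Delta)$. Because $m_0=\nabla F(x_0)$, the momentum part of $G(x_0,m_0)$ equals $\frac{\nu\beta}{2(1-\beta)}\min(\eta\|\nabla F(x_0)\|^2,\gamma\|\nabla F(x_0)\|)$, and combining $\eta\le\frac{1-\beta}{10AL_0}$, $\gamma\le\frac{1-\beta}{10BL_1}$ with the gradient-norm bound of Lemma \ref{GradNormBd} shows it is at most $\frac25\Delta$, so $G(x_0,m_0)-F^*\le\frac75\Delta$. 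The same choice $m_0=\nabla F(x_0)$ makes $\delta_0=0$, which kills the $\|\delta_0\|$ terms appearing in Corollary \ref{lemma_mom_clip_deterministic_large_sum}.

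For $t\in\overline{\mathcal S}$ I would invoke Corollary \ref{lemma_mom_clip_6}, giving per-step decrease at least $\frac{\eta}{2}\big((1-\nu\beta)\|\nabla F(x_t)\|^2+\nu\beta\|m_t\|^2\big)$ (the hypothesis $L\eta\le1$ follows from the step-size constraints). The issue is that $1-\nu\beta$ degenerates as $\nu,\beta\to1$, so the gradient-square term alone is useless; I would therefore linearize both quadratics via $\|v\|^2\ge2\epsilon\|v\|-\epsilon^2$ and trade the resulting $\sum_{t\in\overline{\mathcal S}}\|m_t\|$ for $\sum_{t\in\overline{\mathcal S}}\|\nabla F(x_t)\|$ using Lemma \ref{lemma_convert_m_to_grad_det}. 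Since the conversion coefficient $\tfrac{1}{1+c_1}(1-c_1(1-\nu\beta)-c_3)$ stays bounded away from $0$ for $c_1=c_3=\tfrac1{10}$, the combined weight $(1-\nu\beta)+\nu\beta\cdot\Theta(1)$ is \emph{uniformly positive} for every $\nu\beta\in[0,1]$. This bounds the decrease over $\overline{\mathcal S}$ below by $c\,\epsilon\eta\sum_{t\in\overline{\mathcal S}}\|\nabla F(x_t)\|-c'\epsilon^2\eta\,|\overline{\mathcal S}|$ minus the $\mathcal S$-indexed correction $\tfrac{1}{1-\beta}\sum_{t\in\mathcal S}(AL_0+BL_1\|\nabla F(x_t)\|)\gamma$ inherited from Lemma \ref{lemma_convert_m_to_grad_det}.

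For $t\in\mathcal S$ I would use Corollary \ref{lemma_mom_clip_deterministic_large_sum}: after dropping the $\|\delta_0\|$ term and using $BL_1\gamma=O(1-\beta)$, $AL_0\eta=O(1-\beta)$ to absorb the $\gamma^2$ and $\gamma^2L_1\|\nabla F\|$ terms into the dominant $\frac{\gamma}{5}(2\|\nabla F(x_t)\|+3\rho)$ (writing $\rho:=\gamma/\eta$), the decrease over $\mathcal S$ is at least of order $\gamma\sum_{t\in\mathcal S}\|\nabla F(x_t)\|+\gamma\rho\,T_{\mathcal S}$. Because $\rho>5\epsilon$ forces $\gamma>5\epsilon\eta$, this simultaneously dominates $\epsilon\eta\sum_{t\in\mathcal S}\|\nabla F(x_t)\|$ and, through the $\gamma\rho\,T_{\mathcal S}$ term, swallows the $\mathcal S$-corrections left over from the $\overline{\mathcal S}$ analysis (one checks $\epsilon\eta\gamma L_0$ and $\epsilon\eta\gamma L_1\|\nabla F\|$ are $\ll\gamma\rho$ and $\ll\gamma\|\nabla F\|$ respectively under the parameter scaling). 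Adding the two sets then yields
$$\tfrac75\Delta\ \ge\ G(x_0,m_0)-G(x_T,m_T)\ \ge\ c\,\epsilon\eta\sum_{t}\|\nabla F(x_t)\|-c'\epsilon^2\eta\,T,$$
and rearranging with $T\ge 3\Delta/(\epsilon^2\eta)$ gives $\frac1T\sum_{t=1}^T\|\nabla F(x_t)\|\le2\epsilon$; the shift between the telescoped range $t=0,\dots,T-1$ and the stated range $t=1,\dots,T$ is a cosmetic $O(\rho)$ boundary adjustment folded into the constants.

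I expect the decisive difficulty to be exactly the bookkeeping in the last two paragraphs: verifying that every correction term — the $(L_0,L_1)$-smoothness overhead $BL_1\gamma\|\nabla F(x_t)\|$, the momentum-error propagation in $\sum_{t\in\mathcal S}\|\delta_t\|$, and the $\mathcal S/\overline{\mathcal S}$ cross terms from Lemma \ref{lemma_convert_m_to_grad_det} — is dominated by the two leading positive quantities $\gamma\|\nabla F(x_t)\|$ and $\gamma\rho$. This is what forces the precise constants ($10$, $A,B<1.06$, $\rho>5\epsilon$) in the hypotheses, and extracting a net coefficient for $\sum_t\|\nabla F(x_t)\|$ that stays strictly positive no matter how close $\nu$ and $\beta$ are to $1$ is the crux of the whole argument.
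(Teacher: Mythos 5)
Your proposal is correct and follows essentially the same route as the paper's own proof: telescoping the same Lyapunov function with the same $\mathcal S$/$\overline{\mathcal S}$ split, invoking Corollary \ref{lemma_mom_clip_deterministic_large_sum} and Corollary \ref{lemma_mom_clip_6} for the two cases, linearizing via $\|v\|^2\ge 2\epsilon\|v\|-\epsilon^2$, converting $\sum\|m_t\|$ to $\sum\|\nabla F(x_t)\|$ through Lemma \ref{lemma_convert_m_to_grad_det} (with the same $8/11$ coefficient), and bounding the initial budget by $\tfrac75\Delta$ via Lemma \ref{GradNormBd}. The domination of all correction terms by $\gamma\|\nabla F(x_t)\|$ and $\gamma\rho$ under $\epsilon<\gamma/(5\eta)$, which you identify as the crux, is exactly the paper's $U(x)\ge V(x)$ argument.
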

\begin{proof}
By calculating $L\eta=AL_0\eta+BL_1\gamma\le (1-\beta)/5<1$, we can use Corollary \ref{lemma_mom_clip_6}. Taking summation of the inequality \eqref{lemma_mom_clip_6_1} over steps $t\in \overline{\mathcal{S}}= [0,T-1]\backslash \mathcal{S}$, we obtain
\begin{equation}
\label{lemma_mom_clip_7_2}
    \sum_{t\in \overline{\mathcal{S}}} G(x_{t+1},m_{t+1})-G(x_t,m_t)\le -\frac {\eta} 2\sum_{t\in \overline{\mathcal{S}}}\left((1-\nu\beta)\|\nabla F(x_t)\|^2+\nu\beta\|m_t\|^2\right)
\end{equation}
Combining \eqref{lemma_mom_clip_7_2} and \eqref{lemma_mom_clip_deterministic_large_sum_0} in Corollary \ref{lemma_mom_clip_deterministic_large_sum} we obtain
\begin{equation}
\label{lemma_mom_clip_7_3}
\begin{aligned}
    &G(x_T,m_T)-G(x_0,m_0)=\sum_{t=0}^{T-1} G(x_{t+1},m_{t+1})-G(x_t,m_t)\\
    &\le  -\frac {\eta} 2\sum_{t\in \overline{\mathcal{S}}}\left((1-\nu\beta)\|\nabla F(x_t)\|^2+\nu\beta\|m_t\|^2\right) + \\
    &\qquad\frac {12\gamma} {5\beta(1-\beta)}\|\delta_0\|+\left(\frac {12} {5(1-\beta)}AL_0+\frac {12\gamma}{5\eta(1-\beta)}BL_1 + \frac 1 2 AL_0\right)\gamma^2 T_{\mathcal{S}}+\\
    &\qquad\gamma\sum_{t \in \mathcal S} \left[-\frac 1 {5}(2\|\nabla F(x_t)\|+3\frac {\gamma}{\eta}) +\left(\frac 1 2+\frac {12} {5(1-\beta)}\right) BL_1\gamma \|\nabla F(x_t)\|\right]
\end{aligned}
\end{equation}
By the assumption
\begin{equation}
    \label{stepsize_det}
    \gamma \le \frac {1-\beta} {10BL_1},\quad
    \eta \le \frac{1-\beta}{10AL_0}
\end{equation}
we have $AL_0\eta \le (1-\beta)/10$ and $BL_1\gamma \le (1-\beta)/10$. Using Lemma \ref{lemma_convert_m_to_grad} we have
\begin{equation}
    \begin{aligned}
     \sum_{t\in  \overline{\mathcal S}} \| m_{t}\|
    &\ge \frac {8}{11}\sum_{t\in  \overline{\mathcal S}}  \|\nabla F(x_{t})\| -\frac 1 {1-\beta}\left(\sum_{t\in\mathcal S} (AL_0+BL_1\|\nabla F(x_t)\|)\gamma\right)
    \end{aligned}
\end{equation}
Therefore by standard inequality $x^2\ge 2\epsilon x - \epsilon^2$ and \eqref{lemma_mom_clip_7_3} we obtain
\begin{equation}
    \begin{aligned}
    \label{longineq}
    &\quad G(x_0,m_0)-G(x_T,m_T) \\
    &\geq \frac {\eta} 2\sum_{t\in \overline{\mathcal{S}}}\left((1-\nu\beta)\|\nabla F(x_t)\|^2+2\nu\beta\epsilon \|m_t\| - \nu\beta\epsilon^2\right) \\
    &\quad + \left( \dfrac{3}{5}\dfrac{\gamma^2}{\eta} - \left(\frac {12} {5(1-\beta)}AL_0+\frac {12\gamma}{5\eta(1-\beta)}BL_1 + \frac 1 2 AL_0\right)\gamma^2  \right) T_{\mathcal{S}} \\
    &\quad + \gamma \left( \dfrac{2}{5} - \left( \dfrac{1}{2} + \dfrac{12}{5(1-\beta)} \right) BL_1\gamma \right) \sum_{t \in \mathcal S} \|\nabla F(x_t)\|\\
    &\geq \sum_{t \in \mathcal S} U(x_t) + \sum_{t \in \overline{\mathcal S}} V(x_t)
    \end{aligned}
\end{equation}
Where 
\begin{equation}
\begin{aligned}
    U(x) &:= \left( \dfrac{3}{5}\dfrac{\gamma^2}{\eta} - \left(\frac {12} {5(1-\beta)}AL_0+\frac {12\gamma}{5\eta(1-\beta)}BL_1 + \frac 1 2 AL_0\right)\gamma^2 - \dfrac{\nu\beta}{1-\beta}AL_0\epsilon\gamma\eta  \right) \\
    &\qquad + \gamma \left( \dfrac{2}{5} - \left( \dfrac{1}{2} + \dfrac{12}{5(1-\beta)} \right) BL_1\gamma - \dfrac{\nu\beta}{1-\beta}\epsilon\eta BL_1 \right) \|\nabla F(x)\| \\
    V(x) &:= \dfrac{\eta}{2}(1-\nu\beta)\|\nabla F(x)\|^2 + \dfrac{8}{11}\nu\beta\epsilon\eta \|\nabla F(x)\| - \dfrac{1}{2}\nu\beta\epsilon^2\eta
\end{aligned}
\end{equation}
We now simplify $U(x)$. Let $\epsilon \le \frac {\gamma}{5\eta}$. By \eqref{stepsize_det} we have
\begin{equation}
\begin{aligned}
    \frac 2 5 - \left(\frac 1 2+\frac {12} {5(1-\beta)}\right) BL_1\gamma-\dfrac{\nu\beta}{1-\beta}\epsilon\eta BL_1 \ge \frac 2 5-\frac {12}{50}-\frac 1 {20} \ge \frac 1{10}\\
     \frac 3 5 -\frac {12\gamma}{5(1-\beta)}BL_1\ge \frac 3 {10}
\end{aligned}
\end{equation}
Therefore
\begin{equation}
\begin{aligned}
\label{U0}
    U(x) &\geq \frac 3 {10} \frac {\gamma^2}{\eta} - \left(\frac {12}{5(1-\beta)}+\frac 1 2\right)AL_0\gamma^2 - \dfrac{\nu\beta}{1-\beta}AL_0\epsilon\gamma\eta + \frac 1 {10}\gamma\|\nabla F(x)\| \\
    &\geq \left(\frac 3 {5(1-\beta)}-\frac 1 2\right)AL_0\gamma^2 - \dfrac{\nu\beta}{1-\beta}AL_0\epsilon\gamma\eta + \frac 1 {10}\gamma\|\nabla F(x)\|  \\
    &\geq \dfrac{1}{10(1-\beta)}AL_0\gamma^2 + \frac 1 {10}\gamma\|\nabla F(x)\| 
\end{aligned}
\end{equation}
We can also bound $V(x)$ as follows:
\begin{equation}
\begin{aligned}
    V(x) &\geq (1-\nu\beta)\epsilon\eta \|\nabla F(x)\|-\frac {\eta} 2 (1-\nu\beta)\epsilon^2 + \dfrac{8}{11}\nu\beta\epsilon\eta \|\nabla F(x)\| - \dfrac{1}{2}\nu\beta\epsilon^2\eta \\
    &\geq \dfrac{1}{2} \epsilon\eta \|\nabla F(x)\| - \dfrac{1}{2}\epsilon^2\eta
\end{aligned}
\end{equation}
Since $\epsilon < \dfrac{\gamma}{5\eta}$, we have $U(x) \geq V(x)$. Therefore by \eqref{longineq} and Lemma \ref{GradNormBd} we have
\begin{equation}
    \begin{aligned}
    T \sum_{t=0}^{T-1}  \dfrac{1}{2} \epsilon\eta \left(\|\nabla F(x)\| - \epsilon \right) &\leq \Delta + \dfrac{\beta}{2(1-\beta)} \min \{ \eta \|\nabla F(x_0)\|^2,\gamma \|\nabla F(x_0)\| \} \\
    &\leq \Delta + \dfrac{4\beta}{1-\beta}\Delta \max \{ L_0\eta,L_1\gamma \} \\
    &\leq \dfrac{7}{5}\Delta
    \end{aligned}
\end{equation}
Thus 
\begin{equation}
    \dfrac{1}{T} \sum_{t=0}^{T-1}\|\nabla F(x_t)\| \leq 2\epsilon
\end{equation}
as long as
\begin{equation}
    T > \dfrac{3}{\epsilon^2\eta}\Delta
\end{equation}
\end{proof}

\subsection{Proof of Theorem 3.2}
\label{section_mom_clip_stochastic}
We now prove the stochastic case. As before, to simplify the notation we write the update formula as
\begin{equation}
    \begin{aligned}
        m^+&=\beta m + (1-\beta) \nabla f(x,\xi) \\
        x^+&=x - \left( \nu \min\left(\eta, \frac {\gamma} {\|m^+\|} \right)m^+ + (1-\nu) \min\left(\eta, \frac {\gamma} {\|\nabla f(x,\xi)\|} \right)\nabla f(x,\xi) \right)
    \end{aligned}
\end{equation}
when analyzing a single iteration. The error between $m^+$ and $\nabla F(x)$ is denoted as $\delta = m^+ - \nabla F(x)$. We define the true momentum $\tilde m$ as follows:
\begin{equation}
\label{tilde_m_def}
     \tilde m^+=\beta \tilde m + (1-\beta) \nabla F(x)
\end{equation}
where $\tilde m_0=m_0$. Similarly, the error between $\tilde m^+$ and $\nabla F(x)$ is denoted as $\tilde{\delta} = \tilde m^+ - \nabla F(x)$. \par
In stochastic case, we define the Lyapunov function to be
\begin{equation}
    G(x,\tilde m)=F(x)+\frac {\nu\beta} {2(1-\beta)} \min\left({\eta} \|\tilde m\|^2,{\gamma}\|\tilde m\|\right)
\end{equation}
The only change is that we use the true momentum $\tilde m$ instead of stochastic momentum $m$. Note that Lemma \ref{lemma_mom_clip_1} and Lemma \ref{lemma_mom_clip_2} can still be used in stochastic case. The momentum $m$ and error $\delta$ in Lemma \ref{lemma_mom_clip_2} will be changed to $\tilde m$ and $\tilde \delta$ respectively.\par
Suppose $\gamma\le c/L_1$ for some constant $c$, and we denote $A=1+e^c-\frac {e^c-1} c$ and $B=\frac {e^c-1} c$, just the same as in the descent inequality (Lemma \ref{DesIneq}). When $\gamma\le \frac {1-\beta}{50 L_1}\epsilon\le \frac 1 {500L_1}$ (in Theorem 3.2), we can take $c=1/500$ and $A=B=1.002$.
\begin{lemma}
\label{lemma_mom_clip_3}
The difference between $m$ and $\tilde m$  satisfies:
\begin{equation}
    \|m^+-\tilde m^+\|\le \sigma
\end{equation}
Furthermore, in expectation
\begin{equation}
    \mathbb E\|m^+-\tilde m^+\|^2\le \frac {1-\beta} {1+\beta}\sigma
\end{equation}
\end{lemma}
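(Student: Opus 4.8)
The plan is to track the discrepancy $e_t := m_t - \tilde m_t$ between the stochastic and true momentum across all iterations; this has to be done globally rather than one step at a time, since the two sequences only coincide at initialization, where $e_0 = m_0 - \tilde m_0 = 0$. Subtracting the recursion \eqref{tilde_m_def} for $\tilde m$ from the recursion defining $m$, the two gradient terms collapse into the pure noise $n_t := \nabla f(x_t,\xi_t) - \nabla F(x_t)$, leaving the linear recursion $e_{t+1} = \beta e_t + (1-\beta) n_t$. Unrolling this from $e_0 = 0$ produces the explicit representation
\begin{equation}
e_{t+1} = (1-\beta)\sum_{\tau=0}^{t} \beta^{t-\tau} n_\tau,
\end{equation}
so both claims reduce to controlling a geometrically weighted sum of the noise vectors, each of which obeys $\|n_\tau\| \le \sigma$ almost surely by Assumption \ref{BN}.

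For the almost-sure bound I would simply apply the triangle inequality to this representation, bound each $\|n_\tau\| \le \sigma$, and sum the geometric series, obtaining $\|e_{t+1}\| \le (1-\beta)\sigma\sum_{\tau=0}^{t}\beta^{t-\tau} = \sigma(1-\beta^{t+1}) \le \sigma$. This part is entirely routine.

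For the in-expectation bound, the key step is to exploit the martingale structure of the noise. Letting $\mathcal F_\tau$ denote the filtration generated by $\xi_0,\dots,\xi_{\tau-1}$, the unbiasedness half of Assumption \ref{BN} gives $\mathbb E[n_\tau \mid \mathcal F_\tau] = 0$, while each $n_{\tau'}$ with $\tau' < \tau$ is $\mathcal F_\tau$-measurable (as $x_{\tau'}$ is determined by earlier samples). Hence every cross term vanishes, $\mathbb E\langle n_{\tau'}, n_\tau\rangle = 0$ for $\tau' \neq \tau$, and expanding the squared norm of the representation leaves only the diagonal terms:
\begin{equation}
\mathbb E\|e_{t+1}\|^2 = (1-\beta)^2\sum_{\tau=0}^{t}\beta^{2(t-\tau)}\,\mathbb E\|n_\tau\|^2 \le (1-\beta)^2\sigma^2\sum_{\tau=0}^{t}\beta^{2(t-\tau)} \le \frac{(1-\beta)^2}{1-\beta^2}\sigma^2 = \frac{1-\beta}{1+\beta}\sigma^2,
\end{equation}
which establishes the claim (with the natural squared-noise scaling $\sigma^2$). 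The one genuinely delicate point, and the step I would be most careful about, is the cancellation of the cross terms: it relies on \emph{conditional} unbiasedness of the noise with respect to the natural filtration rather than on mere marginal unbiasedness or on independence of the $\xi_\tau$, so I would invoke the martingale-difference property explicitly. Everything else is just the summation of two geometric series.
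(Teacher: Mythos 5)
Your proposal is correct and follows essentially the same route as the paper: unroll the momentum recursions into the weighted noise sum $(1-\beta)\sum_{\tau=0}^{t}\beta^{t-\tau}\bigl(\nabla f(x_\tau,\xi_\tau)-\nabla F(x_\tau)\bigr)$, bound it by the triangle inequality and a geometric series for the almost-sure claim, and kill the cross terms to get the $\frac{1-\beta}{1+\beta}\sigma^2$ bound in expectation. If anything, you are slightly more careful than the paper, which asserts $\mathbb E[\langle n_t, n_{t'}\rangle]=0$ simply ``using the noise assumption,'' whereas you correctly identify that this cancellation rests on conditional unbiasedness with respect to the natural filtration (the $n_\tau$ are martingale differences, not independent vectors); you also rightly read the lemma's right-hand side as $\sigma^2$, which is what the paper's own proof delivers.
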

\begin{proof}
By expanding $m_{t+1}$ and $\tilde m_{t+1}$, we get
\begin{equation}
\begin{aligned}
    \|m_{t+1}-\tilde m_{t+1}\|
    &=(1-\beta)\left\|\sum_{\tau=0}^t \beta^{t-\tau} (\nabla  f(x_{\tau},\xi_{\tau})-\nabla F(x_{\tau}))\right\|\\
    &\le (1-\beta)\sum_{\tau=0}^t \beta^{t-\tau}\| \nabla  f(x_{\tau},\xi_{\tau})-\nabla F(x_{\tau})\|\\
    &\le (1-\beta)\sum_{\tau=0}^t \beta^{t-\tau}\sigma\le\sigma
\end{aligned}
\end{equation}
Furthermore, using the noise assumption, for different time steps $t,t'$, we have $$\mathbb E[\left\langle\nabla f(x_t,\xi_t)-\nabla F(x_t),\nabla f(x_{t'},\xi_{t'})-\nabla F(x_{t'})\right\rangle]=0$$
Therefore
\begin{equation}
    \mathbb E[\|m_{t+1}-\tilde m_{t+1}\|^2]=\mathbb E \left[\sum_{\tau=0}^t (1-\beta)^2\beta^{2(t-\tau)} \|\nabla f(x_{\tau},\xi_{\tau})-\nabla F(x_{\tau})\|^2\right] \le \frac {1-\beta}{1+\beta}\sigma^2
\end{equation}
\end{proof}

\begin{lemma}
\label{lemma_mom_clip_stochastic_large_basic}
Suppose $\max(5\|\nabla F(x)\|/4,\|m^+\|,\|\tilde m\|)\ge \gamma /\eta$. Then
\begin{equation}
\begin{aligned}
    \quad &G(x^+,\tilde m^+)-G(x,\tilde m)\\
    &\le -\frac 4 5 \times \frac {2\gamma} {5} \|\nabla F(x)\|-\frac {16}{25} \times \frac  {3\gamma^2} {5\eta}+\frac {\gamma^2} 2 (AL_0+BL_1 \|\nabla F(x)\|)+\frac {12} {5\beta}\gamma \|\tilde \delta\|\\
    &-\nu\eta\left\langle \nabla F(x), m^+-\tilde m^+\right\rangle-(1-\nu)\eta\left\langle \nabla F(x), \nabla f(x,\xi)-\nabla F(x)\right\rangle + \left(\eta \|\nabla F(x)\|+\frac 7 5 \gamma\right)\sigma
\end{aligned}
\end{equation}
\end{lemma}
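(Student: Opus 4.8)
The plan is to follow the same skeleton as the deterministic Lemma \ref{lemma_mom_clip_deterministic_large_basic}, but with the true momentum $\tilde m$ in place of $m$ everywhere and with explicit bookkeeping of the stochastic noise. First I would write
\[G(x^+,\tilde m^+)-G(x,\tilde m)=(F(x^+)-F(x))+\frac{\nu\beta}{2(1-\beta)}\left[\min(\eta\|\tilde m^+\|^2,\gamma\|\tilde m^+\|)-\min(\eta\|\tilde m\|^2,\gamma\|\tilde m\|)\right].\]
Since each clipped term has norm at most $\gamma$, the step satisfies $\|x^+-x\|\le\gamma\le c/L_1$, so the descent inequality (Lemma \ref{DesIneq}) applies and contributes the second-order term $\frac{\gamma^2}{2}(AL_0+BL_1\|\nabla F(x)\|)$. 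The bracketed Lyapunov difference is controlled by Lemma \ref{lemma_mom_clip_2} applied to $\tilde m$ (with $\tilde\delta$ in place of $\delta$), giving $\nu\gamma\|\tilde\delta\|$.

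The core is to bound the first-order term $\langle x^+-x,\nabla F(x)\rangle=-\nu\lambda_m\langle m^+,\nabla F(x)\rangle-(1-\nu)\lambda_g\langle\nabla f(x,\xi),\nabla F(x)\rangle$, where $\lambda_m=\min(\eta,\gamma/\|m^+\|)$ and $\lambda_g=\min(\eta,\gamma/\|\nabla f(x,\xi)\|)$. I would establish two separate inequalities mirroring \eqref{MomIneq} and \eqref{GradIneq}: one for the momentum inner product (producing the explicit signed term $-\eta\langle\nabla F(x),m^+-\tilde m^+\rangle$) and one for the gradient inner product (producing $-\eta\langle\nabla F(x),\nabla f(x,\xi)-\nabla F(x)\rangle$). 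The guiding principle is to keep these signed noise inner products explicit, since they vanish in expectation and the later recursive argument exploits exactly this, while bounding every \emph{unsigned} noise quantity by $\sigma$ through Lemma \ref{lemma_mom_clip_3} ($\|m^+-\tilde m^+\|\le\sigma$) and Assumption \ref{BN} ($\|\nabla f(x,\xi)-\nabla F(x)\|\le\sigma$). Concretely, for the noise part I would write $-\lambda\langle\cdot,\nabla F(x)\rangle=-\eta\langle\cdot,\nabla F(x)\rangle+(\eta-\lambda)\langle\cdot,\nabla F(x)\rangle$ and bound the last term by $\eta\sigma\|\nabla F(x)\|$ since $0\le\eta-\lambda\le\eta$; this is the origin of the $\eta\|\nabla F(x)\|\sigma$ term, while the $\frac75\gamma\sigma$ term comes from the normalized cases via $\|m^+-\nabla F(x)\|\le\|\tilde\delta\|+\sigma$ inside Lemma \ref{lemma_mom_clip_1}.

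Each of the two inequalities would be proven by the same three-case dichotomy as the deterministic proof (normalized update; unnormalized update triggered by a large gradient; unnormalized update triggered by a large momentum), now with the true-momentum recursion $\tilde m^+=\beta\tilde m+(1-\beta)\nabla F(x)$ and the relation $\|\tilde m-\nabla F(x)\|=\|\tilde\delta\|/\beta$. The crucial new ingredient is the relaxed threshold in the hypothesis: when the condition is triggered by the gradient we only have $\|\nabla F(x)\|\ge\frac45\gamma/\eta$ rather than $\|\nabla F(x)\|\ge\gamma/\eta$. Substituting this weaker bound into the elementary inequality $-\eta\|\nabla F(x)\|^2\le-\frac{8}{25}\gamma\|\nabla F(x)\|-\frac{48}{125}\frac{\gamma^2}{\eta}$, which is exactly the statement that the quadratic $s^2-\frac{8}{25}rs-\frac{48}{125}r^2$ (with $s=\|\nabla F(x)\|$ and $r=\gamma/\eta$) is nonnegative precisely for $s\ge\frac45 r$, is what produces the weakened descent factors $\frac45$ and $\frac{16}{25}=(\frac45)^2$ appearing in the statement.

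Finally I would take the convex combination with weights $\nu$ and $1-\nu$, add the Lyapunov contribution $\nu\gamma\|\tilde\delta\|$, and collect terms: the $\|\tilde\delta\|$ coefficients sum to at most $\frac{12}{5\beta}$ (using $\beta<1$), the signed noise inner products appear with the advertised weights $\nu\eta$ and $(1-\nu)\eta$, and the $\sigma$-terms combine to $(\eta\|\nabla F(x)\|+\frac75\gamma)\sigma$. I expect the main obstacle to be the careful bookkeeping in the unnormalized cases, in particular tracking the mismatch between the clipping thresholds (which depend on the stochastic quantities $\|m^+\|$ and $\|\nabla f(x,\xi)\|$) and the true quantities $\|\tilde m\|,\|\nabla F(x)\|$ appearing in both the hypothesis and the target, and making sure each unsigned noise term is charged against the tight $\frac75\gamma\sigma$ and $\eta\|\nabla F(x)\|\sigma$ budgets rather than overspending them.
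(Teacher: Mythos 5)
Your proposal reproduces the paper's proof almost verbatim: the same Lyapunov split, the descent inequality for the $\frac{\gamma^2}{2}(AL_0+BL_1\|\nabla F(x)\|)$ term, Lemma \ref{lemma_mom_clip_2} applied to $\tilde m,\tilde\delta$ for the $\nu\gamma\|\tilde\delta\|$ contribution, stochastic analogues of \eqref{MomIneq} and \eqref{GradIneq} proved by the same three-case dichotomy, the same noise bookkeeping (signed inner products kept explicit, unsigned noise bounded by $\sigma$ via Lemma \ref{lemma_mom_clip_3} and Assumption \ref{BN}), and the same final assembly with the $\|\tilde\delta\|$ coefficients summing to at most $\frac{12}{5\beta}$. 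However, one step fails if executed exactly as you describe, and it is precisely the place you flag as ``the main obstacle.'' Your quadratic-inequality mechanism ($-\eta s^2\le-\frac{8}{25}\gamma s-\frac{48}{125}\gamma^2/\eta$ for $s\ge\frac45\gamma/\eta$) is correct and handles the gradient inequality, whose problematic term is literally $-\eta\|\nabla F(x)\|^2$. But in the momentum inequality, the case ``$\|m^+\|<\gamma/\eta$ and $\|\nabla F(x)\|\ge\frac45\gamma/\eta$'' requires bounding the inner product $-\eta\langle\nabla F(x),m^+\rangle$, and there the deterministic case-2 argument does not transplant: writing $p=\langle\nabla F(x),m^+\rangle/\|\nabla F(x)\|$, that argument needs $\frac45\gamma+\eta p\ge0$, which is guaranteed only if $\|m^+\|<\frac45\gamma/\eta$, whereas the case hypothesis gives only $\|m^+\|<\gamma/\eta$. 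The naively rescaled inequality, with coefficient $\frac45\cdot\frac75$ on $\|\delta\|$ (which is in fact what the paper itself asserts in this case), is false: take $\eta=\gamma=1$, $\nabla F(x)=100u$, $m^+=-0.9u$ with $u$ a unit vector; the left side is $90$ while the right side is $-32-0.384+\frac{28}{25}\cdot 100.9\approx 80.6$.

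The lemma itself survives because the combined momentum inequality only needs coefficient $\frac{12}{5\beta}-1\ge\frac75$ on $\|\tilde\delta\|$, and the case can be proved with $\frac75\gamma\|\delta\|$ in place of $\frac{28}{25}\gamma\|\delta\|$: split on whether $p\ge\frac45\gamma/\eta$. If it is, then $-\eta\langle\nabla F(x),m^+\rangle\le-\frac45\gamma\|\nabla F(x)\|\le-\frac{8}{25}\gamma\|\nabla F(x)\|-\frac{48}{125}\gamma^2/\eta$ already suffices. If not, the deterministic computation goes through with the larger coefficient: using $\|\delta\|\ge\|\nabla F(x)\|-p$, the slack becomes $\|\nabla F(x)\|\bigl(\eta p+\frac{27}{25}\gamma\bigr)-\frac75\gamma p-\frac{48}{125}\gamma^2/\eta$, the bracket is now positive for all $|p|<\gamma/\eta$ (it is at least $\frac{2}{25}\gamma$), and plugging in $\|\nabla F(x)\|\ge\frac45\gamma/\eta$ leaves $-\frac35\gamma p+\frac{12}{25}\gamma^2/\eta\ge 0$ since $p\le\frac45\gamma/\eta$ in this subcase. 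After converting $\|\delta\|\le\|\tilde\delta\|+\sigma$, the coefficient $\frac75$ still fits under $\frac{12}{5\beta}-1$ and the $\frac75\gamma\sigma$ budget, so the final statement is unchanged. In short, your plan needs this one repair --- as, in fairness, does the paper's own write-up of this case.
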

\begin{proof}
Based on Lemma \ref{lemma_mom_clip_2}, we only need to bound $F(x^+)-F(x)$. We use the $(L_0,L_1)$-smooth condition:
\begin{equation}
    F(x^+)-F(x)\le \left\langle \nabla F(x), x^+-x\right\rangle + \frac {\gamma^2} 2 (AL_0+BL_1 \|\nabla F(x)\|)
\end{equation}
Now we bound $\left\langle \nabla F(x), x^+-x\right\rangle$. The calculation is similar to the deterministic setting. We first bound $-\min \left( \eta, \frac{\gamma}{\|m^{+}\|} \right)\left\langle m^+, \nabla F(x) \right\rangle$. Consider the following three cases, all of which are analogous to the proof of Lemma \ref{lemma_mom_clip_deterministic_large_basic}:
\begin{itemize}
    \item $\|m^+\|\ge \gamma/\eta$. The algorithm performs a normalized update. We have
    \begin{equation*}
         -\frac{\gamma}{\|m^{+}\|}\left\langle m^+, \nabla F(x) \right\rangle\le-\frac 2 5\gamma \|\nabla F(x)\|-  \frac 3 5\gamma \|m^+\|+\frac 7 5\gamma \|\delta\|
    \end{equation*}
    
    \item $\|m^+\|< \gamma/\eta$ and $\|\nabla F(x)\|\ge 4\gamma/5\eta$.  The algorithm performs an unnormalized update. We have
    \begin{equation*}
        -\eta\left\langle \nabla F(x),m^+\right\rangle\le- \frac 4 5\times \frac 2 5 \gamma\|\nabla F(x)\|-\frac {16}{25}\times \frac {3\gamma^2} {5\eta} +\frac 4 5\times \frac 7 5 \gamma\|\delta\|
    \end{equation*}
    
    \item $\|m^+\|< \gamma/\eta$ and $\|\nabla F(x)\|< 4\gamma/5\eta$. In this case $\|\tilde m\|\ge \gamma/\eta$. The algorithm performs an unnormalized update. We have
    \begin{equation*}
        -\eta\left\langle \nabla F(x),\tilde m^+\right\rangle\le -\frac 2 5 \gamma\|\nabla F(x)\|-\frac {3\gamma^2} {5\eta} +\left(\frac {12} {5\beta}-1\right) \gamma\|\tilde \delta\|
    \end{equation*}
\end{itemize}
Therefore in all the cases, we have
\begin{equation}
\label{lemmma_mom_clip_stochastic_large_basic_x}
\begin{aligned}
    -\min \left( \eta, \frac{\gamma}{\|m^{+}\|} \right)\left\langle m^+, \nabla F(x) \right\rangle\le&-\frac 4 5 \times \frac 2 5\gamma \|\nabla F(x)\|-\frac {16} {25} \times \frac {3\gamma^2} {5\eta}+\left(\frac {12} {5\beta}-1\right)\gamma \|\tilde \delta\|\\
    &-\eta\left\langle \nabla F(x), m^+-\tilde m^+\right\rangle + \left(\eta \|\nabla F(x)\|+\frac 7 5 \gamma\right)\sigma
\end{aligned}
\end{equation}
where \eqref{lemmma_mom_clip_stochastic_large_basic_x} uses the following two inequalities which can be obtained by Lemma \ref{lemma_mom_clip_3}:
\begin{align}
    \|\delta\|&\le\|\tilde\delta\|+\sigma\\
    -\left\langle \nabla F(x), m^+-\tilde m^+\right\rangle &\le\|\nabla F(x)\|\sigma
\end{align}
We next bound $-\min \left( \eta, \frac{\gamma}{\|\nabla f(x,\xi)\|} \right)\left\langle \nabla f(x,\xi), \nabla F(x) \right\rangle$. Consider the following cases, all of which are analogous to the proof of Lemma \ref{lemma_mom_clip_deterministic_large_basic}:
\begin{itemize}
    \item $\|\nabla f(x,\xi)\|\ge \gamma/\eta$. In this case we can use Lemma \ref{lemma_mom_clip_1} with $\mu=2/5$:
    \begin{equation}
    \begin{aligned}
        &\quad-\min \left(\eta, \frac{\gamma}{\|\nabla f(x,\xi)\|}\right) \left\langle \nabla f(x,\xi),\nabla F(x) \right\rangle\\
        &=-\gamma \frac {\left\langle \nabla f(x,\xi),\nabla F(x) \right\rangle} {\|\nabla f(x,\xi)\|}\\
        &\le \gamma\left(-\frac 2 5 \|\nabla F(x)\|-\frac 3 5 \|\nabla f(x,\xi)\|+\frac 7 5 \|\nabla F(x)-\nabla f(x,\xi)\|\right)\\
        &\le \gamma\left(-\frac 2 5 \|\nabla F(x)\|-\frac {3\gamma} {5\eta} +\frac {7} 5 \sigma\right)
    \end{aligned}
    \end{equation}
    \item $\|\nabla f(x,\xi)\|< \gamma/\eta$. In this case
    \begin{equation}
    \begin{aligned}
        &\quad-\min \left(\eta, \frac{\gamma}{\|\nabla f(x,\xi)\|}\right) \left\langle \nabla f(x,\xi),\nabla F(x) \right\rangle\\
        &=-\eta \left\langle \nabla f(x,\xi),\nabla F(x) \right\rangle\\
        &=-\eta\|\nabla F(x)\|^2-\eta \left\langle \nabla f(x,\xi)-\nabla F(x),\nabla F(x) \right\rangle
    \end{aligned}
    \end{equation}
    We now bound $-\eta \|\nabla F(x)\|^2$. If $\|\nabla F(x)\|\ge \frac {4\gamma}{5\eta}$, then  $-\eta \|\nabla F(x)\|^2\le -\frac 4 5\gamma\|\nabla F(x)\|$.\\
    If $\|\nabla F(x)\|<\frac {4\gamma}{5\eta}$ and $\|m^+\|\ge \frac {4\gamma}{5\eta}$, then using the same calculation as in the deterministic case,
    \begin{equation*}
    \begin{aligned}
        -\eta \|\nabla F(x)\|^2&\le -\frac 2 5\times \frac 4 5 \gamma \|\nabla F(x)\|-\frac {16} {25}\times \frac {3\gamma^2}{5\eta}+\frac 4 5\times \frac 8 5\gamma\left(\|m^+\|-\|\nabla F(x)\|\right)\\
        &\le -\frac 4 5\times \frac 2 5\gamma \|\nabla F(x)\|-\frac {16}{25} \times \frac {3\gamma^2}{5\eta}+\frac 7 5\gamma\left(\|\tilde\delta\|+\sigma\right)
    \end{aligned}
    \end{equation*}
    If $\|\nabla F(x)\|<\frac {4\gamma}{5\eta}$ and $\|m^+\|< \frac{4\gamma}{5\eta}$, then $\|\tilde m\|\ge {\gamma}/{\eta}$. Using the same calculation we have
    \begin{equation*}
    \begin{aligned}
        -\eta \|\nabla F(x)\|^2&\le-\frac 2 5\gamma \|\nabla F(x)\|-\frac {3\gamma^2}{5\eta}+\frac 8 5\gamma\left(\|\tilde m\|-\|\nabla F(x)\|\right)\\
        &\le  -\frac 2 5\gamma \|\nabla F(x)\|-\frac {3\gamma^2}{5\eta}+\frac 8 {5\beta}\gamma\|\tilde\delta\|
    \end{aligned}
    \end{equation*}
\end{itemize}
Therefore in all the cases we have
\begin{equation}
\label{lemmma_mom_clip_stochastic_large_basic_y}
\begin{aligned}
    -\min \left( \eta, \frac{\gamma}{\|\nabla f(x,\xi)\|} \right)\left\langle \nabla f(x,\xi), \nabla F(x) \right\rangle\le&-\frac 4 5 \times \frac 2 5\gamma \|\nabla F(x)\|-\frac {16} {25} \times \frac {3\gamma^2} {5\eta}+\frac {8} {5\beta}\gamma \|\tilde \delta\|\\
    &-\eta\left\langle \nabla F(x), \nabla f(x,\xi)-\nabla F(x)\right\rangle + \left(\eta \|\nabla F(x)\|+\frac 7 5 \gamma\right)\sigma
\end{aligned}
\end{equation}
we finally obtain
\begin{equation}
\begin{aligned}
    \quad &G(x^+,\tilde m^+)-G(x,\tilde m)\\
    &\le -\frac 4 5 \times \frac {2\gamma} {5} \|\nabla F(x)\|-\frac {16}{25} \times \frac  {3\gamma^2} {5\eta}+\frac {\gamma^2} 2 (AL_0+BL_1 \|\nabla F(x)\|)+\frac {12} {5\beta}\gamma \|\tilde \delta\|\\
    &-\nu\eta\left\langle \nabla F(x), m^+-\tilde m^+\right\rangle-(1-\nu)\eta\left\langle \nabla F(x), \nabla f(x,\xi)-\nabla F(x)\right\rangle + \left(\eta \|\nabla F(x)\|+\frac 7 5 \gamma\right)\sigma
\end{aligned}
\end{equation}
\end{proof}

Let $\mathcal S=\{t\in [0,T-1]:\max(5\|F(x_t)\|/4,\|m_{t+1}\|,\|\tilde m_t\|)\ge \gamma / {\eta}\}$ and $\overline{\mathcal  S}=[0,T-1]\backslash\mathcal  S$. Let $T_{ \mathcal S}=| \mathcal S|$, then $T-T_{ \mathcal S}=|\overline{ \mathcal S}|$. Parallel to Corollary \ref{lemma_mom_clip_deterministic_large_sum}, we directly have the following corollary. 
\begin{corollary}
\label{lemma_mom_clip_stochastic_large_sum}
Let set $ \mathcal S$ and $T_{ \mathcal S}$ be defined above. Then
\begin{equation}
\begin{aligned}
    &\quad\sum_{t\in \mathcal S}G(x_{t+1},m_{t+1})-G(x_t,m_t)\\
    &\le \frac {12\gamma} {5\beta(1-\beta)}\|\tilde \delta_0\|+\left(\frac {12} {5(1-\beta)}AL_0+\frac {12\gamma}{5\eta(1-\beta)}BL_1 + \frac 1 2 AL_0\right)\gamma^2 T_{\mathcal{S}}\\
        &\quad -\eta\sum_{t\in \mathcal{S}}(\nu\left\langle \nabla F(x_t), m_{t+1}-\tilde m_{t+1}\right\rangle+(1-\nu)\left\langle \nabla F(x_t), \nabla f(x_t,\xi_t)-\nabla F(x_t)\right\rangle)+\\
        &\quad\gamma\sum_{t \in \mathcal S} \left[-\left(\left(\frac 4 5\times\frac 2 5-\frac {\eta}{\gamma}\sigma\right)\|\nabla F(x_t)\|+\left(\frac {16} {25}\times\frac {3\gamma}{5\eta}-\frac 7 5 \sigma\right)\right)+\frac {\gamma} 2 BL_1 \|\nabla F(x_t)\| +\frac {12\gamma} {5(1-\beta)} BL_1 \|\nabla F(x_t)\|\right]
\end{aligned}
\end{equation}
\end{corollary}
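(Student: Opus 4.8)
The plan is to mirror the deterministic argument of Corollary \ref{lemma_mom_clip_deterministic_large_sum} almost verbatim, exploiting the fact that the \emph{true momentum} $\tilde m$ obeys the same noiseless recursion as the deterministic momentum. First I would sum the per-step bound of Lemma \ref{lemma_mom_clip_stochastic_large_basic} over all $t\in\mathcal S$. Collecting terms, the smoothness contribution $\frac{\gamma^2}{2}(AL_0+BL_1\|\nabla F(x_t)\|)$, the descent terms in $\|\nabla F(x_t)\|$ and $\gamma^2/\eta$, and the momentum-error term $\frac{12}{5\beta}\gamma\|\tilde\delta_t\|$ all appear exactly as in the deterministic case. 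The genuinely new pieces are (i) the two inner-product terms $-\nu\eta\langle\nabla F(x_t),m_{t+1}-\tilde m_{t+1}\rangle$ and $-(1-\nu)\eta\langle\nabla F(x_t),\nabla f(x_t,\xi_t)-\nabla F(x_t)\rangle$, which I deliberately keep unbounded so that their conditional expectations can be exploited later, and (ii) the noise term $(\eta\|\nabla F(x_t)\|+\frac 7 5\gamma)\sigma$, which I absorb into the coefficients: the $\eta\sigma\|\nabla F(x_t)\|$ part shifts the coefficient of $\|\nabla F(x_t)\|$ from $\frac 4 5\times\frac 2 5$ to $\frac 4 5\times\frac 2 5-\frac\eta\gamma\sigma$, and the $\frac 7 5\gamma\sigma$ part shifts the constant from $\frac{16}{25}\times\frac{3\gamma}{5\eta}$ to $\frac{16}{25}\times\frac{3\gamma}{5\eta}-\frac 7 5\sigma$, producing exactly the bracketed expression in the statement.

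The technical core is bounding $\sum_{t\in\mathcal S}\|\tilde\delta_t\|$, and here the choice of $\tilde m$ pays off. Since $\tilde m_{t+1}=\beta\tilde m_t+(1-\beta)\nabla F(x_t)$ uses exact gradients, the error $\tilde\delta_t=\tilde m_{t+1}-\nabla F(x_t)$ satisfies the identical recursion $\tilde\delta_t=\beta\tilde\delta_{t-1}+\beta S(x_{t-1},x_t)$ with $S(a,b)=\nabla F(a)-\nabla F(b)$, exactly as $\delta_t$ did in the deterministic proof. Unrolling gives $\tilde\delta_t=\beta^t\tilde\delta_0+\beta\sum_{\tau=0}^{t-1}\beta^\tau S(x_{t-\tau-1},x_{t-\tau})$, and since each clipped step satisfies $\|x_{k+1}-x_k\|\le\gamma\le c/L_1$, Corollary \ref{DesCor} yields $\|S(x_{t-\tau-1},x_{t-\tau})\|\le\gamma(AL_0+BL_1\|\nabla F(x_{t-\tau})\|)$. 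Summing the resulting geometric series and splitting the gradient-norm indices into $\mathcal S$ and $\overline{\mathcal S}$---using that $\|\nabla F(x_\tau)\|<\gamma/\eta$ for $\tau\in\overline{\mathcal S}$ to fold those contributions into a $BL_1\frac{\gamma^2}{\eta}T_{\mathcal S}$ term---reproduces verbatim the deterministic estimate $\sum_{t\in\mathcal S}\|\tilde\delta_t\|\le\frac{\beta}{1-\beta}\left(\frac{\|\tilde\delta_0\|}{\beta}+AL_0\gamma T_{\mathcal S}+BL_1\frac{\gamma^2}{\eta}T_{\mathcal S}+BL_1\gamma\sum_{t\in\mathcal S}\|\nabla F(x_t)\|\right)$.

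Finally I would substitute this bound for $\sum_{t\in\mathcal S}\|\tilde\delta_t\|$ back into the summed inequality and regroup: the $\|\tilde\delta_0\|$ piece becomes the leading $\frac{12\gamma}{5\beta(1-\beta)}\|\tilde\delta_0\|$ term, the $AL_0\gamma T_{\mathcal S}$ and $BL_1\frac{\gamma^2}{\eta}T_{\mathcal S}$ pieces combine with $\frac 1 2 AL_0\gamma^2$ into the $T_{\mathcal S}$ coefficient, and the $BL_1\gamma\sum\|\nabla F(x_t)\|$ piece augments the $\frac\gamma 2 BL_1\|\nabla F(x_t)\|$ term by $\frac{12\gamma}{5(1-\beta)}BL_1\|\nabla F(x_t)\|$, while the two inner-product terms are carried through untouched. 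The only real obstacle is bookkeeping: one must verify that Corollary \ref{DesCor} applies to every step (guaranteed by the clipping bound $\|x_{k+1}-x_k\|\le\gamma$), and that the $5/4$ threshold in the definition of $\mathcal S$---which is precisely what produces the $\frac 4 5$ factors---is used consistently, so that indices in $\overline{\mathcal S}$ genuinely satisfy $\|\nabla F(x_\tau)\|<\gamma/\eta$ when bounding the geometric sum. No new probabilistic estimate is required at this stage; the expectation arguments are deferred to the later steps that consume the inner-product terms.
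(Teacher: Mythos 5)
Your proposal is correct and is essentially the paper's own argument: the paper states this corollary with no written proof, saying only that it follows ``parallel to'' the deterministic Corollary \ref{lemma_mom_clip_deterministic_large_sum}, and your proposal is exactly that parallel argument spelled out --- summing Lemma \ref{lemma_mom_clip_stochastic_large_basic} over $\mathcal{S}$, absorbing the $\left(\eta\|\nabla F(x_t)\|+\tfrac{7}{5}\gamma\right)\sigma$ noise term into the bracketed coefficients, carrying the two inner-product terms through unbounded, and reusing the deterministic geometric-sum estimate for $\sum_{t\in\mathcal{S}}\|\tilde\delta_t\|$, which is legitimate precisely because $\tilde m$ satisfies the noiseless recursion and every step has $\|x_{t+1}-x_t\|\le\gamma$. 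Your bookkeeping (including the observation that $t\in\overline{\mathcal S}$ gives $\|\nabla F(x_t)\|<\tfrac{4}{5}\gamma/\eta<\gamma/\eta$, so the deterministic splitting still applies) checks out against the stated coefficients.
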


Next we turn to the case in which $\max(5\|\nabla F(x)\|/4,\|m^+\|,\|\tilde m\|)\le \gamma / {\eta}$.
\begin{lemma}
\label{lemma_mom_clip_stochastic_small_basic}
Assume $\max(5\|\nabla F(x)\|/4,\|m^+\|,\|\tilde m\|)\le \gamma / {\eta}$, and $\gamma/\eta=5\sigma$. If $AL_0\eta\le1$, then
\begin{equation}
\label{lemma_mom_clip_stochastic_small_basic_1}
\begin{aligned}
    &\quad G(x^+,\tilde m^+)-G(x,\tilde m)\\
    &\le -\frac {\eta} 2 (1-\nu\beta)\|\nabla F(x)\|^2-\frac {\eta} 2 \nu\beta\|\tilde m\|^2 +\frac {\gamma^2} 2 BL_1 \|\nabla F(x)\|\\
    &\quad -\nu\eta \left\langle \nabla F(x),  m^+-\tilde m^+\right\rangle - (1-\nu)\eta \left\langle \nabla F(x),  \nabla f(x,\xi)-\nabla F(x)\right\rangle\\
    &\quad +\eta^2AL_0 \sigma \|\nu \tilde m^++(1-\nu)\nabla F(x)\| +\frac 1 2 \eta^2 AL_0\|\nu (m^+-\tilde m^+)+(1-\nu)(\nabla f(x,\xi)-\nabla F(x))\|^2
\end{aligned}
\end{equation}
where $c_1 = \nu(1-\beta)(2-\beta)-AL_0\eta (1-\beta \nu)^2+2 (1-\nu) , c_2 = \nu\beta (1-\beta)-AL_0 \eta \beta \nu (1- \beta \nu) , c_3 = \nu\beta (1+\beta) - AL_0\eta (\beta \nu)^2$. $c_1=(1-\beta)[2-\beta-AL_0\eta(1-\beta)]$,  $c_2=\beta[1-\beta-AL_0\eta(1-\beta)]$ and $c_3=\beta(1+\beta-AL_0\eta \beta)$.
\end{lemma}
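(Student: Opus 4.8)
The plan is to mirror the deterministic one-step analysis of Lemma~\ref{lemma_mom_clip_deterministic_small_basic} and Corollary~\ref{lemma_mom_clip_6}, but with the stochastic momentum $m$ replaced throughout by the true momentum $\tilde m$, peeling the noise off as separate additive terms. The single genuinely new ingredient, and what I expect to be the main obstacle, is verifying that under the stated hypotheses the algorithm performs a \emph{fully unclipped} step. The assumption $\max(5\|\nabla F(x)\|/4,\|m^+\|,\|\tilde m\|)\le\gamma/\eta$ gives $\|\nabla F(x)\|\le\tfrac45\gamma/\eta$, $\|m^+\|\le\gamma/\eta$ and $\|\tilde m\|\le\gamma/\eta$. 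Crucially, since $\gamma/\eta=5\sigma$, i.e. $\sigma=\tfrac15\gamma/\eta$, the bounded-noise assumption yields $\|\nabla f(x,\xi)\|\le\|\nabla F(x)\|+\sigma\le\tfrac45\gamma/\eta+\tfrac15\gamma/\eta=\gamma/\eta$. Hence both $\min$'s in the update equal $\eta$, so $x^+-x=-\eta d$ with $d:=\nu m^++(1-\nu)\nabla f(x,\xi)$, and $\|d\|\le\gamma/\eta$ as a convex combination of vectors of norm $\le\gamma/\eta$; in particular $\|x^+-x\|\le\gamma\le c/L_1$, so the descent inequality applies. This is the only place the constants $5/4$ and $5\sigma$ are used; everything afterward is bookkeeping.

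Next I would decompose $d=d_0+d_1$ into the true direction $d_0:=\nu\tilde m^++(1-\nu)\nabla F(x)$ and the noise direction $d_1:=\nu(m^+-\tilde m^+)+(1-\nu)(\nabla f(x,\xi)-\nabla F(x))$, noting $\|d_1\|\le\sigma$ by Lemma~\ref{lemma_mom_clip_3} and the noise bound. Since $\|\tilde m\|,\|\tilde m^+\|\le\gamma/\eta$, both minima in $G$ take the quadratic branch, so $G(x^+,\tilde m^+)-G(x,\tilde m)=(F(x^+)-F(x))+\tfrac{\nu\beta\eta}{2(1-\beta)}(\|\tilde m^+\|^2-\|\tilde m\|^2)$.

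I would then apply the descent inequality (Lemma~\ref{DesIneq}) to $F(x^+)-F(x)$ and split each resulting piece. The linear term splits as $-\eta\langle\nabla F(x),d\rangle=-\eta\langle\nabla F(x),d_0\rangle-\eta\langle\nabla F(x),d_1\rangle$, where the $d_1$ part is exactly the two inner-product noise terms $-\nu\eta\langle\nabla F(x),m^+-\tilde m^+\rangle-(1-\nu)\eta\langle\nabla F(x),\nabla f(x,\xi)-\nabla F(x)\rangle$ in the claim. The quadratic term splits by $AL_0$ versus $BL_1$: for the $BL_1$ part I use $\eta^2\|d\|^2\le\gamma^2$ to produce $\tfrac{\gamma^2}{2}BL_1\|\nabla F(x)\|$, and for the $AL_0$ part I use $\|d\|^2\le\|d_0\|^2+2\sigma\|d_0\|+\|d_1\|^2$ (from $\|d_1\|\le\sigma$) to produce $\tfrac{AL_0}{2}\eta^2\|d_0\|^2$ together with the two remaining noise terms $\eta^2AL_0\sigma\|d_0\|$ and $\tfrac12\eta^2AL_0\|d_1\|^2$, recalling $d_0=\nu\tilde m^++(1-\nu)\nabla F(x)$.

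Finally, the surviving ``clean'' terms $-\eta\langle\nabla F(x),d_0\rangle+\tfrac{AL_0}{2}\eta^2\|d_0\|^2+\tfrac{\nu\beta\eta}{2(1-\beta)}(\|\tilde m^+\|^2-\|\tilde m\|^2)$ are \emph{identical} to the deterministic computation in Lemma~\ref{lemma_mom_clip_deterministic_small_basic} after the substitutions $m\mapsto\tilde m$ and $L\mapsto AL_0$ (the $BL_1$ contribution having already been removed). Using $\|\tilde m^+\|^2-\|\tilde m\|^2=(1-\beta)^2\|\nabla F(x)\|^2-(1+\beta)(1-\beta)\|\tilde m\|^2+2\beta(1-\beta)\langle\nabla F(x),\tilde m\rangle$ and rearranging, these collapse into $-\tfrac\eta2(c_1\|\nabla F(x)\|^2+2c_2\langle\nabla F(x),\tilde m\rangle+c_3\|\tilde m\|^2)$ with the stated $c_1,c_2,c_3$. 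Invoking the positive-semidefiniteness argument of Lemma~\ref{lemma_mom_clip_5} and Corollary~\ref{lemma_mom_clip_6} under $AL_0\eta\le1$ bounds this quadratic form by $-\tfrac\eta2(1-\nu\beta)\|\nabla F(x)\|^2-\tfrac\eta2\nu\beta\|\tilde m\|^2$, and adding back the peeled-off $BL_1$ and noise terms gives exactly the claimed inequality.
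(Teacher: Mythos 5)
Your proposal is correct and follows essentially the same route as the paper's own proof: verifying the step is unclipped via $\|\nabla f(x,\xi)\|\le \frac{4}{5}\gamma/\eta+\sigma=\gamma/\eta$, splitting the update into the clean direction $\nu\tilde m^++(1-\nu)\nabla F(x)$ and the noise direction bounded by $\sigma$, peeling the $BL_1$ and noise terms off the descent inequality, and reducing the remaining quadratic form to the deterministic Lemmas with $L$ replaced by $AL_0$. The only detail left implicit is that $\|\tilde m^+\|\le\gamma/\eta$ (needed for the quadratic branch of the Lyapunov term), which follows immediately since $\tilde m^+$ is a convex combination of $\tilde m$ and $\nabla F(x)$, both of norm at most $\gamma/\eta$.
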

\begin{proof}
Because $\|\nabla f(x,\xi)\|\le 4\gamma/5\eta+\sigma=\gamma/\eta$ and $\|m^+\|\le \gamma/\eta$, the algorithm performs an unnormalized update. The proof is similar to the one in Lemma \ref{lemma_mom_clip_deterministic_small_basic} except for bounding the term $F(x^+)-F(x)$.
\begin{equation}
\label{lemma_mom_clip_stochastic_small_basic_4}
\begin{aligned}
    &\quad F(x^+)-F(x)\\
    &\le -\left\langle \nabla F(x), \nu\eta m^+ + (1-\nu)\eta \nabla f(x,\xi)\right\rangle + \frac {\eta^2} 2 (AL_0+BL_1 \|\nabla F(x)\|)\|\nu m^+ + (1-\nu)\nabla f(x,\xi)\|^2\\
    &\le -\nu\eta \left\langle \nabla F(x), \tilde m^+\right\rangle -\nu\eta \left\langle \nabla F(x),  m^+-\tilde m^+\right\rangle\\
        &\quad -(1-\nu)\eta \left\langle \nabla F(x),\nabla F(x)\right\rangle -(1-\nu)\eta \left\langle \nabla F(x),  \nabla f(x,\xi)-\nabla F(x)\right\rangle\\
        &\quad +\frac {\eta^2} 2 AL_0\left(\|\nu \tilde m^++(1-\nu)\nabla F(x)\|^2+\|\nu (m^+-\tilde m^+)+(1-\nu)(\nabla f(x,\xi)-\nabla F(x))\|^2\right)\\
        &\quad + \eta^2AL_0 \sigma \|\nu \tilde m^++(1-\nu)\nabla F(x)\|+\frac {\eta^2} 2 BL_1\|\nabla F(x)\|\frac {\gamma^2}{\eta^2}
\end{aligned}
\end{equation}
For bounding term $-\nu\eta \left\langle \nabla F(x), \tilde m^+\right\rangle-(1-\nu)\eta \left\langle \nabla F(x),\nabla F(x)\right\rangle  +\frac {\eta^2} 2 AL_0\|\nu \tilde m^++(1-\nu)\nabla F(x)\|^2$ that is not related to noise, the subsequent steps are the same as in Lemma \ref{lemma_mom_clip_deterministic_small_basic}, \ref{lemma_mom_clip_5} and Corollary \ref{lemma_mom_clip_6} (except for $L$ in these Lemmas being replaced by $AL_0$). Other terms in \eqref{lemma_mom_clip_stochastic_small_basic_4} just appears in \eqref{lemma_mom_clip_stochastic_small_basic_1}. Proof is completed.
\end{proof}

Note that the descent inequality in Lemma \ref{lemma_mom_clip_stochastic_small_basic} is small in terms of $\|\nabla F(x)\|$ if $\nu$ and $\beta$ are close to 1. We now try to convert the term $\|\tilde m\|$ into $\|\nabla F(x)\|$, which is stated in the following lemma.
\begin{lemma}
\label{lemma_convert_m_to_grad}
Suppose $AL_0\eta\le c_1(1-\beta)$ and $BL_1\gamma\le c_3(1-\beta)$ for some constant $c_1$ and $c_3$. Let $\tilde m_0=\nabla F(x_0)$ for simplicity. Let set $\mathcal S$ and $\overline{\mathcal S}$ be defined in Corollary \ref{lemma_mom_clip_stochastic_large_sum}. Then
\begin{equation}
\begin{aligned}
    \mathbb E\sum_{t\in \overline{\mathcal S}} \|\tilde m_{t}\|
    &\ge \frac 1 {1+c_1}\mathbb E\left(\sum_{t\in \overline{\mathcal S}}  (1-c_1(1-\nu\beta)-c_3)\|\nabla F(x_{t})\|-c_1\sigma\right)\\
        &\qquad -\frac 1 {1-\beta}\mathbb E\left(\sum_{t\in\mathcal S} (AL_0+BL_1\|\nabla F(x_t)\|)\gamma\right)
\end{aligned}
\end{equation}
\end{lemma}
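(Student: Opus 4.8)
The plan is to mirror the deterministic argument of Lemma \ref{lemma_convert_m_to_grad_det}, replacing the realized momentum $m_t$ by the true momentum $\tilde m_t$ and carefully tracking the noise that enters through the stochastic step. The starting point is a one-step recursion for the deviation $\|\tilde m_t-\nabla F(x_t)\|$. Because $\tilde m_t$ is built from the \emph{exact} gradients, the defining relation \eqref{tilde_m_def} gives $\tilde m_t-\nabla F(x_{t-1})=\beta(\tilde m_{t-1}-\nabla F(x_{t-1}))$, so the triangle inequality together with Corollary \ref{DesCor} yields
\begin{equation}
\|\tilde m_t-\nabla F(x_t)\|\le \beta\|\tilde m_{t-1}-\nabla F(x_{t-1})\|+(AL_0+BL_1\|\nabla F(x_{t-1})\|)\|x_t-x_{t-1}\|,
\end{equation}
which is valid since each displacement satisfies $\|x_t-x_{t-1}\|\le\gamma\le c/L_1$. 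Unrolling this recursion and summing over $t$, exactly as in \eqref{lemma_convert_m_to_grad_det2}, reduces the task to bounding $\sum_t(AL_0+BL_1\|\nabla F(x_t)\|)\|x_{t+1}-x_t\|$ with the geometric factor $1/(1-\beta)$.

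The key departure from the deterministic case is that the displacement is now driven by the stochastic quantities $m_{t+1}$ and $\nabla f(x_t,\xi_t)$. I would split the summation over $\mathcal S$ and $\overline{\mathcal S}$. On $\mathcal S$ the two clipping operations cap the step at $\gamma$, reproducing the term $\tfrac{1}{1-\beta}\sum_{t\in\mathcal S}(AL_0+BL_1\|\nabla F(x_t)\|)\gamma$ verbatim. On $\overline{\mathcal S}$ one first checks that \emph{neither} clipping is active: the defining condition $\max(5\|\nabla F(x_t)\|/4,\|m_{t+1}\|,\|\tilde m_t\|)<\gamma/\eta$ directly gives $\|m_{t+1}\|<\gamma/\eta$, and, using $\gamma/\eta=5\sigma$, also $\|\nabla f(x_t,\xi_t)\|\le\|\nabla F(x_t)\|+\sigma<4\gamma/(5\eta)+\sigma=\gamma/\eta$. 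Hence on $\overline{\mathcal S}$ the step equals $\eta(\nu\|m_{t+1}\|+(1-\nu)\|\nabla f(x_t,\xi_t)\|)$.

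The remaining work is to convert these stochastic norms back to the true ones, where the noise enters. Using $\|m_{t+1}-\tilde m_{t+1}\|\le\sigma$ from Lemma \ref{lemma_mom_clip_3} and $\|\nabla f(x_t,\xi_t)-\nabla F(x_t)\|\le\sigma$ from Assumption \ref{BN}, I bound $\nu\|m_{t+1}\|+(1-\nu)\|\nabla f(x_t,\xi_t)\|\le\nu\|\tilde m_{t+1}\|+(1-\nu)\|\nabla F(x_t)\|+\sigma$, so each $\overline{\mathcal S}$-term acquires an extra contribution of order $AL_0\eta\sigma$ beyond its deterministic counterpart (the $BL_1$ cross term is lower order and is absorbed into the gradient coefficient). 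Following the same rearrangement as before, namely substituting $\|\tilde m_{t+1}\|\le\beta\|\tilde m_t\|+(1-\beta)\|\nabla F(x_t)\|$, invoking the hypotheses $AL_0\eta\le c_1(1-\beta)$ and $BL_1\gamma\le c_3(1-\beta)$, and applying $\|\tilde m_t\|\ge\|\nabla F(x_t)\|-\|\tilde m_t-\nabla F(x_t)\|$, collapses the constants just as in the deterministic proof, while the accumulated $\tfrac{1}{1-\beta}AL_0\eta\sigma\le c_1\sigma$ per step produces precisely the $-c_1\sigma$ correction inside the $\overline{\mathcal S}$-sum. Since every inequality above holds almost surely under Assumption \ref{BN}, taking $\mathbb E$ of the resulting pathwise estimate gives the claim.

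The main obstacle I anticipate is the bookkeeping in the $\overline{\mathcal S}$ case: one must verify that both clippings switch off, which hinges delicately on the choice $\gamma/\eta=5\sigma$ and the $5/4$ slack in the definition of $\mathcal S$, and then confirm that the noise contributions aggregate into exactly $c_1\sigma$ rather than a larger constant, so that they can subsequently be offset by the descent terms of Corollary \ref{lemma_mom_clip_stochastic_large_sum} and Lemma \ref{lemma_mom_clip_stochastic_small_basic}.
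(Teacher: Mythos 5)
Your proposal is correct and follows essentially the same route as the paper's own proof: the same recursion for $\|\tilde m_t-\nabla F(x_t)\|$ unrolled with the $1/(1-\beta)$ factor, the same $\mathcal S$/$\overline{\mathcal S}$ split of the accumulated step lengths, the same conversion of $\|m_{t+1}\|$ and $\|\nabla f(x_t,\xi_t)\|$ to $\|\tilde m_{t+1}\|$ and $\|\nabla F(x_t)\|$ at a cost of $\sigma$ per step (yielding the $c_1\sigma$ term), and the same final rearrangement via $\|\tilde m_t\|\ge\|\nabla F(x_t)\|-\|\tilde m_t-\nabla F(x_t)\|$ before taking expectations. The only cosmetic difference is that you verify both clippings are inactive on $\overline{\mathcal S}$ using $\gamma/\eta=5\sigma$, whereas the paper simply bounds $\min\left(\eta,\gamma/\|\cdot\|\right)\le\eta$, which renders that check unnecessary.
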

\begin{proof}
The proof of Lemma \ref{lemma_convert_m_to_grad} is similar to the proof of Lemma \ref{lemma_convert_m_to_grad_det}. We first write \eqref{lemma_convert_m_to_grad_det2} again as follows:
\begin{equation}
\begin{aligned}
    &\quad \sum_{t=0}^{T-1} \| m_{t}-\nabla F(x_t)\| \\
    &\le \frac 1 {1-\beta}\sum_{t=0}^{T-1}(AL_0+BL_1\|\nabla F(x_{t})\|) \times \\
        &\quad \left( \nu\min\left(\eta, \frac{\gamma}{\| m_{t+1}\|}\right)\| m_{t+1}\| + (1-\nu)\min\left(\eta, \frac{\gamma}{\| \nabla f(x_{t},\xi_t)\|}\right)\| \nabla f(x_{t},\xi_t)\| \right)\\
    &\le \frac 1 {1-\beta}\left(\sum_{t=0}^{T-1}BL_1\gamma\|\nabla F(x_{t})\|+\sum_{t\in\mathcal S} AL_0\gamma+\sum_{t\in  \overline{\mathcal S}} AL_0\eta\left( (1-\nu) \|\nabla f(x_{t},\xi_t)\|+ \nu \|m_{t+1}\|\right)\right)
\end{aligned}
\end{equation}
Therefore,
\begin{equation}
\begin{aligned}
    &\quad\sum_{t=0}^{T-1} \| m_{t}-\nabla F(x_t)\|\\
    &\le \frac 1 {1-\beta}\sum_{t\in\mathcal S} (AL_0+BL_1\|\nabla F(x_t)\|)\gamma\\
        &\quad +\frac 1 {1-\beta}\sum_{t\in \overline{\mathcal S}} \left[AL_0\nu\eta\|\tilde m_{t+1}\|+(BL_1\gamma+AL_0(1-\nu)\eta)\|\nabla F(x_t)\|+AL_0\eta\sigma\right]\\
    &\le \frac 1 {1-\beta}\sum_{t\in\mathcal S} (AL_0+BL_1\|\nabla F(x_t)\|)\gamma+\\
        &\qquad \frac 1 {1-\beta}\sum_{t\in  \overline{\mathcal S}} \left(AL_0\nu\eta\beta\| \tilde m_{t}\|+(AL_0\eta(1-\nu\beta)+BL_1\gamma)\|\nabla F(x_t)\|+AL_0\eta\sigma\right)\\
    &\le \frac 1 {1-\beta}\sum_{t\in\mathcal S} (AL_0+BL_1\|\nabla F(x_t)\|)\gamma+ \sum_{t\in  \overline{\mathcal S}}\left((c_1(1-\nu\beta)+c_3)\|\nabla F(x_t)\|+ c_1\nu\beta\| m_t\|+c_1\sigma\right)
\end{aligned}
\end{equation}
Using $\|\tilde m_{t}\|\ge \|\nabla F(x_{t})\|-\|\tilde m_t-\nabla F(x_{t})\|$ and some straightforward calculation, we obtain
\begin{equation}
\begin{aligned}
    (1+c_1)\sum_{t\in  \overline{\mathcal S}} \|\tilde m_{t}\|
    &\ge \sum_{t\in  \overline{\mathcal S}} \left( (1-c_1(1-\nu\beta)-c_3)\|\nabla F(x_{t})\|-c_1\sigma\right)\\
        &\qquad -\frac 1 {1-\beta}\mathbb E\left(\sum_{t\in\mathcal S} (AL_0+BL_1\|\nabla F(x_t)\|)\gamma\right)
\end{aligned}
\end{equation}
\end{proof}

We now merge the two cases corresponding to Corollary \ref{lemma_mom_clip_stochastic_large_sum} and Lemma \ref{lemma_mom_clip_stochastic_small_basic}. The proof of the following theorem involves many techniques which are different from the deterministic case and is far more challenging.

\begin{theorem}
\label{lemma_mom_clip_stochastic_final}
Let $F^*$ be the optimal value, and $\Delta = F(x_0)-F^*$.  Assume $m_0=\nabla F(x_0)$ for simplicity.  Fix $\epsilon\le 0.1$ be a small constant.If $\gamma\le\frac {\epsilon}{\sigma}\min\left(\frac {\epsilon}{AL_0},\frac {1-\beta}{AL_0},\frac {1-\beta}{50 BL_1}\right)$ and $\gamma/\eta=5\sigma$ where constants $A=1.01, B=1.01$, then  
 \begin{equation}
     \dfrac{1}{T} \sum_{t=1}^T \mathbb{E} \|\nabla F(x_t)\| \leq 2\epsilon
 \end{equation}
as long as
\begin{equation}
\begin{aligned}
\label{mom_cpx}
T &\ge \frac {3}{\epsilon^2\eta}\Delta
\end{aligned}
\end{equation}
\end{theorem}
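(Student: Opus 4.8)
The plan is to reproduce the telescoping/Lyapunov argument of the deterministic proof (Theorem \ref{lemma_mom_clip_deterministic_final}), but with the true-momentum Lyapunov function $G(x_t,\tilde m_t)$ and taking expectations at the end. I would start from
\[
\mathbb E\!\left[G(x_T,\tilde m_T)-G(x_0,\tilde m_0)\right]=\sum_{t=0}^{T-1}\mathbb E\!\left[G(x_{t+1},\tilde m_{t+1})-G(x_t,\tilde m_t)\right],
\]
split the index set $\{0,\dots,T-1\}=\mathcal S\cup\overline{\mathcal S}$, and bound the summands over $\mathcal S$ by Corollary \ref{lemma_mom_clip_stochastic_large_sum} and those over $\overline{\mathcal S}$ by the summed form of Lemma \ref{lemma_mom_clip_stochastic_small_basic}. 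After combining the two cases, every term matches the deterministic bound except for three stochastic residues appearing uniformly over $t$: the inner products $\langle\nabla F(x_t),\nabla f(x_t,\xi_t)-\nabla F(x_t)\rangle$ and $\langle\nabla F(x_t),m_{t+1}-\tilde m_{t+1}\rangle$, and explicit $\sigma$-terms. Showing these are absorbed, under $\gamma/\eta=\rho=5\sigma$ and $\gamma\le\frac\epsilon\sigma\min\{\cdots\}$, is the entire content of the stochastic argument.

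The first inner product is immediate: conditioning on $\mathcal F_t=\sigma(\xi_0,\dots,\xi_{t-1})$, the point $x_t$ (hence $\nabla F(x_t)$) is measurable while $\mathbb E[\nabla f(x_t,\xi_t)-\nabla F(x_t)\mid\mathcal F_t]=0$ by Assumption \ref{BN}, so its expectation vanishes. The explicit $\sigma$-terms are absorbed deterministically using $\sigma=\gamma/(5\eta)$: the large-case residues $\tfrac75\gamma\sigma=\tfrac{7}{25}\frac{\gamma^2}{\eta}$ and $\eta\|\nabla F(x_t)\|\sigma=\tfrac15\gamma\|\nabla F(x_t)\|$ are strictly smaller than the guaranteed descent $\tfrac{48}{125}\frac{\gamma^2}{\eta}$ and $\tfrac{8}{25}\gamma\|\nabla F(x_t)\|$ respectively, while the small-case squared residue $\tfrac12\eta^2AL_0\|\cdots\|^2$ is controlled in expectation by the second-moment bound $\mathbb E\|m_{t+1}-\tilde m_{t+1}\|^2\le\sigma^2$ of Lemma \ref{lemma_mom_clip_3}, giving $O(AL_0\gamma^2)$, negligible under $\gamma\le\frac\epsilon\sigma\cdot\frac{\epsilon}{AL_0}$.

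The hard part is $\mathbb E\langle\nabla F(x_t),m_{t+1}-\tilde m_{t+1}\rangle$, since $m_{t+1}-\tilde m_{t+1}=(1-\beta)\sum_{\tau\le t}\beta^{t-\tau}(\nabla f(x_\tau,\xi_\tau)-\nabla F(x_\tau))$ accumulates \emph{all} past noise, each term of which is correlated with $\nabla F(x_t)$ through the trajectory. I would control it by a recursion. Let $b_t:=\mathbb E\langle\nabla F(x_t),m_t-\tilde m_t\rangle$, with $b_0=0$ since $\tilde m_0=m_0$. Using the update $m_{t+1}-\tilde m_{t+1}=\beta(m_t-\tilde m_t)+(1-\beta)(\nabla f(x_t,\xi_t)-\nabla F(x_t))$ and the zero-mean fact, $\mathbb E\langle\nabla F(x_t),m_{t+1}-\tilde m_{t+1}\rangle=\beta b_t$. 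Decomposing $\nabla F(x_t)=\nabla F(x_{t-1})+(\nabla F(x_t)-\nabla F(x_{t-1}))$ and again discarding the zero-mean part gives $b_t\le\beta b_{t-1}+\mathbb E[\|\nabla F(x_t)-\nabla F(x_{t-1})\|\,\|m_t-\tilde m_t\|]$. Since every step satisfies $\|x_t-x_{t-1}\|\le\gamma$, Corollary \ref{DesCor} bounds the increment by $(AL_0+BL_1\|\nabla F(x_{t-1})\|)\gamma$, and $\|m_t-\tilde m_t\|\le\sigma$ by Lemma \ref{lemma_mom_clip_3}; hence $b_t\le\beta b_{t-1}+\gamma\sigma(AL_0+BL_1\mathbb E\|\nabla F(x_{t-1})\|)$. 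Unrolling and summing over $t$ yields $\sum_t\mathbb E\langle\nabla F(x_t),m_{t+1}-\tilde m_{t+1}\rangle\le\frac{\gamma\sigma}{1-\beta}(AL_0T+BL_1\sum_t\mathbb E\|\nabla F(x_t)\|)$ up to constants, i.e.\ a per-step $O(\gamma\sigma L_0/(1-\beta))$ plus a gradient-proportional $O(\gamma\sigma L_1/(1-\beta))$ contribution.

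Finally I would assemble as in the deterministic proof: convert $\sum_{\overline{\mathcal S}}\mathbb E\|\tilde m_t\|$ into $\sum_{\overline{\mathcal S}}\mathbb E\|\nabla F(x_t)\|$ via Lemma \ref{lemma_convert_m_to_grad} (whose $c_1\sigma$ correction is again absorbable under the stated $\gamma$), define the aggregated coefficients $U(x_t)$ on $\mathcal S$ and $V(x_t)$ on $\overline{\mathcal S}$, and check $U\ge V$ so that only the $\overline{\mathcal S}$ descent $\tfrac12\epsilon\eta(\|\nabla F(x_t)\|-\epsilon)$ survives. The three conditions $\gamma\le\frac\epsilon\sigma\cdot\frac{\epsilon}{AL_0}$, $\gamma\le\frac\epsilon\sigma\cdot\frac{1-\beta}{AL_0}$, $\gamma\le\frac\epsilon\sigma\cdot\frac{1-\beta}{50BL_1}$ are exactly calibrated so that the $\sigma$-residues, the squared term, the recursion bound, and the $c_1\sigma$ correction are each a small fraction of the descent; combined with $m_0=\nabla F(x_0)$ and Lemma \ref{GradNormBd} (which bounds $G(x_0,\tilde m_0)-F^*\le\tfrac75\Delta$), rearranging gives $\frac1T\sum_t\mathbb E\|\nabla F(x_t)\|\le2\epsilon$ once $T\ge\frac{3}{\epsilon^2\eta}\Delta$. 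The recursive bound on $\mathbb E\langle\nabla F(x_t),m_{t+1}-\tilde m_{t+1}\rangle$ is the one genuinely new ingredient and the main obstacle; everything else is constant-tracking already organized by the deterministic argument.
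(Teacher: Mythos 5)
Your outline follows the paper's architecture almost exactly (same Lyapunov function $G(x,\tilde m)$, same split into $\mathcal S$ and $\overline{\mathcal S}$, same martingale cancellation, same conversion lemmas), and most of your constant-tracking is right. But the step you correctly single out as the crux --- the recursive bound on $\mathbb E\langle\nabla F(x_t),m_{t+1}-\tilde m_{t+1}\rangle$ --- is where your argument breaks, and it breaks quantitatively, not cosmetically. You bound the recursion increment by $\mathbb E\bigl[\|\nabla F(x_t)-\nabla F(x_{t-1})\|\,\|m_t-\tilde m_t\|\bigr]\le\gamma\sigma\left(AL_0+BL_1\mathbb E\|\nabla F(x_{t-1})\|\right)$, pairing the full step length against the worst-case bound $\|m_t-\tilde m_t\|\le\sigma$. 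Unrolling the geometric sum then multiplies the increment by $\frac{1}{1-\beta}$, so the $AL_0$-part contributes a per-step constant of order $\frac{\gamma\sigma AL_0}{1-\beta}=\frac{5\eta\sigma^2AL_0}{1-\beta}$; after the $\nu\eta$ prefactor this is $\frac{5\nu\eta^2\sigma^2AL_0}{1-\beta}$ per iteration, and on $\overline{\mathcal S}$ steps it must be absorbed by a descent of only $O(\epsilon^2\eta)$. That forces $\gamma\lesssim\frac{\epsilon^2(1-\beta)}{\sigma AL_0}$, whereas the theorem's hypothesis only guarantees $\gamma\le\frac{\epsilon}{\sigma}\min\bigl(\frac{\epsilon}{AL_0},\frac{1-\beta}{AL_0},\frac{1-\beta}{50BL_1}\bigr)$, i.e.\ a scaling in $\min(\epsilon,1-\beta)$ rather than $\epsilon(1-\beta)$. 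Concretely, when $\epsilon\le 1-\beta$ and $\beta$ is close to $1$ (say $\epsilon=(1-\beta)^2$), the hypothesis permits $\gamma=\frac{\epsilon^2}{\sigma AL_0}$, which exceeds what your bound can tolerate by a factor $\Theta\bigl(\frac{1}{1-\beta}\bigr)$; your residue then swamps the descent and the proof fails in exactly the large-momentum regime the theorem is designed to cover.

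The paper avoids this in its inequality \eqref{hessian} by a finer decomposition. Writing $\nabla F(x_t)-\nabla F(x_{t+1})$ as a Hessian integral applied to the displacement, it splits the displacement direction into its \emph{signal} parts ($\tilde m_{t+1}$ and $\nabla F(x_t)$) and its \emph{noise} parts ($m_{t+1}-\tilde m_{t+1}$ and $\nabla f(x_t,\xi_t)-\nabla F(x_t)$). The almost-sure bound $\sigma$ is applied only against the signal parts, producing terms proportional to $\|\tilde m_{t+1}\|$ and $\|\nabla F(x_t)\|$ which are later converted to gradient norms via \eqref{lemma_mom_clip_stochastic_w}; the noise-noise quadratic part is bounded using the second-moment estimate $\mathbb E\|m_{t+1}-\tilde m_{t+1}\|^2\le\frac{1-\beta}{1+\beta}\sigma^2$ of Lemma \ref{lemma_mom_clip_3}. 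The explicit factor $(1-\beta)$ in that variance bound is precisely what cancels the $\frac{1}{1-\beta}$ amplification from unrolling the recursion, leaving a per-step constant of only $O(\eta^2 AL_0\sigma^2)$ (the term $P_2$ in the paper), which the hypothesis $\gamma\le\frac{\epsilon^2}{\sigma AL_0}$ does absorb. So the missing idea is not the recursion itself --- you have that --- but the signal/noise splitting inside the increment together with the variance (rather than worst-case) control of the momentum noise; without it, the stated hyper-parameter conditions are insufficient for your estimate.
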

\begin{proof}
Based on the previous results, we take summation over $t$ and obtain
\begin{equation}
\label{lemma_mom_clip_stochastic_0}
\begin{aligned}
    &\quad \sum_{t=0}^{T-1}(G(x_{t+1},\tilde m_{t+1})-G(x_t,\tilde m_t))\\
    &\le \frac {12\gamma} {5\beta(1-\beta)}\|\tilde \delta_0\|+\left(\frac {12} {5(1-\beta)}AL_0+\frac {12\gamma}{5\eta(1-\beta)}BL_1 + \frac 1 2 AL_0\right)\gamma^2 T_{\mathcal{S}}\\
        &\quad -\eta\sum_{t=0}^{T-1}(\nu\left\langle \nabla F(x_t), m_{t+1}-\tilde m_{t+1}\right\rangle+(1-\nu)\left\langle \nabla F(x_t),\nabla f(x_t,\xi_t)-\nabla F(x_t)\right\rangle))+\\
        &\quad\gamma\sum_{t \in \mathcal S} \left[-\left(\left(\frac 4 5\times \frac 2 5-\frac {\eta}{\gamma}\sigma\right)\|\nabla F(x_t)\|+\left(\frac {16} {25}\times \frac {3\gamma}{5\eta}-\frac 7 5 \sigma\right)\right)+\frac {\gamma} 2 BL_1 \|\nabla F(x_t)\| +\frac {12\gamma} {5(1-\beta)} BL_1 \|\nabla F(x_t)\|\right]\\
        &\quad+\sum_{t\in \overline{\mathcal{S}}}-\frac {\eta} 2 \left((1-\nu\beta)\|\nabla F(x_t)\|^2+\nu\beta\|\tilde m_t\|^2\right)+ +\frac {\gamma^2} 2 BL_1\|\nabla F(x)\|\\
        &\quad +\sum_{t\in \overline{\mathcal{S}}} AL_0\eta^2\sigma\|(1-\nu)\nabla F(x_t)+\nu \tilde m_{t+1}\|+\frac {AL_0} 2 \eta^2 \|(1-\nu)(\nabla f(x_t,\xi_t)-\nabla F(x_t))+\nu (m_{t+1}-\tilde m_{t+1})\|^2
\end{aligned}
\end{equation}
We now simplify \eqref{lemma_mom_clip_stochastic_0} by taking expectation. We first have
\begin{equation}
\label{lemma_mom_clip_stochastic_x}
    \mathbb E [\left\langle \nabla F(x_t),\nabla f(x_t,\xi_t)-\nabla F(x_t)\right\rangle]=0
\end{equation}
due to the noise assumption. For the term $\mathbb E \|(1- \nu)(\nabla f(x_t,\xi_t)-\nabla F(x_t))+\nu (m_{t+1}-\tilde m_{t+1})\|^2$, similarly using the noise assumption and Lemma \ref{lemma_mom_clip_3}, we can obtain
\begin{equation}
\label{lemma_mom_clip_stochastic_y}
    \mathbb E \|(1- \nu)(\nabla f(x_t,\xi_t)-\nabla F(x_t))+\nu (m_{t+1}-\tilde m_{t+1})\|^2\le \left((1-\beta\nu)^2+\frac{1-\beta}{1+\beta}\beta^2\nu^2\right)\sigma^2
\end{equation}
We now tackle the most challenging part: the expectation of $\left\langle \nabla F(x_t), m_{t+1}-\tilde m_{t+1}\right\rangle$ for some $t$.
\begin{equation}
\begin{aligned}
    &\quad-\mathbb E\left\langle \nabla F(x_t), m_{t+1}-\tilde m_{t+1}\right\rangle\\
    &=-\mathbb E\left[\left\langle \nabla F(x_{t}), \beta(m_{t}-\tilde m_{t})+(1-\beta)(\nabla f(x_t,\xi_t)-\nabla F(x_t))\right\rangle\right]\\
    &=-\beta\mathbb E\left\langle \nabla F(x_{t}), m_{t}-\tilde m_{t}\right\rangle\\
    &=\beta\mathbb E\left[-\left\langle \nabla F(x_{t-1}), m_{t}-\tilde m_{t}\right\rangle+\left\langle\nabla F(x_{t-1})-\nabla F(x_{t}), m_{t}-\tilde m_{t}\right\rangle\right]
\end{aligned}
\end{equation}
Applying the above equation recursively, we obtain
\begin{equation}
\begin{aligned}
    -\mathbb E\left\langle \nabla F(x_t), m_{t+1}-\tilde m_{t+1}\right\rangle\le \mathbb E\sum_{\tau=0}^{t-1}\beta^{t-\tau}\left\langle\nabla F(x_{\tau})-\nabla F(x_{\tau+1}), m_{\tau+1}-\tilde m_{\tau+1}\right\rangle
\end{aligned}
\end{equation}
Therefore
\begin{equation}
\begin{aligned}
    -\mathbb E\sum_{t=0}^{T-1}\left\langle \nabla F(x_t), m_{t+1}-\tilde m_{t+1}\right\rangle\le \frac {\beta}{1-\beta}\sum_{t=0}^{T-1}\max\left(\mathbb E\left\langle\nabla F(x_{t})-\nabla F(x_{t+1}), m_{t+1}-\tilde m_{t+1}\right\rangle,0\right)
\end{aligned}
\end{equation}
We now bound $\mathbb E[\left\langle\nabla F(x_{t})-\nabla F(x_{t+1}), m_{t+1}-\tilde m_{t+1}\right\rangle]$. 
\begin{equation}
\begin{aligned}
\label{hessian}
    &\quad \mathbb E\left\langle\nabla F(x_{t})-\nabla F(x_{t+1}), m_{t+1}-\tilde m_{t+1}\right\rangle\\
    &=\mathbb E\int_{0}^{1} (x_{t}-x_{t+1})^T\nabla^2 F(\mu x_{t} + (1-\mu) x_{t+1})(m_{t+1}-\tilde m_{t+1})\mathrm d \mu\\
    &=\mathbb E\left[\min\left(\eta,\frac{\gamma}{\|m_{t+1}\|}\right)\int_{0}^{1} \nu m_{t+1}^T\nabla^2 F(\mu x_{t} + (1-\mu) x_{t+1})(m_{t+1}-\tilde m_{t+1})\mathrm d \mu\right]\\
        &\quad +\mathbb E\left[\min\left(\eta,\frac{\gamma}{\|\nabla f(x_t,\xi_t)\|}\right)\int_{0}^{1} (1-\nu)\nabla f(x_{t},\xi_{t})^T\nabla^2 F(\mu x_{t} + (1-\mu) x_{t+1})(m_{t+1}-\tilde m_{t+1})\mathrm d \mu\right]\\
    &\le\mathbb E\left[\min\left(\eta,\frac{\gamma}{\|m_{t+1}\|}\right)\int_{0}^{1} \nu \tilde m_{t+1}^T\nabla^2 F(\mu x_{t} + (1-\mu) x_{t+1})(m_{t+1}-\tilde m_{t+1})\mathrm d \mu\right]\\
        &\quad +\mathbb E\left[\min\left(\eta,\frac{\gamma}{\|\nabla f(x_t,\xi_t)\|}\right)\int_{0}^{1} (1-\nu)\nabla F(x_{t})^T\nabla^2 F(\mu x_{t} + (1-\mu) x_{t+1})(m_{t+1}-\tilde m_{t+1})\mathrm d \mu\right]\\
        &\quad + \eta \mathbb E[(AL_0+BL_1\|\nabla F(x_t)\|)]\sigma^2(1-\beta)\left(\frac {\nu} {1+\beta}+1-\nu\right)\\
    &\le \mathbb E\left[\min\left(\eta,\frac{\gamma}{\|m_{t+1}\|}\right)\nu(AL_0+BL_1\|\nabla F(x_t)\|)\|\tilde m_{t+1}\|\sigma\right]\\
        &\quad +\mathbb E\left[\min\left(\eta,\frac{\gamma}{\|\nabla f(x_t,\xi_t)\|}\right)(1-\nu)(AL_0+BL_1\|\nabla F(x_t)\|\|\nabla F(x_t)\|\sigma\right]\\
        &\quad + \eta \mathbb E[(AL_0+BL_1\|\nabla F(x_t)\|)]\sigma^2(1-\beta)\left(\frac {\nu} {1+\beta}+1-\nu\right)\\
    &\le \mathbb E\left[\eta(\nu\|\tilde m_{t+1}\|+(1-\nu)\|\nabla F(x_t)\|)AL_0\sigma\right]\\
        &\quad +\mathbb E\left[\left(\nu\min\left(\eta,\frac{\gamma}{\|m_{t+1}\|}\right)\|\tilde m_{t+1}\|+(1-\nu)\min\left(\eta,\frac{\gamma}{\|\nabla f(x_t,\xi_t)\|}\right)\|\nabla F(x_t)\|\right)BL_1\|\nabla F(x_t)\|\sigma\right]\\
        &\quad + \eta \mathbb E[(AL_0+BL_1\|\nabla F(x_t)\|)]\sigma^2(1-\beta)\left(\frac {\nu} {1+\beta}+1-\nu\right)\\
    &\le \eta\mathbb E\left[(\nu\|\tilde m_{t+1}\|+(1-\nu)\|\nabla F(x_t)\|)AL_0\sigma\right] + \eta AL_0\sigma^2(1-\beta)\left(\frac {\nu} {1+\beta}+1-\nu\right)\\
        &\quad +\frac 6 5 \gamma \mathbb E\left[BL_1\|\nabla F(x_t)\|\sigma\right]+\frac 1 5 \gamma \mathbb E\left[BL_1\|\nabla F(x_t)\|\sigma\right]
\end{aligned}
\end{equation}
where the first inequality uses the proof of Corollary \ref{DesCor} and  Lemma \ref{lemma_mom_clip_3}, and the last inequality uses $\gamma/\eta=5\sigma$. By taking summation of the above inequality we obtain
\begin{equation}
\begin{aligned}
\label{lemma_mom_clip_stochastic_z}
-\sum_{t=0}^{T-1}\mathbb E\left\langle\nabla F(x_{t}), m_{t+1}-\tilde m_{t+1}\right\rangle
&\le \frac {\beta}{1-\beta}\sum_{t=0}^{T-1}\left(\eta AL_0+\frac 7 5 \gamma BL_1\right)\sigma \|\nabla F(x_t)\|\\
    &\quad+ \eta AL_0\sigma^2\beta\left(\frac {\nu}{1+\beta}+1-\nu\right)T+\frac {\nu\beta^2}{(1-\beta)^2}\eta AL_0\sigma\|\nabla F(x_0)\|
\end{aligned}
\end{equation}
where we uses the following inequality to convert $\|\tilde m_{t+1}\|$ to $\|\nabla F(x_t)\|$.
\begin{equation}
\label{lemma_mom_clip_stochastic_w}
\begin{aligned}
    \sum_{t=0}^{T-1}\|\tilde m_{t+1}\|&\le \frac {\beta}{1-\beta}\|\nabla F(x_0)\|+(1-\beta)\sum_{t=0}^{T-1}\sum_{\tau=0}^t \beta^{t-\tau}\|\nabla F(x_{\tau})\|\\
    &\le \frac {\beta}{1-\beta}\|\nabla F(x_0)\|+\sum_{t=0}^{T-1}\|\nabla F(x_t)\|
\end{aligned}
\end{equation}
Combining \eqref{lemma_mom_clip_stochastic_0}, \eqref{lemma_mom_clip_stochastic_x}, \eqref{lemma_mom_clip_stochastic_y}, \eqref{lemma_mom_clip_stochastic_z}, using inequality \eqref{lemma_mom_clip_stochastic_w} to get rid of the term $\|\tilde m_{t}\|$ and applying Lemma \ref{lemma_mom_clip_3}, we obtain
\begin{equation}
\label{lemma_mom_clip_stochastic_3}
\begin{aligned}
    &\quad \mathbb E\sum_{t=0}^{T-1}(G(x_{t+1},\tilde m_{t+1})-G(x_t,\tilde m_t))\\
    &\le \frac {12\gamma} {5\beta(1-\beta)}\|\tilde \delta_0\|+\frac {\nu\beta}{(1-\beta)^2}AL_0\eta^2\sigma\|\nabla F(x_0)\|+\left(\frac {12} {5(1-\beta)}AL_0+\frac {12\gamma}{5\eta(1-\beta)}BL_1 + \frac 1 2 AL_0\right)\gamma^2 T_{\mathcal{S}}+\\
        &\quad\gamma\mathbb E\sum_{t \in \mathcal S} \left[-\left(\left(\frac 4 5\times \frac 2 5-\frac {\eta}{\gamma}\sigma\right)\|\nabla F(x_t)\|+\left(\frac {16}{25}\times \frac {3\gamma}{5\eta}-\frac 7 5 \sigma\right)\right)+\frac {\gamma} 2 BL_1 \|\nabla F(x_t)\| +\frac {12\gamma} {5(1-\beta)} BL_1 \|\nabla F(x_t)\|\right] \\
        &\quad+\mathbb E\sum_{t\in\overline{\mathcal S}}\left(-\frac {\eta} 2 (1-\nu\beta)\|\nabla F(x_t)\|^2-\frac {\eta} 2 \nu\beta\|m_t\|^2+\frac {\gamma^2} 2 BL_1 \|\nabla F(x)\|\right) \\
        &\quad+\mathbb E \sum_{t=0}^{T-1}\eta^2 AL_0\sigma\left(\|\nabla F(x_t)\|+\left(\frac{(1-\nu\beta)^2}{2}+\frac {1-\beta}{2(1+\beta)} \nu^2\beta^2\right) \sigma\right)\\
        &\quad +\frac{\nu\beta\eta\sigma}{1-\beta}\mathbb E\left( \sum_{t=0}^{T-1}(AL_0\eta\|\nabla F(x_{t})\|+\frac 7 5 BL_1\gamma\|\nabla F(x_t)\|\right)+ AL_0\eta^2 \sigma^2\nu\beta\left(\frac {\nu}{1+\beta}+1-\nu\right)T\\
    &=P_0+\mathbb E\left(P_1 T_{\mathcal S} + P_2(T-T_{\mathcal S})+ \sum_{t\in \mathcal S} P_3 \|\nabla F(x_t)\|+ \sum_{t\in \overline{\mathcal S}} P_4 \|\nabla F(x_t)\|\right)\\
        &\qquad -\mathbb E\sum_{t\in \overline{\mathcal S}}\frac {\eta} 2 \left((1-\beta)\|\nabla F(x_t)\|^2+\beta\|\tilde m_t\|^2\right)
\end{aligned}
\end{equation}
where
\begin{align*}
    P_0&=\frac {12\gamma} {5\beta(1-\beta)}\|\tilde \delta_0\|+\frac {\nu\beta}{(1-\beta)^2}AL_0\eta^2\sigma\|\nabla F(x_0)\|=\frac {\nu\beta}{(1-\beta)^2}AL_0\eta^2\sigma\|\nabla F(x_0)\|\\
    P_1&=-\frac {16}{25}\times \frac {3\gamma^2}{5\eta}+\left(\frac {12\gamma^2} {5(1-\beta)} +\frac {\gamma^2} 2 \right)AL_0+\frac {12\gamma^3}{5\eta(1-\beta)}BL_1 +\frac 7 5 \gamma \sigma +P_2\\
    P_2&=AL_0\eta^2\sigma^2\left(\frac{(1-\nu\beta)^2}{2}+\frac {1-\beta}{2(1+\beta)} \nu^2\beta^2\right)+AL_0\eta^2 \sigma^2\nu\beta\left(\frac {\nu}{1+\beta}+1-\nu\right)=\frac 1 2 \eta^2 AL_0\sigma^2\\
    P_3&=-\frac 4 5 \times \frac 2 5 \gamma +\eta\sigma+\left(\frac {\gamma^2} 2+\frac {12\gamma^2}{5(1-\beta)}+\frac {\nu\beta \eta\sigma}{1-\beta}\times \frac 7 5\gamma\right)BL_1+\eta^2 AL_0\sigma+\frac {\nu\beta\sigma}{1-\beta}AL_0\eta^2\\
    P_4&=\eta^2 AL_0\sigma+\frac {\nu\beta\sigma}{1-\beta}\left(AL_0\eta+\frac 7 5BL_1\gamma\right)\eta+\frac {\gamma^2} 2 BL_1
\end{align*}
Let $\gamma\le\frac {\epsilon}{2\sigma}\min\left(\frac {\epsilon}{AL_0},\frac {1-\beta}{AL_0},\frac {1-\beta}{25 BL_1}\right)$, and fix the ratio $\gamma/\eta=5\sigma$. Then for small enough $\epsilon<0.1$ and large enough noise $\sigma>1$,
\begin{equation}
\label{lemma_mom_clip_stochastic_5}
\begin{aligned}
    P_1&\le \left(-\frac {16}{25}\times 3\sigma+\frac {3\epsilon}{2\sigma}+\frac {12\epsilon} {50}+\frac 7 5 \sigma+\frac {\epsilon^2}{100\sigma} \right)\gamma\le -\frac 3 {10}\sigma\gamma\\
    P_3 &\le \left(-\frac 4 5\times \frac 2 5+\frac 1 {5} +\left(\frac {1-\beta} 2+\frac {12} 5 +\frac 7 5\times \frac {\beta} 5\right)\frac {\epsilon} {50\sigma} +\frac {\epsilon^2} {50\sigma^2} +\frac {\epsilon} {50\sigma^2}\right)\gamma\le -\frac 1 {10} \gamma
\end{aligned}
\end{equation}
We can also bound $P_4$ as follows:
\begin{equation}
\begin{aligned}
    P_4&\le\frac 1 {1-\beta} AL_0\sigma \eta^2+\left(\frac {\beta}{1-\beta}\times\frac 7 5+\frac 5 2\right)BL_1\sigma\gamma\eta\\
    &\le \frac 1 {10} \epsilon\eta+\left(\frac {\beta}{1-\beta}\times \frac 7 5+\frac 5 2\right)\frac {\epsilon} {50} (1-\beta)\eta\\
    &\le \frac 1 {10} \epsilon\eta+\frac 1 {20} \epsilon\eta = \frac{3}{20} \epsilon\eta
\end{aligned}
\end{equation}
Applying the above estimates and rearranging \eqref{lemma_mom_clip_stochastic_3}, we have
\begin{equation}\begin{aligned}
\label{rearrange}
&\quad G(x_0)-F^* +P_0\\
&\geq \mathbb{E} \left[ \sum_{t \in \mathcal{S}} \left( \frac 3 {10}\sigma \gamma + \frac 1{10}\gamma \|\nabla F(x_t)\| \right) + \sum_{t \in \overline{\mathcal{S}}} \left( \frac{\eta}{2} \left( (1-\nu\beta)\|\nabla F(x_t)\|^2 + \nu\beta \|m_t\|^2 \right) - \frac{AL_0}{2}\sigma^2\eta^2 - \frac{3}{20} \epsilon\eta \|\nabla F(x_t)\|\right) \right] \\
&\geq \mathbb{E} \left[ \sum_{t \in \mathcal{S}} \left(\frac 3 {10} \sigma \gamma + \frac{1}{10}\gamma \|\nabla F(x_t)\| \right) + \sum_{t \in \overline{\mathcal{S}}} \left( \frac{\eta}{2} \left( (1-\nu\beta)\|\nabla F(x_t)\|^2  \right) - \frac{AL_0}{2}\sigma^2\eta^2 - \frac{3}{20} \epsilon\eta \|\nabla F(x_t)\|\right) \right] \\
&\quad + \frac{1}{2}\eta \nu\beta \mathbb{E} \left[ \sum_{t \in \overline{\mathcal{S}}} \left( 2\epsilon \|\tilde{m_t}\| - \epsilon^2 \right) \right] \\
\end{aligned}\end{equation}

Due to Lemma \ref{lemma_convert_m_to_grad} ($AL_0\eta\sigma\le \frac {\epsilon} {10} (1-\beta), BL_1\gamma \le \frac {\epsilon} {50}(1-\beta)$), we clearly have
\begin{equation}
\begin{aligned}
\label{sigma_mt}
    &\quad \mathbb E \sum_{t \in \overline{\mathcal{S}} }\|\tilde m_t\|\\
    &\ge \left(1-\frac {\epsilon} {10}\right)\mathbb{E}\left[ \sum_{t \in \overline{\mathcal{S}} } \left(\left(1-\frac {\epsilon} {5}\right)\|\nabla F(x_t)\| - \frac {\epsilon} {10} \right)\right]
        -\mathbb E \left[\sum_{\tau\in S} \left(\frac {\gamma}{1-\beta} (AL_0+BL_1\|\nabla F(x_{\tau})\|)\right) \right]\\
    &\ge \left(1-\frac 3 {10}\epsilon\right)\mathbb{E}\left[ \sum_{t \in \overline{\mathcal{S}} } \left(\| \nabla F(x_t)\| \right)\right]- \frac {\epsilon} {10}(T-T_{\mathcal S})  -\mathbb E \left[\sum_{\tau\in S} \left(\frac {\gamma}{1-\beta} (AL_0+BL_1\|\nabla F(x_{\tau})\|)\right) \right]
\end{aligned}
\end{equation}
Define 
\begin{equation}
    \begin{aligned}
    U(x) &:= \left( \frac{1}{10}\gamma - \dfrac{\nu\beta}{1-\beta} BL_1 \epsilon\gamma\eta\right) \|\nabla F(x)\| + \left( \frac 3 {10}\sigma\gamma - \dfrac{\nu\beta}{1-\beta}AL_0\epsilon\gamma\eta \right) \\
    V(x) &:= \dfrac{1}{2}\eta (1-\nu\beta )\|\nabla F(x)\|^2 + \left( \dfrac{19}{20}\nu\beta\eta\epsilon - \dfrac{3}{20}\epsilon\eta\right) \|\nabla F(x)\| - \left( \dfrac{1}{2}AL_0\sigma^2\eta^2 + \dfrac{1}{2}\nu\beta\epsilon^2\eta + \dfrac{1}{10}\nu\beta\epsilon^2\eta \right)
    \end{aligned}
\end{equation}
Plugging \eqref{sigma_mt} into \eqref{rearrange}, we obtain
\begin{equation}\begin{aligned}
\label{min_UV}
&\quad G(x_0)-F^*+P_0 \geq \mathbb{E} \left[ \sum_{t \in \mathcal{S}} U(x_t) + \sum_{t \in \overline{\mathcal{S}}} V(x_t) \right] \\
&= \mathbb{E} \left[ \sum_{t=1}^T \left( \mathbb{I}_{t \in \mathcal{S}} U(x_t) + \mathbb{I}_{t \in \overline{\mathcal{S}}} V(x_t) \right) \right] 
\geq \mathbb{E} \left[ \sum_{t=0}^{T-1} \min \{ U(x_t), V(x_t) \}\right]
\end{aligned}\end{equation}

Since 
\begin{equation}
    \begin{aligned}
    \label{Uineq}
    U(x) &\geq \left( \dfrac{1}{10} - \dfrac{\nu\beta\epsilon^2}{50\sigma^2} \right)\gamma \|\nabla F(x)\| + \left( \frac 3 {10}\sigma\gamma - \dfrac{1}{10\sigma^2}\nu\beta\epsilon^2\gamma  \right) \\
    &\geq \dfrac{1}{20}\gamma \|\nabla F(x)\| + \frac 1 5\sigma\gamma
    \end{aligned}
\end{equation}
\begin{equation}
    \begin{aligned}
    \label{Vineq}
    V(x) &\geq \dfrac{1}{2}\eta (1-\nu\beta )\|\nabla F(x)\|^2 + \left( \dfrac{19}{20}\nu\beta\eta\epsilon - \dfrac{3}{20}\epsilon\eta\right) \|\nabla F(x)\| - \left(\frac 1 {20}+\frac {3} 5\nu\beta\right)\epsilon^2\eta \\
    &\geq \dfrac{1}{2}\eta (1-\nu\beta ) \left( 2\epsilon \|\nabla F(x)\|-\epsilon^2 \right) + \left( \dfrac{19}{20}\nu\beta\eta\epsilon - \dfrac{3}{20}\epsilon\eta\right) \|\nabla F(x)\| -\left(\frac 1 {20}+\frac {3} 5\nu\beta\right)\epsilon^2\eta\\
    &\geq \dfrac{4}{5}\epsilon\eta\|\nabla F(x)\| -\frac 4 5 \epsilon^2\eta
    \end{aligned}
\end{equation}
It clearly follows that $\min\{U(x), V(x)\}\ge \frac{4}{5}\epsilon\eta\|\nabla F(x)\| -\frac 4 5 \epsilon^2\eta$. Therefore
\begin{equation}
    G(x_0)-F^*+P_0 \geq \dfrac{4}{5}\epsilon\eta\mathbb E\sum_{t=0}^{T-1}(\|\nabla F(x)\| - \epsilon)
\end{equation}
Therefore, as long as $T > \dfrac{5}{4\epsilon^2\eta} \left( G(x_0)-F^*+P_0 \right)$, we have
$\dfrac{1}{T} \mathbb{E} \left[ \sum_{t=1}^T \|\nabla F(x_t)\| \right] < 2\epsilon$. \par
We finally show $G(x_0)-F^*+P_0=O(F(x_0)-F^*)$. Using Lemma \ref{GradNormBd}, 
\begin{equation}
    \frac 1{1-\beta}\min\left(\gamma\|m_0\|,\eta\|m_0\|^2\right)\le \frac 1 {50}\min \left(\frac {\|\nabla F(x_0)\|} {L_1},\frac{\|\nabla F(x_0)\|^2} {L_0}\right)\le\frac 8 {50}(F(x_0)-F^*)
\end{equation}
For the term $P_0$, if $\|\nabla F(x_0)\|=\Omega(L_0/L_1)$, we can similarly use Lemma $\ref{GradNormBd}$ to obtain $P_0=\mathcal{O}(F(x_0)-F^*)$. If $\|\nabla F(x_0)\|=\mathcal{O}(L_0/L_1)$, using $L_0\|\nabla F(x_0)\|\le L_1\|\nabla F(x_0)\|^2$ and Lemma  \ref{GradNormBd} leads to the result.
\end{proof}

\section{Discussion of the normalized momentum algorithm}
\label{appendix_normalized_momentum}
In this section we analyze in detail the theoretical aspects of the normalized momentum algorithm, as well as some practical issues. Recall that this algorithm can be seen as a special case of our clipping framework. For convenience we re-write it in Algorithm \ref{SNM}.

\setcounter{algocf}{1}

\begin{algorithm}[!htbp]
\SetKwInOut{KIN}{Input}
\caption{The Stochastic Normalized Momentum Algorithm(SNM)}
\label{SNM}
\KIN{Initial point $x_0$, initial momentum $m_0$, the learning rate $\eta$, momentum factor $\beta$ and the total number of iterations $T$ }
\For{$i \gets 1$ \textbf{to} $T$}{
	$m_t \gets \beta m_{t-1} + (1-\beta )\nabla f(x_{t-1},\xi_{t-1} )$\;
	$x_{t} \gets x_{t-1} - \eta \dfrac{m_t}{\|m_t\|}$\;
}
\end{algorithm}

We remark that SNM is different from the clipping methods in traditional sense, in that it makes a \textit{normalized} update each iteration. This algorithm has been analyzed in \citet{cutkosky2020momentum} for $L$-smooth functions. In that setting they were able to prove that SNM achieves a complexity of $\mathcal{O}(\Delta L \sigma^2 \epsilon^{-4})$. 

For $(L_0,L_1)$-smooth functions, we show that: \textbf{\textit{(a). }}With carefully chosen momentum parameter $\beta$ and step size $\eta$, SNM can achieve a complexity of $\mathcal{O}(\Delta L_0 \sigma^2 \epsilon^{-4})$, which is the same as the complexity we obtain in Theorem 3.2. \textbf{\textit{(b). }}There are some practical issues that make SNM less favorable than traditional clipping methods (such as the other three special cases of our framework discussed in Section 3 of the main paper).

The following results provides convergence guarantee for Algorithm \ref{SNM}.
\begin{lemma}
\label{SNMLemma}
Consider the algorithm that starts at $x_0$ and make updates $x_{t+1} = x_t - \eta m_{t+1}$. Define $\delta_t := m_{t+1} - \nabla F(x_t)$ be the estimation error. Assume $\eta\le c/L_1$ for some $c>0$ and let constants $A=1+e^c-\frac {e^c-1} c,B=\frac {e^c-1} c$. Then 
$$ F(x_{t+1})-F(x_t) \leq-\left( \eta - \frac{1}{2}BL_1 \eta^2 \right) \| \nabla F(x_t) \| +\frac 1 2 AL_0\eta^2 + 2 \eta \| \delta_t \| $$
And thus, by a telescope sum we have
$$\left( 1-\dfrac{1}{2}B L_1 \eta \right) \sum_{t=0}^{T-1} \| \nabla F(x_t) \| \leq \dfrac{F(x_0)-F(x_T)}{\eta} + \frac 1 2 A L_0T \eta + 2 \sum_{t=0}^{T-1} \| \delta \| $$
\end{lemma}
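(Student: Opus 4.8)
The plan is to apply the descent inequality (Lemma \ref{DesIneq}) to a single normalized step of Algorithm \ref{SNM} and then bound the resulting directional term. For the normalized update $x_{t+1}=x_t-\eta\, m_{t+1}/\|m_{t+1}\|$ we have $\|x_{t+1}-x_t\|=\eta\le c/L_1$, so the hypothesis of Lemma \ref{DesIneq} is satisfied, and it yields
\begin{equation}
F(x_{t+1})-F(x_t)\le -\eta\,\frac{\langle \nabla F(x_t),m_{t+1}\rangle}{\|m_{t+1}\|}+\frac{AL_0+BL_1\|\nabla F(x_t)\|}{2}\,\eta^2.
\end{equation}

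The only substantive work is to turn the first term into $-\|\nabla F(x_t)\|$ up to a noise correction. First I would invoke Lemma \ref{lemma_mom_clip_1} with $\mu=0$, $u=\nabla F(x_t)$, $v=m_{t+1}$ (equivalently, Cauchy--Schwarz directly), using $m_{t+1}-\nabla F(x_t)=\delta_t$, to get
\begin{equation}
-\frac{\langle \nabla F(x_t),m_{t+1}\rangle}{\|m_{t+1}\|}\le -\|m_{t+1}\|+\|\delta_t\|.
\end{equation}
Then the triangle inequality $\|m_{t+1}\|\ge\|\nabla F(x_t)\|-\|\delta_t\|$ converts the leading $-\|m_{t+1}\|$ into $-\|\nabla F(x_t)\|$, producing one additional $\|\delta_t\|$ for a total of $-\|\nabla F(x_t)\|+2\|\delta_t\|$. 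Substituting back and grouping the two $\|\nabla F(x_t)\|$ contributions—the linear term $-\eta\|\nabla F(x_t)\|$ together with the curvature term $\tfrac12 BL_1\|\nabla F(x_t)\|\eta^2$—gives exactly the claimed one-step bound $F(x_{t+1})-F(x_t)\le -(\eta-\tfrac12 BL_1\eta^2)\|\nabla F(x_t)\|+\tfrac12 AL_0\eta^2+2\eta\|\delta_t\|$, with the curvature-independent part $\tfrac12 AL_0\eta^2$ and the noise part $2\eta\|\delta_t\|$ carried through untouched.

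Finally I would telescope: summing over $t=0,\dots,T-1$ collapses the left-hand side to $F(x_T)-F(x_0)$, and the curvature term accumulates to $\tfrac12 AL_0 T\eta^2$ while the noise term becomes $2\eta\sum_t\|\delta_t\|$. Isolating the gradient-norm sum and dividing through by $\eta$ produces the stated cumulative inequality. The proof is essentially computational; the single idea that must be handled cleanly—converting $\|m_{t+1}\|$ back to $\|\nabla F(x_t)\|$ at the cost of exactly one extra $\|\delta_t\|$—is the step to get right, since any looser treatment of the momentum--gradient gap would inflate the constant in front of the noise term.
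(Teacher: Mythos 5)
Your proof is correct and follows essentially the same route as the paper: apply the descent inequality (Lemma \ref{DesIneq}) to the normalized step, bound the directional term by $-\|\nabla F(x_t)\|+2\|\delta_t\|$, and telescope. Your two-step argument (Cauchy--Schwarz plus the triangle inequality) is exactly the content of Lemma \ref{lemma_mom_clip_1} with $\mu=1$, which is how the paper phrases that same bound.
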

\begin{proof}
Since $\|x_{t+1}-x_t\| = \eta_t$, by Lemma \ref{DesIneq} we have
\begin{equation}\begin{aligned}
F(x_{t+1})-F(x_t) &\leq - \dfrac{\eta}{\| m_{t+1} \|} \left\langle \nabla F(x_t), m_{t+1} \right\rangle + \dfrac{1}{2}\eta^2 \left(AL_0+BL_1 \|\nabla F(x_t)\| \right) \\
&\leq \eta \left( -\|\nabla F(x_t)\|+2\|\delta_t\| \right) + \dfrac{1}{2}\eta^2 \left(AL_0+BL_1 \|\nabla F(x_t)\| \right)  \\
&\leq -\left( \eta - \frac{1}{2}BL_1 \eta^2 \right) \| \nabla F(x_t) \| +\frac 1 2 AL_0\eta^2 + 2 \eta \| \delta_t \| \notag
\end{aligned}\end{equation}
where in the second inequality we use Lemma \ref{lemma_mom_clip_1}.
\end{proof}
\begin{theorem}
\label{theorem1}
Suppose that Assumptions 1,2 and 4 holds, and $\Delta = F(x_0) - F^*$ where $F^* = \inf_{x \in \mathbb{R}^d} F(x)$. Let $m_0=\nabla F(x_0)$ in Algorithm \ref{SNM} for simplicity, and denote $\alpha = 1-\beta$. If we choose $\eta = \Theta\left(\min(L_1^{-1}, L_0^{-1}\epsilon)\alpha\right)$ and $\alpha = \Theta\left( \sigma^{-2}\epsilon^{2} \right)$, then  as long as $\epsilon = \mathcal{O}\left( \min \left\{ \frac{L_0}{L_1},\sigma\right\}\right)$, we have
$$\dfrac{1}{T} \sum_{t=0}^{T-1} \mathbb{E} \left[ \|\nabla F(x_t)\| \right] \leq \epsilon$$
holds in $T = \mathcal{O}(\Delta L_0 \sigma^2 \epsilon^{-4})$ iterations.
\end{theorem}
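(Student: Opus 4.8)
The plan is to start from the telescoped descent inequality already established in Lemma \ref{SNMLemma},
\[
\left( 1-\tfrac{1}{2}B L_1 \eta \right) \sum_{t=0}^{T-1} \| \nabla F(x_t) \| \leq \frac{\Delta}{\eta} + \tfrac{1}{2} A L_0 T \eta + 2 \sum_{t=0}^{T-1} \| \delta_t \|,
\]
and to control the accumulated error $\sum_{t=0}^{T-1}\|\delta_t\|$ in expectation, where $\delta_t=m_{t+1}-\nabla F(x_t)$. The key idea is to split $\delta_t$ into a \emph{noise} part and a \emph{lag} part by introducing the true momentum $\tilde m$ from \eqref{tilde_m_def}: writing $\delta_t = (m_{t+1}-\tilde m_{t+1}) + (\tilde m_{t+1}-\nabla F(x_t))$, the first term is a pure average of the stochastic noises, while the second, call it $\tilde\delta_t$, measures how far the averaged past gradients lie from the current one.

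First I would bound the noise part. By Lemma \ref{lemma_mom_clip_3} we have $\mathbb E\|m_{t+1}-\tilde m_{t+1}\|^2 \le \frac{1-\beta}{1+\beta}\sigma^2 \le \alpha\sigma^2$, so Jensen's inequality gives $\mathbb E\|m_{t+1}-\tilde m_{t+1}\| \le \sqrt\alpha\,\sigma$ and hence the accumulated noise is at most $\sqrt\alpha\,\sigma\,T$. Next I would bound the lag part. Since $\tilde m_0 = \nabla F(x_0)$ one checks $\tilde\delta_0=0$, and the recursion $\tilde m_{t+1}=\beta\tilde m_t+(1-\beta)\nabla F(x_t)$ yields $\tilde\delta_t = \beta\bigl(\tilde\delta_{t-1} + \nabla F(x_{t-1})-\nabla F(x_t)\bigr)$. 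Because SNM makes a normalized step $\|x_t-x_{t-1}\|=\eta\le c/L_1$, Corollary \ref{DesCor} gives $\|\nabla F(x_{t-1})-\nabla F(x_t)\|\le (AL_0+BL_1\|\nabla F(x_{t-1})\|)\eta$. Unrolling the recursion and exchanging the order of summation then produces
\[
\sum_{t=0}^{T-1}\|\tilde\delta_t\| \le \frac{\beta\eta}{1-\beta}\left(AL_0 T + BL_1\sum_{t=0}^{T-1}\|\nabla F(x_t)\|\right).
\]

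Substituting both bounds into the telescoped inequality and writing $S:=\mathbb E\sum_{t=0}^{T-1}\|\nabla F(x_t)\|$, I would move all gradient terms to the left to obtain
\[
\left(1-\tfrac12 BL_1\eta - \frac{2\beta BL_1\eta}{1-\beta}\right) S \le \frac{\Delta}{\eta} + \left(\tfrac12 + \frac{2\beta}{1-\beta}\right)AL_0\eta T + 2\sqrt\alpha\,\sigma T.
\]
The final step is the parameter selection. Choosing $\eta=\Theta(\min(L_1^{-1},L_0^{-1}\epsilon)\,\alpha)$ keeps $BL_1\eta/(1-\beta)=O(1)$, so the left coefficient is bounded below by a constant; dividing by $T$, the term $2\sqrt\alpha\,\sigma$ is $O(\epsilon)$ exactly when $\alpha=\Theta(\sigma^{-2}\epsilon^2)$ (which also forces $\alpha<1$ via $\epsilon=O(\sigma)$), while $AL_0\eta/(1-\beta)=\Theta(\min(L_0/L_1,\epsilon))=\Theta(\epsilon)$ precisely when $\epsilon=O(L_0/L_1)$, so the $L_1$-dependence disappears from the dominating term. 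The residual $\Delta/(T\eta)$ is $O(\epsilon)$ once $T=\Omega(\Delta/(\epsilon\eta))$, and with $\eta=\Theta(L_0^{-1}\sigma^{-2}\epsilon^3)$ this reads $T=O(\Delta L_0\sigma^2\epsilon^{-4})$, giving $\frac1T S\le\epsilon$.

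I expect the main obstacle to be the lag term $\sum_{t}\|\tilde\delta_t\|$, where the $(L_0,L_1)$-smoothness enters through Corollary \ref{DesCor}: the bias introduced by momentum must be absorbed by shrinking $\eta$, and this is exactly the trade-off (small $\alpha$ suppresses noise but inflates the effective lag unless $\eta$ is correspondingly small) that dictates the coupled scaling of $\eta$ and $\alpha$. Keeping the numerical constants $A,B$ (both close to $1$) harmless throughout the rearrangement, rather than any single inequality, is the delicate bookkeeping.
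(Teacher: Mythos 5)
Your proposal is correct and takes essentially the same route as the paper's proof: the paper also starts from Lemma \ref{SNMLemma}, unrolls $\delta_t$ into a geometric sum of gradient-change terms $S(x_{\tau-1},x_\tau)$ bounded via Corollary \ref{DesCor} plus a weighted sum of pure noises bounded by $\sqrt{\alpha}\sigma$ in expectation, and concludes with the identical parameter choices; your noise/lag split through the true momentum $\tilde m$ is exactly this decomposition in different notation (the noise part \emph{is} $m_{t+1}-\tilde m_{t+1}$ and the lag part \emph{is} the $S$-term sum), with Lemma \ref{lemma_mom_clip_3} playing the role of the paper's in-line orthogonality computation.
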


\begin{proof}
Define the estimation errors $\delta_t := m_{t+1} - \nabla F(x_t)$. Denote $S(a,b) := \nabla F(a) - \nabla F(b)$, then for $a,b$ such that $\|a-b\|=\eta \leq c/L_1$, we can upper bound $S(a,b)$ using Corollary \ref{DesCor}:
\begin{equation}
\label{Sab}
\|S(a,b)\| \leq \eta \left( AL_0+BL_1 \|\nabla F(b) \| \right)
\end{equation}
We can use $S(a,b)$ to get a recursive relationship:
\begin{equation}
\begin{aligned}
    \delta_{t+1}&=\beta m_{t+1}+(1-\beta)\nabla f(x_{t+1},\xi_{t+1})-\nabla F(x_{t+1})\\
    &= \beta S\left(x_{t}, x_{t+1}\right)+\beta \delta_t +(1-\beta)(\nabla f(x_{t+1},\xi_{t+1})-\nabla F(x_{t+1}))
\end{aligned}
\end{equation}
Denote $\delta_t'=\nabla f(x_t,\xi_t)-\nabla F(x_t)$, then
$$\delta_{t}=\beta \sum_{\tau=0}^{t-1}\beta^{\tau} S\left(x_{t-\tau-1}, x_{t-\tau}\right)+(1-\beta) \sum_{\tau=0}^{t-1}\beta^{\tau}  \delta_{t-\tau}'+(1-\beta)\beta^t \delta_0'$$
Using triangle inequality and plugging in the estimate \eqref{Sab} , we have
\begin{equation}
\label{snm_x}
    \left\|\delta_{t}\right\| \leq (1-\beta)\left\|\sum_{\tau=0}^{t}\beta^{\tau} \delta_{t-\tau}'\right\|+\beta \eta \sum_{\tau=0}^{t-1}\beta^{\tau}\left(AL_0+BL_1 \|\nabla F(x_{t-\tau-1}) \| \right)
\end{equation}
Taking a telescope summation of \ref{snm_x} and using Assumption 2.4 we obtain
\begin{equation}\begin{aligned}
\mathbb{E} \left[ \sum_{t=0}^{T-1} \| \delta_t \| \right]
&\leq   T (1-\beta) \sqrt{\sum_{\tau=0}^{+\infty}\beta^{2 \tau} \sigma^{2}}+\dfrac{AL_0\eta T}{1-\beta} + \dfrac{BL_1\eta}{1-\beta} \sum_{t=0}^{T-1} \mathbb{E}\left[\left\|\nabla F\left(x_{t}\right)\right\|\right] \\
&\leq \sqrt{\alpha}T\sigma +\dfrac{ATL_0\eta}{\alpha} + \dfrac{BL_1\eta}{\alpha} \sum_{t=0}^{T-1} \mathbb{E}\left[\left\|\nabla F\left(x_{t}\right)\right\|\right] 
\end{aligned}\end{equation}
Now we use Lemma \ref{SNMLemma}:
$$\left( 1-\left(\dfrac{1}{2}+\frac 2 {\alpha}\right) BL_1 \eta \right) \mathbb E\sum_{t=0}^{T-1} \| \nabla F(x_t) \| \leq \dfrac{\Delta}{\eta} + \frac 1 2 AL_0T \eta + 2\left(  \sqrt{\alpha}T\sigma +\dfrac{AL_0\eta T}{\alpha} \right)$$
If we choose $\eta = \Theta\left(\min(L_1^{-1}, L_0^{-1}\epsilon)\alpha\right)$ and $\alpha = \Theta\left( \sigma^{-2}\epsilon^{2} \right)$, then
$$\left( 1-\left(\dfrac{1}{2}+\frac 2 {\alpha}\right) BL_1 \eta \right) =\Theta(1)$$
In this case
$$\frac 1 T \mathbb E\sum_{t=0}^{T-1} \| \nabla F(x_t) \| =\mathcal O \left(\dfrac{\Delta}{\eta T} + \frac 1 2 AL_0 \eta  + \sqrt{\alpha}\sigma +\dfrac{AL_0\eta }{\alpha} \right)=\mathcal O\left(\frac{\Delta}{\eta T}  +\epsilon\right)$$
Therefore for $T=\Theta\left(\frac{\Delta}{\eta \epsilon}\right)$, we have $\frac 1 T \mathbb E\sum_{t=0}^{T-1} \| \nabla F(x_t) \| =\mathcal O(\epsilon)$. If $\epsilon=\mathcal O(L_0/L_1)$, then $\frac{\Delta}{\eta \epsilon}$ reduces to $\Delta L_0\sigma^2\epsilon^{-4}$.
\end{proof}

We have shown the theoretical superiority of Algorithm \ref{SNM}. Specifically, it enjoys the same complexity as Theorem 3.2. However we notice some potential drawbacks of Algorithm \ref{SNM}:
\begin{itemize}
    \item \textit{Firstly,} the step size of Algorithm \ref{SNM} is at the order of $\mathcal{O}\left( \epsilon^{3} \right)$, while the step size we chose in Theorem 3.2 is $\mathcal{O}\left( \epsilon^{2} \right)$. Previous works have noticed that a smaller step size makes it easier to be trapped in a sharp local minima , which may result in worse generalization \citep{kleinberg2018alternative}.
    \item \textit{Secondly,} although the complexity of Algorithm \ref{SNM} is the same as Theorem 3.2 for small $\epsilon$, it requires a more restrictive upper bound of $\epsilon$ to ensure the $\epsilon^{-4}$ term dominates. For instance with a poor initialization, $\Delta$ may very large. This suggests that in practice, where we do not get into a very small neighbourhood of stationary point, the performance of Algorithm \ref{SNM} may be worse.
\end{itemize}

\section{Details of Lower Bounds in Section 3.3}
\label{appendix_lower_bound}
In this section we discuss the lower bound for SGD in \citet{drori2019complexity} in detail. The following result is taken from this paper:

\begin{theorem}
\label{lower_bound_SGD}
\textbf{[Theorem 2 in \citet{drori2019complexity}]} Consider a first-order method that given a function $F: \mathbb{R}^{d} \rightarrow \mathbb{R}$ and an initial point ${x}_{0} \in \mathbb{R}^{d}$ generates a sequence of points $\left\{{x}_{i}\right\}$ satisfying
\[
{x}_{t+1}={x}_{t}+\eta_{{x}_{0}, \ldots, {x}_{t}} \cdot\left(\nabla F\left({x}_{t}\right)+\xi_{t}\right), \quad t \in[T-1]
\]
where $\xi_{i}$ are some random noise vectors, and returns a point ${x}_{\text {out }} \in \mathbb{R}^{d}$ as a non-negative linear combination of the iterates:
\[
{x}_{{out}}=\sum_{t=0}^{T} \zeta_{{x}_{0}, \ldots, {x}_{T}}^{(t)} {x}_{t}
\]
We further assume that the step sizes $\eta_{{x}_{0}, \ldots, {x}_{t}}$ and aggregation coefficients $\zeta_{{x}_{0}, \ldots, {x}_{T}}^{(t)}$ are deterministic functions of the norms and inner products between the vectors ${x}_{0}, \ldots, {x}_{t}, \nabla F\left({x}_{0}\right)+$
$\xi_{0}, \ldots, \nabla F\left({x}_{t}\right)+\xi_{t} .$ Then for any $L, \Delta, \sigma > 0$ and $T \in \mathbb{N}$ there exists a function $F: \mathbb{R}^{d} \mapsto \mathbb{R}$
with $L-$Lipschitz gradient, a point ${x}_{0} \in \mathbb{R}^{d}$ and independent random variables $\xi_{t}$ with $\mathbb{E}\left[\xi_{t}\right]=0$
and $\mathbb{E}\left[\left\|\xi_{t}\right\|^{2}\right]=\sigma^{2}$ such that $\forall t \in[T]$
\[
\begin{array}{l}
F\left({x}_{0}\right)-F\left({x}_{t}\right) \stackrel{\text {a.s.}}{\leq} \Delta \\
\nabla F\left({x}_{t}\right) \stackrel{\text {a.s.}}{=} \gamma
\end{array}
\]
and in addition
\[
F\left({x}_{0}\right)-F\left({x}_{\text {out }}\right) \stackrel{a . s .}{\leq} \Delta+\frac{\sigma}{2 L} \sqrt{\frac{L \Delta}{T}}
\]
\[
\nabla F\left({x}_{\text {out }}\right) \stackrel{a . s .}{=} \gamma
\]
where $\gamma \in \mathbb{R}^{d}$ is a vector such that
\[
\|\gamma\|^{2}=\frac{\sigma}{2} \sqrt{\frac{L \Delta}{T}}
\]
\end{theorem}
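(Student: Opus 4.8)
The statement is a lower bound, so the plan is not to deduce anything but to \emph{exhibit} a single hard instance — a function $F$ with $L$-Lipschitz gradient, an initial point $x_0$, and a noise distribution with $\mathbb{E}[\xi_t]=0$, $\mathbb{E}\|\xi_t\|^2=\sigma^2$ — on which \emph{every} method in the stated equivariant class is forced to the claimed behavior. The guiding principle is that a function whose gradient is \emph{constant} carries no information about where its minimizer lies: the oracle responses $\nabla F(x_t)+\xi_t$ collapse to i.i.d.\ noise around a fixed vector $\gamma$, so the only thing the algorithm can do is average out the noise, and averaging $T$ samples of a variance-$\sigma^2$ quantity cannot beat the statistical floor $1/\sqrt{T}$. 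This floor, matched against the smoothness budget $L$ and the suboptimality budget $\Delta$, is what produces $\|\gamma\|^2=\frac{\sigma}{2}\sqrt{L\Delta/T}$.

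I would build the hard function as a one-dimensional profile along a hidden unit direction $u$, namely $F(x)=\psi(\langle u, x-x_0\rangle)$ with $\psi:\mathbb{R}\to\mathbb{R}$ convex and $L$-smooth. Take $\psi$ to be \emph{affine with slope} $\|\gamma\|$ on an interval $[w_{\min},w_0]$ — so that $\nabla F\equiv\|\gamma\|\,u=\gamma$ exactly there — and then smoothly capped (curvature at most $L$) down to a minimum value set to $\psi(w_0)-\Delta$, with the affine length calibrated by $\|\gamma\|(w_0-w_{\min})=\Delta$. Two of the four a.s.\ conclusions are then essentially free: since $F\ge\min F=F(x_0)-\Delta$ globally, the invariant $F(x_0)-F(x_t)\le\Delta$ holds for \emph{every} point, and $\nabla F(x_t)=\gamma$ holds for every iterate whose projection $\langle u,x_t-x_0\rangle$ still lies on the affine piece.

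The heart of the argument is the equivariance reduction together with placing the instance adaptively. Because the step sizes $\eta_{x_0,\dots,x_t}$ and aggregation weights $\zeta^{(t)}$ depend only on the norms and inner products of $x_0$ and the observed gradients, the method's entire sequence of scalar multipliers is covariant under orthogonal maps and is fixed \emph{before} the hidden direction $u$ is revealed; the iterates then read $x_t=x_0+\sum_{s<t}\eta_s(\gamma+\xi_s)$, so the relevant projection is $\langle u,x_t-x_0\rangle=\sum_{s<t}\eta_s\bigl(\|\gamma\|+\langle u,\xi_s\rangle\bigr)$ — the accumulated signal $\|\gamma\|\sum\eta_s$ contaminated by the noise sum $\sum\eta_s\langle u,\xi_s\rangle$. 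After fixing these decisions I would choose $u$ (equivalently, randomize $u$ over the sphere and argue in expectation) so that no iterate exhausts the affine budget, securing $\nabla F(x_t)=\gamma$ a.s.; a second-moment estimate of the projection sum, combined with the fact that $L$-smoothness limits the useful magnitude of the $\eta_s$, pins the attainable gradient norm at $\|\gamma\|^2=\frac{\sigma}{2}\sqrt{L\Delta/T}$ after optimizing the free parameters. The aggregated point $x_{\mathrm{out}}$, a nonnegative combination of the $x_t$, may reach exactly the knee of the cap, and that single smooth step into the capped region accounts for the extra term $\frac{\sigma}{2L}\sqrt{L\Delta/T}=\|\gamma\|^2/L$ in its decrease bound, while its gradient is still $\gamma$.

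The main obstacle is precisely the \emph{adaptivity}: $\eta$ and $\zeta$ are arbitrary data-dependent functions, so one cannot fix a schedule and compute directly. The rotation-invariance hypothesis is the lever that tames this — it reduces an arbitrary equivariant method to its action on the single hidden coordinate $\langle u,\cdot\rangle$ and makes the intuition ``the algorithm cannot know $u$'' rigorous, which is exactly what forbids correlating the chosen $\eta_s$ with the unseen noise directions $\langle u,\xi_s\rangle$. The secondary difficulty is the \emph{almost-sure} bookkeeping: guaranteeing that no reachable iterate crosses $w_{\min}$ for any noise realization forces the Huber cap, the noise support, and the affine length to be matched carefully, and it is this matching of constants — not merely the order of magnitude — that yields the exact $\frac{\sigma}{2}\sqrt{L\Delta/T}$ rather than only the $\Theta(\cdot)$ rate.
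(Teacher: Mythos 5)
You are attempting to prove a statement that the paper itself does not prove: Theorem D.1 is quoted verbatim from \citet{drori2019complexity}, and Appendix D only verifies that their construction (in which the noise is $\mathbb{P}(\xi_t=\pm\sigma e_{t+1})=\tfrac12$, a Rademacher sign on a \emph{fresh} coordinate at every step) remains valid under Assumption \ref{BN}. Measured against that construction, your proposal has a genuine gap, not a matter of constants: a one-dimensional hard instance $F(x)=\psi(\langle u,x-x_0\rangle)$ provably cannot deliver the two almost-sure conclusions simultaneously. First, the class of methods includes arbitrarily large constant step sizes; since $\mathbb{E}\langle u,\xi_0\rangle=0$, with positive probability the noise does not cancel the signal and a single giant step overshoots any affine segment of length $\Delta/\|\gamma\|$, for \emph{every} choice of $u$ (randomizing $u$ over the sphere yields only in-expectation statements, never the required a.s.\ ones); once an iterate is in the cap, $\nabla F(x_t)\neq\gamma$, and the cap contains small-gradient points, so the very conclusion that makes this a lower bound collapses. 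Second, even in the best case for you (noise supported orthogonally to $u$), consider the method with constant step size $2/L$: the requirement $\psi'(p_t)=\|\gamma\|$ at every attained projection $p_t$, together with $L$-Lipschitzness of $\psi'$, forces a decrease of at least $\|\gamma\|\,d-\tfrac{L}{4}d^2=\|\gamma\|^2/L$ across each gap $d=2\|\gamma\|/L$, hence $F(x_0)-F(x_T)\ge T\|\gamma\|^2/L=\tfrac{\sigma}{2}\sqrt{T\Delta/L}$, which exceeds $\Delta$ as soon as $T>4L\Delta/\sigma^2$ — exactly the interesting regime. So either the $\Delta$-bound fails or (with a cap) the constant-gradient claim fails; no tuning of the knee, the noise support, or the affine length can fix this. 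Your equivariance reduction is also unsound as stated: with generic noise the observed norms and inner products, and therefore the $\eta$'s, are random and $u$-dependent; only the \emph{functional} mapping data to step sizes is fixed, which is far weaker than what you need.

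The two ideas your sketch is missing are precisely what makes the construction of \citet{drori2019complexity} work. (i) Because the noise is $\pm\sigma e_{t+1}$ on fresh coordinates and $\gamma\perp e_j$ for all $j$, every norm and inner product among $x_0,\ldots,x_t,\nabla F(x_0)+\xi_0,\ldots,\nabla F(x_t)+\xi_t$ is independent of the Rademacher signs; hence the step sizes $\eta_t$ and weights $\zeta^{(t)}$ are \emph{deterministic numbers} known to the adversary in advance — a much stronger reduction than rotation covariance, and the reason the a.s.\ statements are even possible. (ii) The hard function is \emph{not} one-dimensional: besides the linear term $\langle\gamma,x\rangle$ it carries, in each noise coordinate, an even $L$-smooth bump with vanishing derivative at $0$, at $\pm\eta_s\sigma$, and at the (deterministic) coordinates of $x_{\mathrm{out}}$, of height up to $L\eta_s^2\sigma^2/4$. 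These bumps are the mechanism that penalizes large steps and pays back the unavoidable decrease along $\gamma$: the decrease after $t$ steps is at most $\|\gamma\|^2\bigl|\sum_{s<t}\eta_s\bigr|-\tfrac{L\sigma^2}{4}\sum_{s<t}\eta_s^2$, and Cauchy--Schwarz bounds this by $\|\gamma\|^4T/(L\sigma^2)=\Delta/4$ once $\|\gamma\|^2=\tfrac{\sigma}{2}\sqrt{L\Delta/T}$, uniformly over all step-size sequences and sign patterns. Your construction has no analogue of this signal-versus-curvature trade-off, which is why the approach fails structurally rather than in its bookkeeping.
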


Now we discuss why this shows the optimality of clipped SGD under Assumptions 2.1, 2.2 and 2.4. 

\textit{Firstly}, Theorem \ref{lower_bound_SGD} assumes an upper bound $\Delta$ on $F(x_0)-F(x_t)$ rather than the one assumed in Assumption 2.1 ($F(x_0)-F^*\le \Delta$). However, in fact we only need to assume that $F(x_0)-F(x_T) \leq \Delta$ to prove Theorem 3.2 for clipped SGD. The reason is as follows. In fact, since $\beta = 0$ for clipped SGD , the momentum term in the Lyapunov function disappears, as well as the term  $\frac{\nu\beta}{(1-\beta)^2}AL_0\eta^2\sigma \|\nabla F(x_0)\|$ in \eqref{lemma_mom_clip_stochastic_3}. So we no longer need to use Lemma \ref{GradNormBd} to bound the term $\|\nabla F(x_0)\|$. The rest of the proof only needs $F(x_0)-F(x_T) \leq \Delta$ (which is used in the telescope sum in \eqref{lemma_mom_clip_stochastic_3}).

\textit{Secondly}, although Theorem \ref{lower_bound_SGD} only assume that the variance of stochastic gradient is bounded, in their construction the noise is actually defined as
\begin{equation}P\left(\boldsymbol{\xi}_{t}=\pm \sigma \mathbf{e}_{t+1}\right)=\frac{1}{2}, \quad t \in[T-1]\end{equation}
Therefore the norm of the noise is bounded by $\sigma$, and the example used to prove Theorem \ref{lower_bound_SGD} still works under Assumption 2.4.

Now suppose we need an output such that $\|\nabla f(x_{\text{out}})\| = \|\gamma\|\leq \epsilon$, then it follows from Theorem \ref{lower_bound_SGD} that $T = \Omega \left( L\Delta\sigma^2\epsilon^{-4} \right)$. Therefore we have shown the optimality of clipped SGD in this class of algorithms, as stated in Section 3.3.

\section{Justifications on the Mixed Clipping}
\label{section_explanation}
We will show in this section that combining gradient and momentum can be better than using only one of them. We consider a basic optimization problem: $\min_{x\in \mathbb R} F(x)=\min_{x\in \mathbb R} \mathbb E_{\xi} [f(x,\xi)] $ where $f(x,\xi)=\frac 1 2 (x+\xi)^2$, and the noise $\xi\in \mathbb R$ follows the uniform distribution $U[-\sqrt 3, \sqrt 3]$ so that $\mathbb E[\xi^2]=1$. To simplify the analysis, we set $\gamma$ in Algorithm 1 to be sufficiently large such that clipping will never be triggered, since the function $F(x)=\frac 1 2 x^2$ is (1,0)-smooth.\par
In the above optimization problem, the general update formula can be written as:
\begin{equation}
\label{proposition_qhm}
\begin{aligned}
    m_{t+1}&=\beta m_t+(1-\beta)(x_t+\xi_t)\\
    x_{t+1}&=x_t-\nu\eta m_{t+1}-(1-\nu)\eta (x_t+\xi_t)
\end{aligned}
\end{equation}
We have the following proposition:
\begin{proposition}
\label{qhm_good}
Let $x_0,m_0\in \mathbb R$ be arbitrary real numbers. Let $\xi_i$s be i.i.d. random noises such that $\mathbb E[\xi_i^2]=1$. Let the sequence $\{x_t\}$ be defined in \eqref{proposition_qhm}, where $0<\eta< 1, 0\le \beta<1$ and $0\le\nu\le 1$ are constant hyper-parameters. Then in the limit
\begin{equation}
    \label{eq_x2_limit}
    \lim_{t\rightarrow \infty}\mathbb E[F(x_t)]=\frac {\eta} 2 \times \frac {(1+\beta)(1-\beta+\beta\eta)-\nu\eta\beta(1+3\beta-2\nu\beta)} {(2-\eta)(1+\beta)(1-\beta+\beta\eta)-\nu\eta\beta(4\beta-\eta-3\beta\eta+2\nu\eta\beta)}
\end{equation}
\end{proposition}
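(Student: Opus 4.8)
The plan is to exploit the fact that, because $\gamma$ is taken large enough that clipping never fires and $F(x)=\tfrac12 x^2$, the update \eqref{proposition_qhm} is a linear recursion driven by i.i.d.\ mean-zero noise (recall $f(x,\xi)=\tfrac12(x+\xi)^2$ gives stochastic gradient $x+\xi$ with $\mathbb E[\xi]=0$, $\mathbb E[\xi^2]=1$). Hence $\lim_{t\to\infty}\mathbb E[F(x_t)]=\tfrac12\lim_{t\to\infty}\mathbb E[x_t^2]$ is just the stationary second moment of a stable linear system. First I would substitute the momentum update into the position update to eliminate $m_{t+1}$, obtaining a closed recursion for the joint state $z_t=(x_t,m_t)^{\top}$,
\begin{equation}
z_{t+1}=M z_t + b\,\xi_t,\qquad
M=\begin{pmatrix} 1-\eta(1-\nu\beta) & -\nu\eta\beta \\ 1-\beta & \beta \end{pmatrix},\qquad
b=\begin{pmatrix} -\eta(1-\nu\beta) \\ 1-\beta \end{pmatrix}.
\end{equation}
This reduction is purely mechanical.

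Next, using $\mathbb E[\xi_t]=0$, $\mathbb E[\xi_t^2]=1$, and the independence of $\xi_t$ from $z_t$ (which is a deterministic function of $\xi_0,\dots,\xi_{t-1}$), the cross terms vanish and the second-moment matrix $\Sigma_t:=\mathbb E[z_t z_t^{\top}]$ satisfies
\begin{equation}
\Sigma_{t+1}=M\Sigma_t M^{\top}+bb^{\top}.
\end{equation}
To pass to the limit I need $M$ to be stable, which I would verify by Jury's criterion. A direct computation gives $\det M=\beta\bigl(1-\eta(1-\nu)\bigr)$ and $\operatorname{tr}M=1+\beta-\eta(1-\nu\beta)$, and one checks $0\le\det M<1$, $1+\det M-\operatorname{tr}M=\eta(1-\beta)>0$, and $1+\det M+\operatorname{tr}M=2+2\beta-\eta(1+\beta-2\nu\beta)>0$ for all $0<\eta<1$, $0\le\beta<1$, $0\le\nu\le1$; equivalently $|\det M|<1$ and $|\operatorname{tr}M|<1+\det M$. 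Thus the spectral radius of $M$ is strictly below $1$, so $M^t\to 0$, and the explicit solution $\Sigma_t=M^t\Sigma_0(M^{\top})^t+\sum_{k=0}^{t-1}M^k bb^{\top}(M^{\top})^k$ converges to the unique solution $\Sigma_\infty$ of the discrete Lyapunov equation $\Sigma_\infty=M\Sigma_\infty M^{\top}+bb^{\top}$.

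Finally, writing $p,q,r$ for the entries $(\Sigma_\infty)_{11}=\mathbb E[x_\infty^2]$, $(\Sigma_\infty)_{12}=\mathbb E[x_\infty m_\infty]$, $(\Sigma_\infty)_{22}=\mathbb E[m_\infty^2]$, the Lyapunov equation becomes three scalar linear equations in $(p,q,r)$. Solving this system for $p$ and setting $\lim_{t\to\infty}\mathbb E[F(x_t)]=\tfrac12 p$ yields \eqref{eq_x2_limit}. The main obstacle is precisely this last step: the $3\times 3$ symbolic solve produces a rational function of $\eta,\beta,\nu$ whose numerator and denominator must be reorganized into the stated closed form, and controlling the many cross terms is the only delicate bookkeeping; everything upstream (the reduction, the vanishing of the cross terms, and the stability check) is routine.
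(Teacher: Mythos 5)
Your proposal is correct, and it takes a genuinely different (though closely related) route to the same stationary second moment. The paper expands squares directly to obtain scalar recursions for $\mathbb E[x_t^2]$, $\mathbb E[m_t^2]$, $\mathbb E[x_tm_t]$, stacks them together with the constant $1$ into a $4\times 4$ affine transition matrix, and then argues spectrally: $1$ is a simple eigenvalue with eigenvector $u$, the remaining eigenvalues ($\beta$ and a pair with product $\beta^2$) lie strictly inside the unit disk and are distinct when $\eta<1$, so the iterates converge to the unique multiple of $u$ whose last coordinate equals $1$. You instead keep the two-dimensional state $z_t=(x_t,m_t)^\top$, recast the moment recursion as the discrete Lyapunov iteration $\Sigma_{t+1}=M\Sigma_t M^\top+bb^\top$, and verify Schur stability of the $2\times 2$ matrix $M$ by Jury's criterion; your computations $\det M=\beta\left(1-\eta(1-\nu)\right)$, $\operatorname{tr}M=1+\beta-\eta(1-\nu\beta)$, $1+\det M-\operatorname{tr}M=\eta(1-\beta)$ and $1+\det M+\operatorname{tr}M=2+2\beta-\eta(1+\beta-2\nu\beta)$ are all correct. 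These are two organizations of the same underlying algebra: the paper's $4\times 4$ matrix is precisely the symmetric Kronecker square of your $M$ augmented by the constant (its nontrivial eigenvalues are the pairwise products of the eigenvalues of $M$, which is why the paper finds $\beta=\det M$ as an eigenvalue and a pair with product $\beta^2$), and your three-equation Lyapunov solve is the paper's eigenvector solve. What your route buys: the stability check is a short verification on a $2\times 2$ matrix rather than a four-eigenvalue analysis, and the explicit representation $\Sigma_t=M^t\Sigma_0(M^\top)^t+\sum_{k=0}^{t-1}M^kbb^\top(M^\top)^k$ makes both convergence and independence of the initialization immediate, with no need for the paper's distinct-eigenvalue/diagonalizability step. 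What it costs: nothing substantive; like the paper, which for general $\nu$ simply asserts the result follows ``using the same calculation'' as the $\nu=1$ case, you leave the final symbolic solve of the three linear equations unexecuted, but that step is routine, and the setup provably yields the stated formula (it reproduces the paper's special cases $\eta/(4-2\eta)$ at $\nu=0$ and $\eta/\bigl(4-2\eta\tfrac{1-\beta}{1+\beta}\bigr)$ at $\nu=1$).
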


We now analyze three cases based on the proposition:
\begin{itemize}
    \item Only use gradient in an update. Set $\nu=0$ in \eqref{eq_x2_limit}, we obtain $\lim_{t\rightarrow \infty}\mathbb E[F(x_t)]=\frac {\eta}{4-2\eta}$.
    \item Only use momentum in an update. Set $\nu=1$ in \eqref{eq_x2_limit}, we obtain $\lim_{t\rightarrow \infty}\mathbb E[F(x_t)]=\frac {\eta}{4-2\eta\frac {1-\beta}{1+\beta}}$.
    \item Combine gradient and momentum in an update. It can be verified that for proper $0<\nu<1$, \eqref{eq_x2_limit} is less than $\frac {\eta}{4-2\eta\frac {1-\beta}{1+\beta}}$ (therefore less than $\frac {\eta}{4-2\eta}$). Furthermore, when $\beta\rightarrow 1$, a straightforward calculation shows that $\lim_{t\rightarrow \infty}\mathbb E[F(x_t)]\rightarrow\frac {\eta}{\frac 4 {1-\nu} -2\eta}$. Thus $\lim_{t\rightarrow \infty}\mathbb E[F(x_t)]$ can be arbitrarily close to zero if $\nu$ is close to 1. However, this does not happen in the previous two cases, where $\lim_{t\rightarrow \infty}\mathbb E[F(x_t)]$ there must be greater than $\frac {\eta} 4$.
\end{itemize}

We further plot the value of \eqref{eq_x2_limit} with respect to $\nu$ and $\beta$ in Figure \ref{fig_qhm_x2} to visualize the above finding. It can be clearly seen that the using both gradient and momentum with a proper interpolation factor $\nu$ outperforms both SGD and SGD with momentum by a large margin (Figure \ref{fig_qhm_x2}(a)). Furthermore, we can drive $\beta\rightarrow 1$ to further improve convergence (Figure \ref{fig_qhm_x2}(b)), while in SGD with momentum we can not. 

\begin{figure}
\centering
\subfigure[convergence value w.r.t. $\nu$]{
\includegraphics[width=0.35\textwidth]{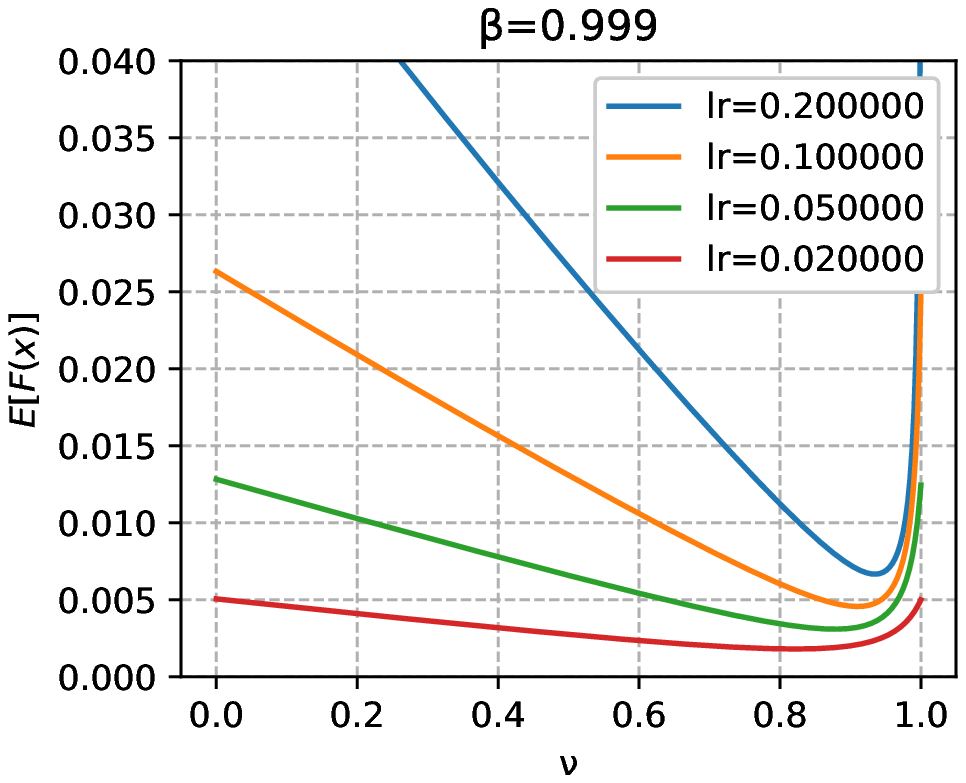}}
\hspace{20pt}
\subfigure[convergence value w.r.t. $\beta$]{
\includegraphics[width=0.35\textwidth]{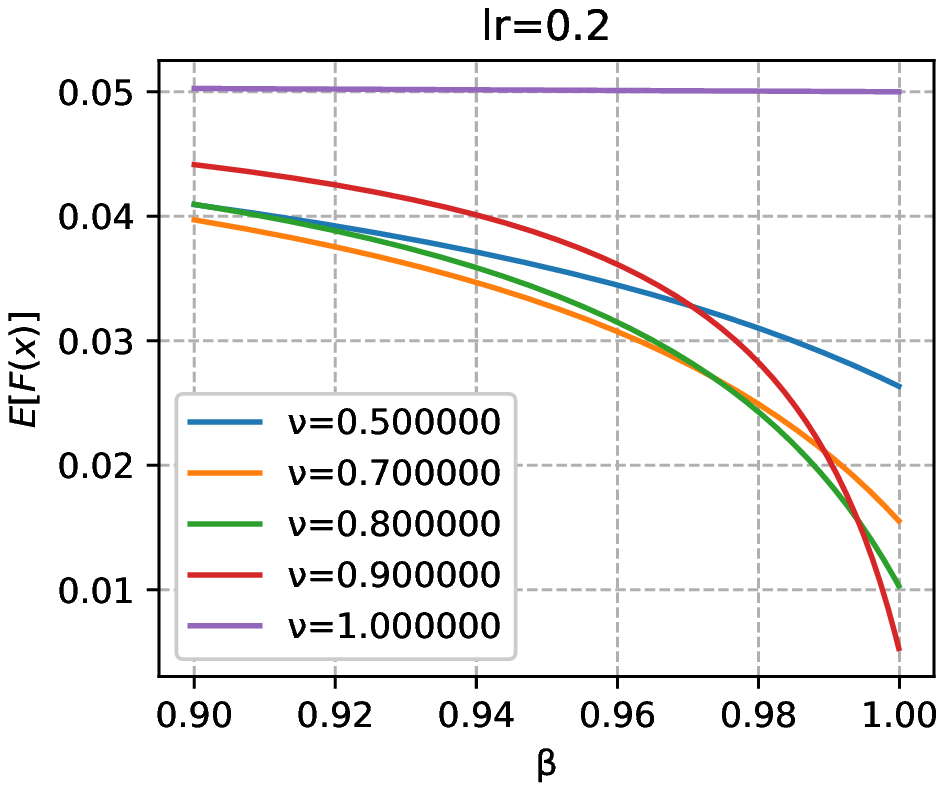}}
\caption{Convergence value of different hyper-parameters $\eta,\beta,\nu$ over stochastic function $f(x,\xi)=\frac 1 2 (x+\xi)^2$. The mixed update with proper $\nu$ outperforms both SGD and SGD with momentum by a large margin. Furthermore, for the mixed update we can drive $\beta\rightarrow 1$ to further improve convergence, while for SGD with momentum we can not.}
\label{fig_qhm_x2}
\end{figure}

Although we use the simple function $F(x)=\frac 1 2 x^2$ as an example, similar result exists in any general quadratic form with positive definite Hessian. Furthermore, the experiments in Section 4 also demonstrate that the mixed clipping outperforms both gradient clipping and momentum clipping.

\subsection{Proof of Proposition \ref{qhm_good}}
\subsubsection{Proof of a simple case}
For clarity, we first assume $\nu=1$. Consider a specific time step $t$. We first calculate $\mathbb E[m_t^2]$.
\begin{equation}
\label{mom_clip_lower_bound_0}
\begin{aligned}
    \mathbb E[m_{t+1}^2]&=\mathbb E[(\beta m_t+(1-\beta)(x_t+\xi_t))^2]\\
    &= \beta^2 \mathbb E [m_t^2]+(1-\beta)^2(\mathbb E[x_t^2]+\mathbb E[\xi_t^2])+2(1-\beta)^2\mathbb E[x_t\xi_t]+2\beta(1-\beta)(\mathbb E[m_tx_t]+\mathbb E[m_t\xi_t])\\
    &=\beta^2 \mathbb E [m_t^2]+(1-\beta)^2(\mathbb E[x_t^2]+1)+2\beta(1-\beta)\mathbb E[m_tx_t])
\end{aligned}
\end{equation}
where we use the fact that $\xi_t$ is independent with $x_t$ and $m_t$. We then calculate $\mathbb E[x_t^2]$.
\begin{equation}
\label{mom_clip_lower_bound_1}
\begin{aligned}
    \mathbb E[x_{t+1}^2]&=\mathbb E[(x_t-\eta m_t)^2]\\
    &= \mathbb E[x_t^2+\eta^2m_{t+1}^2-2\eta x_tm_{t+1}]\\
    &=(1+\eta^2(1-\beta)^2-2\eta(1-\beta))\mathbb E[x_t^2]+\eta^2\beta^2\mathbb E[m_t^2]+\eta^2(1-\beta)^2+2(\eta^2\beta(1-\beta)-\eta\beta)\mathbb E[m_tx_t]
\end{aligned}
\end{equation}
where in the last equation we use \eqref{mom_clip_lower_bound_0}. To complete the recursive relationship, we also need to calculate $\mathbb E[x_{t+1}m_{t+1}]$.
\begin{equation}
\label{mom_clip_lower_bound_2}
\begin{aligned}
    \mathbb E[x_{t+1}m_{t+1}]&=\mathbb E[m_{t+1}x_t-\eta m_{t+1}^2]\\
    &= \mathbb E[\beta m_tx_t+(1-\beta)x_t^2-\eta m_{t+1}^2]\\
    &=(1-\beta)(1-\eta(1-\beta))\mathbb E[x_t^2]-\eta\beta^2\mathbb E[m_t^2]-\eta(1-\beta)^2+(\beta-2\eta\beta(1-\beta))\mathbb E[m_tx_t]
\end{aligned}
\end{equation}
Combining \eqref{mom_clip_lower_bound_0}, \eqref{mom_clip_lower_bound_1} and \eqref{mom_clip_lower_bound_2}, we can write the recursive relationship into a matrix form:
\begin{equation}
    \left( {\begin{array}{*{20}{c}}
  {\mathbb E x_{t + 1}^2} \\ 
  {\mathbb E m_{t + 1}^2} \\ 
  {\mathbb E {x_{t + 1}}{m_{t + 1}}} \\ 
  1 
\end{array}} \right) = \left( {\begin{array}{*{20}{c}}
  {1+\eta^2(1-\beta)^2-2\eta(1-\beta)}&{\eta^2\beta^2}&{2(\eta^2\beta(1-\beta)-\eta\beta)}&{\eta^2(1-\beta)^2} \\ 
  {(1-\beta)^2}&{\beta^2}&{2\beta(1-\beta)}&{(1-\beta)^2} \\ 
  {(1-\beta)(1-\eta(1-\beta))}&{-\eta\beta^2}&{\beta-2\eta\beta(1-\beta)}&{-\eta(1-\beta)^2} \\ 
  {0}&{0}&{0}&{1} 
\end{array}} \right)\left( {\begin{array}{*{20}{c}}
  {\mathbb E x_t^2} \\ 
  {\mathbb E m_t^2} \\ 
  {\mathbb E {x_t}{m_t}} \\ 
  1 
\end{array}} \right)
\end{equation}
Denote the above matrix as $M$. After a straightforward calculation, we can find that $\lambda_1=1$ is an eigenvalue of $M$, and
$$u=\left(-\eta\frac {1+\beta}{1-\beta},-2,\eta, \eta-2\frac {1+\beta}{1-\beta}\right)^T$$
is the only eigenvector associated with $\lambda_1=1$. Similarly, $\lambda_2=\beta$ is also an eigenvalue of $M$. Let the other two eigenvalues be $\lambda_3$ and $\lambda_4$, then
\begin{equation*}
\begin{aligned}
    \lambda_1 \lambda_2 \lambda_3 \lambda_4&=\det M=\beta^3 \\
    \lambda_1 +\lambda_2 +\lambda_3+ \lambda_4&=\operatorname{tr} M = 1-\beta+(\eta(1-\beta)-\beta-1)^2
\end{aligned}
\end{equation*}
It follows that $\lambda_3\lambda_4=\beta^2$ and $\lambda_3+\lambda_4=(\eta(1-\beta)-\beta-1)^2-2\beta$. Since $(1+\beta-\eta(1-\beta))^2< (1+\beta)^2$, we have $\lambda_3+\lambda_4< 1+\beta^2$. Therefore $|\lambda_3|<1$ and $|\lambda_4|<1$ (note that $\lambda_3$ and $\lambda_4$ can be composite numbers). If $\eta<1$, we can further conclude that the four eigenvalues are different from each other (otherwise $\lambda_3=\lambda_4=\beta$, which contradicts to $\lambda_3+\lambda_4=(\eta(1-\beta)-\beta-1)^2-2\beta$).\par
Based on the above calculation, for any initial vector $v$, $\lim_{t\rightarrow\infty} M^t v$ converges to a vector proportional to $u$. In our case, $(\mathbb E x_0^2,\mathbb E m_0^2,\mathbb E x_0m_0,1)^T=(0,0,0,1)^T$, and we also know that the the last element of the vector $\lim_{t\rightarrow\infty} M^t (0,0,0,1)^T$ is 1. As a result,
$$\lim_{t\rightarrow\infty} M^t (0,0,0,1)^T=-\frac {1-\beta}{2(1+\beta)}u$$
Namely,$$\lim_{t\rightarrow\infty} \mathbb E [x_{t+1}^2]=\frac {\eta}{2-\eta\frac {1-\beta}{1+\beta}}$$

\subsubsection{Proof of the general case}
Now we prove Proposition \ref{qhm_good} for general $\nu$.
\begin{equation}
\label{qhm_clip_lower_bound_4}
\begin{aligned}
    \mathbb E[x_{t+1}^2]&=(1-\eta+\nu\eta\beta)^2\mathbb E[x_t^2]+\nu^2\eta^2\beta^2\mathbb E[m_t^2]-2\nu\eta\beta(1+\nu\eta\beta-\eta)\mathbb E[m_tx_t]+(\nu\eta(1-\beta)+(1-\nu)\eta)^2
\end{aligned}
\end{equation}

\begin{equation}
\label{qhm_clip_lower_bound_5}
\begin{aligned}
    \mathbb E[x_{t+1}m_{t+1}]&=(1-\eta+\nu\eta\beta)(1-\beta)\mathbb E[x_t^2]-\nu\eta\beta^2\mathbb E[m_t^2] + (1-\eta-\nu\eta+2\nu\eta\beta)\beta E[x_tm_t] \\
    &\quad- \nu\eta(1-\beta)^2-(1-\nu)\eta(1-\beta)
\end{aligned}
\end{equation}
Combining \eqref{mom_clip_lower_bound_0}, \eqref{qhm_clip_lower_bound_4} and \eqref{qhm_clip_lower_bound_5}, we obtain the following recursive matrix $M$:
\begin{equation*}
 M=\left( {\begin{array}{*{20}{c}}
  {(1-\eta+\nu\eta\beta)^2}&{\nu^2\eta^2\beta^2}&{-2\nu\eta\beta(1+\nu\eta\beta-\eta)}&{(\nu\eta(1-\beta)+(1-\nu)\eta)^2} \\ 
  {(1-\beta)^2}&{\beta^2}&{2\beta(1-\beta)}&{(1-\beta)^2} \\ 
  {(1-\eta+\nu\eta\beta)(1-\beta)}&{-\nu\eta\beta^2}&{(1-\eta-\nu\eta+2\nu\eta\beta)\beta}&{-\nu\eta(1-\beta)^2-(1-\nu)\eta(1-\beta)} \\ 
  {0}&{0}&{0}&{1} 
\end{array}} \right)
\end{equation*}
Using the same calculation as in the previous section, we finally get
\begin{equation}
\label{qhm_clip_lower_bound_6}
    \lim_{t\rightarrow\infty} \mathbb E[x_{t+1}^2]=\eta\frac {(1+\beta)(1-\beta+\beta\eta)-\nu\eta\beta(1+3\beta-2\nu\beta)} {(2-\eta)(1+\beta)(1-\beta+\beta\eta)-\nu\eta\beta(4\beta-\eta-3\beta\eta+2\nu\eta\beta)}
\end{equation}

\section{Soft Clipping}
\label{section_soft_clip}
\begin{algorithm}[!htbp]
\SetKwInOut{KIN}{Input}
\caption{The General Soft Clipping Framework}
\label{soft_clip}
\KIN{Initial point $x_0$, learning rate $\eta$, clipping parameter $\gamma$, momentum $\beta\in [0,1)$, interpolation parameter $\nu\in [0,1]$ and the total number of iterations $T$}
Initialize $m_0$ arbitrarily\;
\For{$t \gets 0$ {to} $T-1$}{
    Compute the stochastic gradient $\nabla f(x_t,\xi_t)$ for the current point $x_t$\;
	$m_{t+1} \gets \beta m_{t} + (1-\beta )\nabla f(x_t,\xi_t)$\;
	$x_{t+1} \gets x_t - \left[ \nu \eta\dfrac {m_{t+1}}{1+{\eta\|m_{t+1}\|}/{\gamma}} + (1-\nu )\eta \dfrac{\nabla f(x_t,\xi_t)}{1+{\eta\|\nabla f(x_t,\xi_t)\|}/{\gamma}} \right]$\;
}
\end{algorithm}
For Algorithm 1, as long as the norm of the gradient (or momentum) exceeds a constant, it is then clipped; we refer to this form of clipping as \textit{hard clipping}. One can also consider a \textit{soft} form of clipping, as presented in Algorithm \ref{soft_clip}.\par
\begin{figure}
\centering
\includegraphics[width=0.45\textwidth]{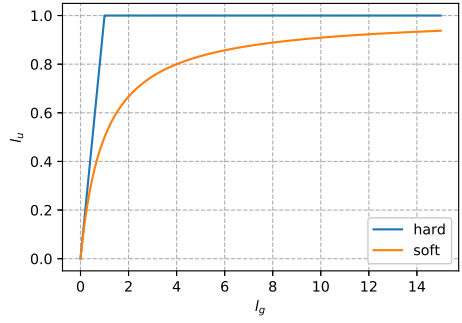}
\caption{The update norm $l_{\text{u}}$ w.r.t. the gradient norm $l_{\text{g}}$ for hard clipping and soft clipping ($\eta=1,\gamma=1$).}
\label{fig_soft_clip}
\end{figure}

We take $\nu=0$ for example to analyze soft clipping. For any gradient norm $l_{\text g}$, the norm of the update $l_{\text u}$ is a function of $l_{\text g}$:
\begin{equation}
\label{function_h}
    l_{\text u} = h_{\text{soft}}(l_{\text g})= \eta \frac{l_{\text g}}{1+{\eta l_{\text g}}/{\gamma}}
\end{equation}
For hard clipping, we can similarly write
\begin{equation}
\label{function_hhard}
    l_{\text u} = h_{\text{hard}}(l_{\text g})= \min(\eta l_g,\gamma)
\end{equation}
A straightforward calculation shows that
\begin{equation}
     \frac 1 2 \min(\eta l_g,\gamma)\le \eta \frac{l_{\text g}}{1+{\eta l_{\text g}}/{\gamma}}\le \min(\eta l_g,\gamma)
\end{equation}
Therefore soft clipping is in fact equivalent to hard clipping up to a constant factor 2 in the step size choice. Thus it's easy to see that our results also hold for Algorithm \ref{soft_clip}. However, compared to hard clipping, soft clipping has the advantage that the function $h_{\text{soft}}$ in \eqref{function_h} is smooth while $h_{\text{hard}}$ in \eqref{function_hhard} is not, as shown in Figure \ref{fig_soft_clip}. We also empirically observe that the training curve of soft clipping is more smooth than hard clipping.

\section{Experimental Details in Section 4}
\label{appendix_experiment}
Based on the discussion in Appendix \ref{section_soft_clip}, we use the soft version of clipping algorithms in all the experiments.

\subsection{CIFAR-10}
The CIFAR-10 dataset contains 50k images for training and 10k for testing. All the images are $32\times 32$ RGB bitmaps. We use the standard ResNet-32 architecture. The total number of parameters is 466,906. For all algorithms, we use mini-batch size 128 and weight decay $5\times 10^{-4}$. For the baseline algorithm, we use SGD with momentum using learning rate $lr=1.0$ and momentum factor $\beta=0.9$. Note that we use the momentum defined in Algorithm 1, which is equivalent to a Pytorch implementation with $lr=0.1$ and $\beta=0.9$. We optimize ResNet-32 for 150 epochs, and decrease the learning rate at epoch 80 and epoch 120. For other algorithms, we perform a course grid search for $lr$ an $\gamma$, while keeping all the training strategy the same as SGD. We use 5 random seeds ranging from 2016 to 2020, and the results are similar. The plot in Figure 2 uses the random seed 2020.\par

\subsection{PTB}
The Penn Treebank dataset has a vocabulary of size 10k, and 887k/70k/78k words for training/validation/testing. We use the state-of-the-art AWD-LSTM architecture using hidden size 1150 and embedding size 400. The total number of parameters is 23,941,600. For the baseline algorithm, we follow \citet{merity2017regularizing} who use averaged SGD clipping without momentum using learning rate $lr=30$ and $\gamma=7.5$. Note that here $\gamma=7.5$ means that the gradient norm will be clipped to be no more than 0.25. We use the same dropout rate and regularization hyper-parameters in \citep{merity2017regularizing}. We train AWD-LSTM for 250 epochs, and averaging is triggered when the validation perplexity stops improving. For other algorithms, we perform a course grid search for $lr$ an $\gamma$, while keeping all the training strategy the same as SGD clipping. We use 5 random seeds ranging from 2016 to 2020, and the results are similar. The plot in Figure 2 uses the random seed 2020.

\subsection{ImageNet}
We also conduct experiments on ImageNet dataset. This dataset contains about 1.28 million training images and 50k validation images with various sizes. We train the standard ResNet-50 architecture on this dataset. The total number of parameters is 25,557,032. We use a batch size of 256 on 4 GPUs and a weight decay of $10^{-4}$. For the baseline algorithm, we choose SGD with learning rate $lr=1.0$ and momentum $\beta=0.9$, following \citet{goyal2017accurate}. Note that we use the momentum defined in Algorithm 1, which is equivalent to a Pytorch implementation with $lr=0.1$ and $\beta=0.9$. We train the ResNet-50 for 90 epochs, and decrease the learning rate in epoch 30, epoch 60 and epoch 80. For the other algorithms, we perform a course grid search for $lr$ an $\gamma$, while keeping all the training strategy the same as SGD.\par

\section{Additional Experimental Results}
\label{appendix_experiment_other}
\textbf{Comparsion with Adam optimizer.} Adam is a popular optimizer to train neural networks on a variety of tasks. We also run the same experiments in Section \ref{section_experiments} using Adam optimizer with best hyper-parameters in order to compare it with clipping algorithms. We turn the learning rate of Adam based on a grid search, and choose the momentum hyper-parameters $\beta_1=0.9$ and $\beta_2=0.999$. The results are shown in Figure \ref{adam_cifar} and \ref{adam_lstm}.\par
It is clear from the figure that Adam trains really fast on both datasets. It is even faster than gradient clipping and momentum clipping. However, it does not outperform the mixed clipping method. Note that like clipping algorithms, Adam also uses adaptive gradient. However, Adam generalizes much worse than vanilla SGD or clipping algorithms. Particularly, the test accuracy of CIFAR-10 using Adam is only 91.5\%, while all other algorithms reach 93\% test accuracy.

\textbf{CIFAR-10 classificatoin with ResNet-18.} The original work of gradient clipping in \cite{zhang2019gradient} conducts experiments using ResNet-18 on CIFAR-10. To compare with it, we also run the same experiments using different algorithms, e.g. SGD, SGD clipping, momentum clipping and mixed clipping. All the algorithms reach 95\% test accuracy, and the result is similar to that of using ResNet-32 architeture (see Figure \ref{resnet18}).

\begin{figure}
  \centering
  \subfigure[CIFAR-10]{
  \includegraphics[width=0.31\textwidth]{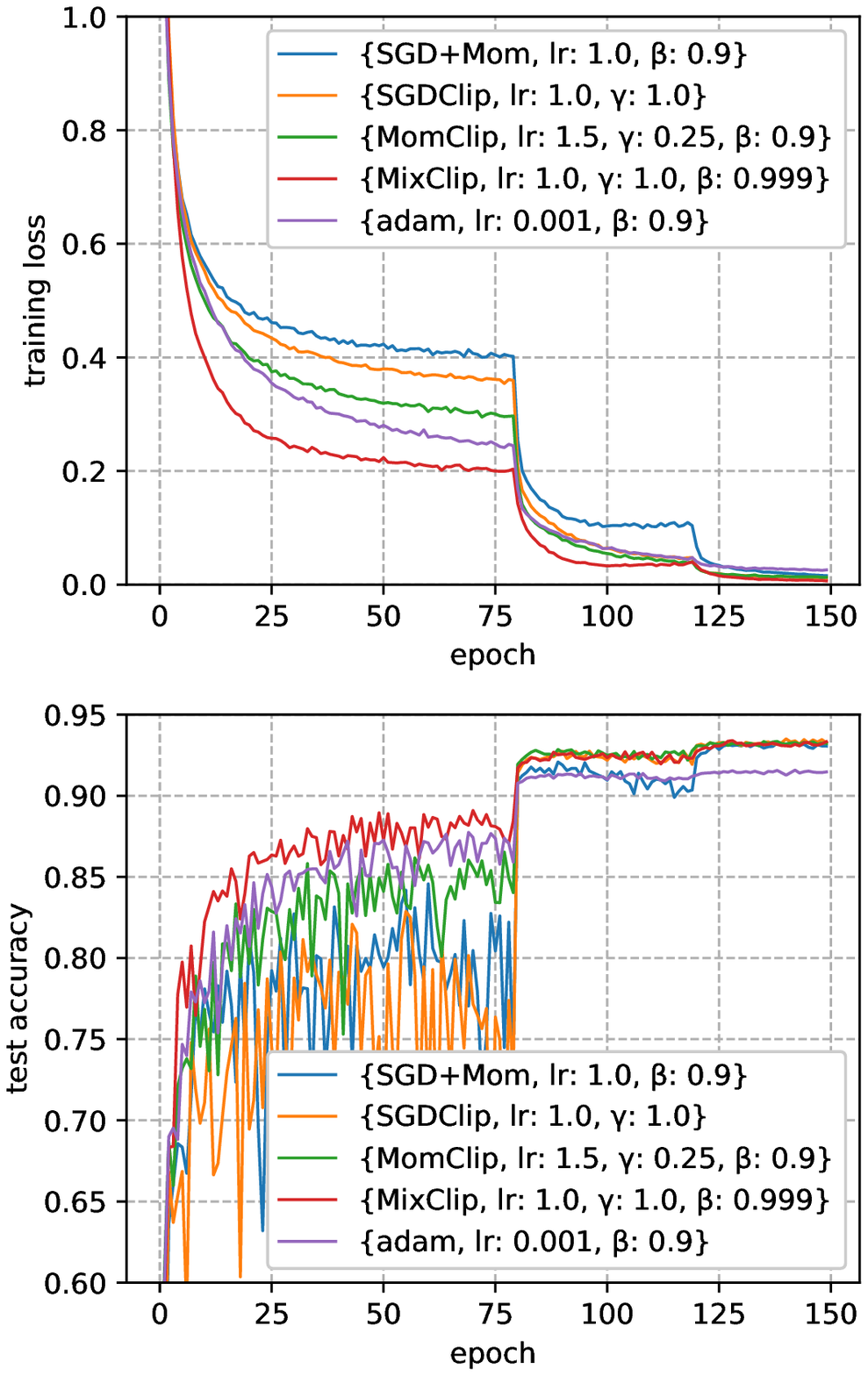}
  \label{adam_cifar}
  }
  \subfigure[PTB]{
  \includegraphics[width=0.31\textwidth]{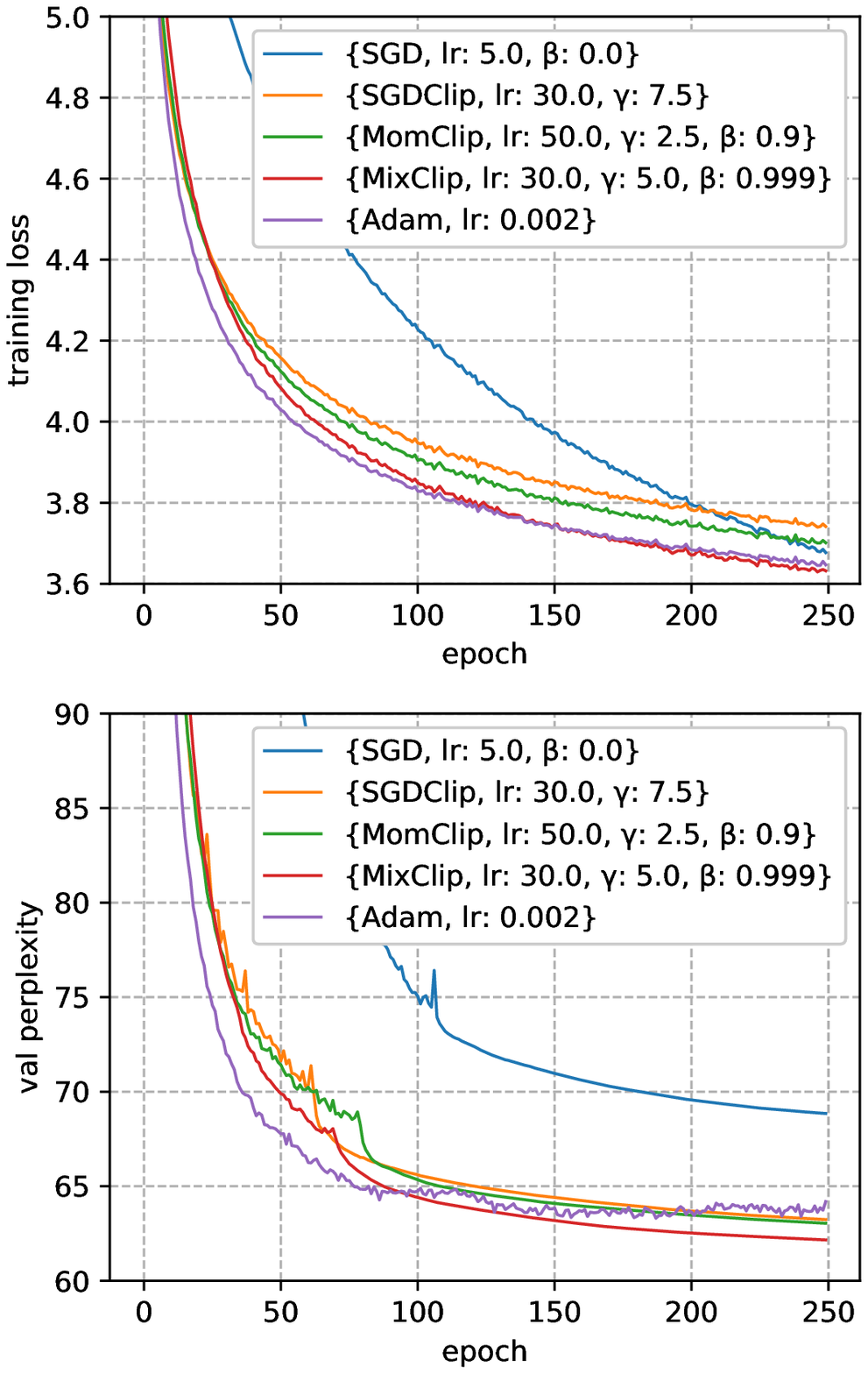}
  \label{adam_lstm}
  }
  \subfigure[CIFAR10 (ResNet-18)]{
  \includegraphics[width=0.31\textwidth]{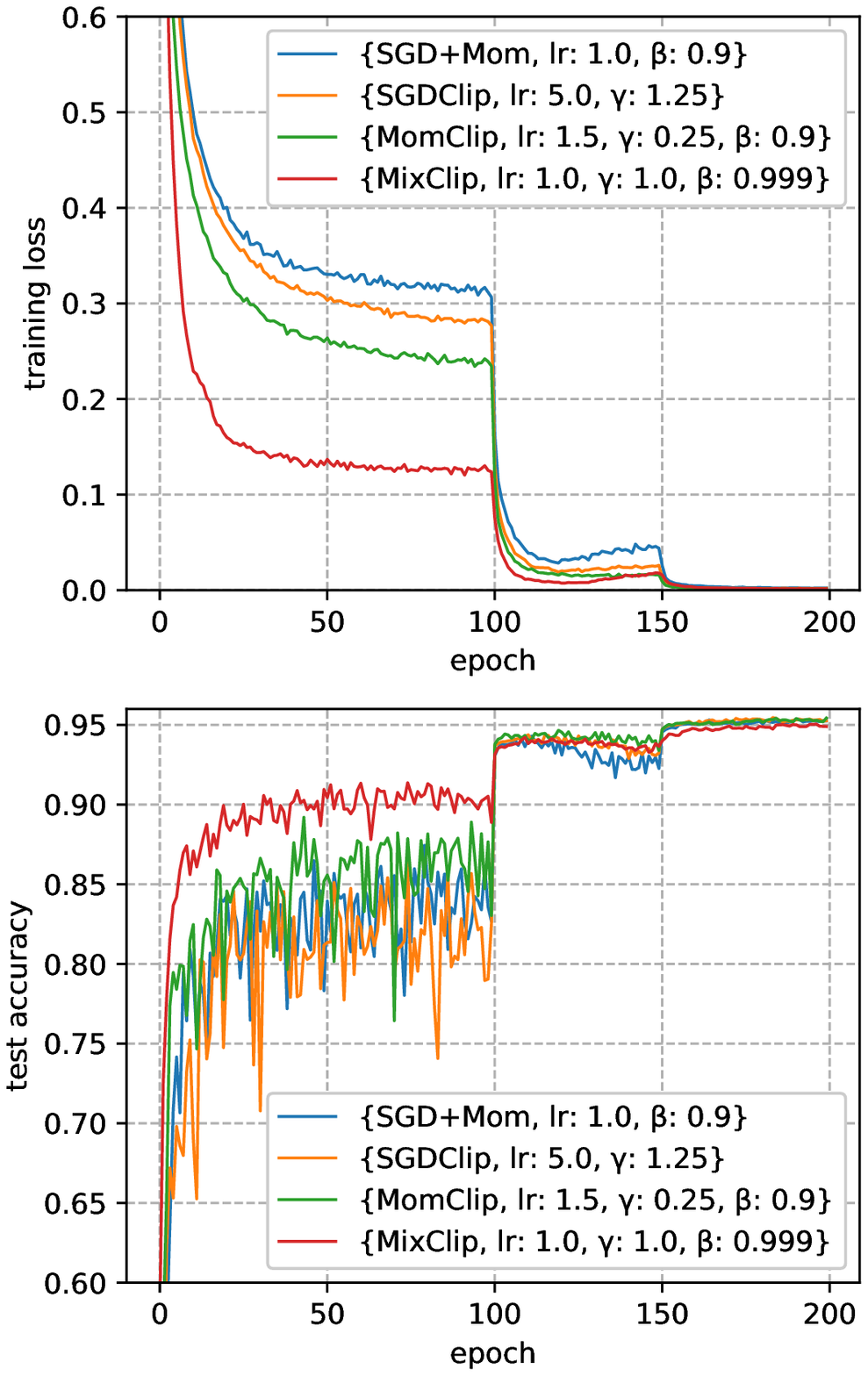}
  \label{resnet18}
  }
  \caption{(a, b) Comparison between various clipping algorithms and Adam on CIFAR-10 and PTB dataset. Adam has a fast training speed but generalize much worse than all the clipping algorithms. (c) CIFAR-10 classification using ResNet18 architecture. The result is similar to that of using ResNet-32 architecture.}
\end{figure}

\section{Additional Experiments in $(L_0,L_1)$-Smooth Setting using MNIST Dataset}
\label{appendix_experiment_mnist}
In this section, we are aiming to construct an optimization problem which provably satisfies the $(L_0,L_1)$-smoothness condition in this paper rather than the traditional $L$-smoothness condition. We then conduct experiments in both deterministic setting and stochastic setting.\par

We first cosider a binary classification problem. Suppose a  dataset $\mathcal D$ contains $n$ samples, denoted as $\{(x_i, y_i)\}_{i=1}^n$, where $x_i$ is a $d$-dimensional input vector and $y_i\in \{-1,+1\}$ is the corresponding label. A discriminant function $f$ with parameter $w,b$ is a mapping from $\mathbb R^d$ to $\mathbb R$ such that $f_{w,b}(x)=w^Tx+b$. We use the empirical error under the exponential loss function \eqref{eq:exp_loss}:
\begin{equation}
    \label {eq:exp_loss}
    L(w,b)=\mathop{\mathbb E} \limits_{(x,y)\sim \mathcal D} \exp(-yf_{w,b}(x))=\frac 1 n\sum_{i=1}^n \exp(-y_i(w^Tx_i+b))
\end{equation}
In fact, if the exponential function $\exp(\cdot)$ is replaced by $\log(1+\exp(\cdot))$, the problem becomes the well-known logistic regression. However, logistic loss has bounded second-order derivative (thus is $L$-smooth), while $\exp(\cdot)$ does not. Furthermore, exponential function is (0,1)-smooth, thus we expect $L(w,b)$ is also $(L_0,L_1)$-smooth for some $L_0,L_1$ (see the following proposition). This is why we use exponential loss here. We point out that such exponential loss is also used in a variety of algorithms, such as boosting (AdaBoost).\par
When the dataset is linearly separable, parameter $w$ will be driven to infinity through optimization, thus adding some regularization is prevalent in linear classification. We use the following term \eqref{eq:weight_decay} rather than $L_2$ norm for regularization, in order to be compatible with $L(w, b)$.
\begin{equation}
    \label {eq:weight_decay}
    R_{\lambda}(w)=\sum_{i=1}^d \left[\frac {\exp(\lambda w_i)+\exp(-\lambda w_i)} 2-1\right]
\end{equation}
In fact, $R_{\lambda}(w)$ is similar to weight decay regularization in that $R_{\lambda}(w)=\frac 1 2 \lambda^2 \|w\|^2+O(\lambda^4 \|w\|^4)$ when $w$ is small.\par
The total loss $E_{\lambda}(w, b)=L(w,b)+R_{\lambda}(w)$. We now claim that $E_{\lambda}(w, b)$ is indeed $(L_0,L_1)$-smooth.\par
\begin{proposition}
\label{dataset}
 Assume bias term $b=0$ for simplicity. Suppose the data points have bounded norm, i.e. $\|x_i\|\le R$ for all $i$ and $\lambda < R$. Let the loss function $E_{\lambda}(w, 0)$ be defined above. Then for every $\rho_1>0,\rho_2>0$, $\rho=\rho_1+\rho_2$, $E_{\lambda}(w, 0)$ is $(L_0,L_1)$-smooth w.r.t $w$ for $$L_0=\max\left(\frac{(1+\rho)\sqrt d} {\lambda}  R^2(R+d\lambda), (R^2+d\lambda^2) \left(\frac{n(R^2+d\lambda^2)} {\rho_1 R^2} \right)^{1+\frac 1 {\rho_2}}\right),L_1=\frac{(1+\rho)\sqrt d} {\lambda}  R^2.$$
\end{proposition}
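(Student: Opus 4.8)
The plan is to verify the Hessian bound $\|\nabla^2 E_\lambda(w)\|\le L_0+L_1\|\nabla E_\lambda(w)\|$ of the $(L_0,L_1)$-smoothness definition directly, exploiting that $E_\lambda=L+R_\lambda$ splits into a data-fit term and a separable regularizer. First I would compute the derivatives. Writing $a_i=\exp(-y_i w^\top x_i)$, we have $\nabla L=-\frac1n\sum_i a_i y_i x_i$ and $\nabla^2 L=\frac1n\sum_i a_i x_i x_i^\top\succeq 0$ (using $y_i^2=1$), while $R_\lambda$ is separable with $(\nabla R_\lambda)_j=\lambda\sinh(\lambda w_j)$ and $\nabla^2 R_\lambda=\mathrm{diag}(\lambda^2\cosh(\lambda w_j))$. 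Since $\|x_i x_i^\top\|=\|x_i\|^2\le R^2$, the triangle inequality yields the two clean estimates $\|\nabla^2 L\|\le R^2 L(w)$ and $\|\nabla L\|\le R\,L(w)$, where $L(w)=\frac1n\sum_i a_i$. For the regularizer I would use $\cosh t\le 1+|\sinh t|$ to obtain $\|\nabla^2 R_\lambda\|=\max_j\lambda^2\cosh(\lambda w_j)\le\lambda^2+\lambda\|\nabla R_\lambda\|$.

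Combining these and bounding $\|\nabla R_\lambda\|\le\|\nabla E_\lambda\|+\|\nabla L\|\le\|\nabla E_\lambda\|+R\,L(w)$ reduces the whole statement to a single self-bounding inequality for the exponential loss:
\[ (R^2+\lambda R)\,L(w)+\lambda^2\;\le\;L_0+(L_1-\lambda)\|\nabla E_\lambda(w)\|. \]
Thus everything hinges on controlling the loss value $L(w)$ by the full gradient norm plus a constant. The seed for this is a \emph{radial} estimate: since $a\ln a\ge a-1$ and $w^\top\nabla R_\lambda=\sum_j\lambda w_j\sinh(\lambda w_j)\ge 0$, one computes $w^\top\nabla L=\frac1n\sum_i a_i\ln a_i\ge L(w)-1$, hence $w^\top\nabla E_\lambda\ge L(w)-1$. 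Together with $w^\top\nabla E_\lambda\le\|w\|\,\|\nabla E_\lambda\|$ this already gives $L(w)\le 1+\|w\|\,\|\nabla E_\lambda\|$, so it remains to trade the stray factor $\|w\|$ against the super-linear growth of the regularizer.

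To eliminate $\|w\|$ I would run a two-regime argument. Keeping the regularizer term, $L(w)\le 1+\|w\|\,\|\nabla E_\lambda\|-\sum_j\lambda w_j\sinh(\lambda w_j)$, and the monotonicity of $t\mapsto t\sinh t$ lets me bound $\|w\|$ (equivalently $\|w\|_\infty$) in terms of $\|\nabla E_\lambda\|$: when $\|w\|$ is small, $L(w)$ is controlled by a pure constant, which is exactly where the second entry of the $\max$ defining $L_0$, namely $(R^2+d\lambda^2)\big(\tfrac{n(R^2+d\lambda^2)}{\rho_1 R^2}\big)^{1+1/\rho_2}$, should appear; when $\|w\|$ is large, the exponentially growing $\nabla R_\lambda$ forces $\|\nabla E_\lambda\|$ to be large enough that $(R^2+\lambda R)L(w)$ is absorbed into the $L_1\|\nabla E_\lambda\|$ term. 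The two free parameters $\rho_1,\rho_2$ (with $\rho=\rho_1+\rho_2$) enter through a Young-type / weighted AM--GM inequality used to convert the resulting super-linear estimate into the affine form $L_0+L_1\|\nabla E_\lambda\|$; this is what produces the factor $(1+\rho)$ in $L_1=\frac{(1+\rho)\sqrt d}{\lambda}R^2$ and in the first entry of $L_0$, which equals $L_1(R+d\lambda)$.

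The main obstacle is precisely this last step: the possible \emph{cancellation} between the data-fit gradient $\nabla L$ and the regularization gradient $\nabla R_\lambda$, which can make $\|\nabla E_\lambda\|$ small while $L(w)$ (and hence $\|\nabla^2 L\|$) is large. The naive Cauchy--Schwarz bound $w^\top\nabla E_\lambda\le\|w\|\,\|\nabla E_\lambda\|$ is too lossy and yields only a super-linear dependence (of order $\|\nabla E_\lambda\|^2$, or $\|\nabla E_\lambda\|\ln\|\nabla E_\lambda\|$ if one keeps the exponential regularizer), which is \emph{not} of $(L_0,L_1)$ form. Making the argument tight enough to obtain a genuinely affine dependence requires carefully using the separable, monotone, and super-linear structure of $R_\lambda$ to pin down $\|w\|$ and $L(w)$ simultaneously, and then optimizing the trade-off over $\rho_1,\rho_2$; this is where the bulk of the technical work, and the specific constants in the statement, come from.
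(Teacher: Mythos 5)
Your reduction is fine as far as it goes: the derivative computations, the bounds $\|\nabla^2 L\|\le R^2L(w)$, $\|\nabla L\|\le R\,L(w)$, $\|\nabla^2 R_\lambda\|\le\lambda^2+\lambda\|\nabla R_\lambda\|$, and the radial estimate $w^\top\nabla E_\lambda\ge L(w)-1$ are all correct, and they correctly isolate the crux: an \emph{affine} bound of $L(w)$ by $\|\nabla E_\lambda(w)\|$. But that crux is never proved, and you say so yourself. The estimates you actually establish — $L(w)\le 1+\|w\|\,\|\nabla E_\lambda\|$ together with pinning $\|w\|_\infty$ through $\sinh(\lambda\|w\|_\infty)\lesssim\|\nabla E_\lambda\|$ — provably yield only $L(w)=\mathcal O\bigl(\|\nabla E_\lambda\|\cdot\mathrm{arcsinh}\,\|\nabla E_\lambda\|\bigr)$, which is superlinear and therefore not of $(L_0,L_1)$ form. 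The remaining text ("carefully using the separable, monotone, and super-linear structure\ldots optimizing the trade-off over $\rho_1,\rho_2$ via a Young-type inequality") is a description of what a proof would need to accomplish, not a proof; and in fact no Young/AM--GM step appears in the actual argument, so the guessed provenance of the constants is also off.

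The paper closes exactly this gap with a device your proposal lacks. It rewrites the whole objective as a single sum of exponentials of linear forms, $E_\lambda(w)=\sum_i c_i e^{w^\top z_i}-d$, over an \emph{augmented} dataset that includes the regularizer directions $\pm\lambda e_j$, sets $M=\max_i w^\top z_i$, and splits on $M\le M_0$ versus $M>M_0$ (not on $\|w\|$). When $M\le M_0$ the Hessian is bounded outright by the constant $(R^2+d\lambda^2)\bigl(\tfrac{n(R^2+d\lambda^2)}{\rho_1R^2}\bigr)^{1+1/\rho_2}$, i.e.\ the second entry of the $\max$ in $L_0$. When $M>M_0$, two observations do the work: (i) since $\pm\lambda e_j$ are in the augmented set, $\lambda\|w\|_\infty\le M$, hence $\|w\|\le\sqrt d\,M/\lambda$; (ii) in the lower bound $\|\nabla E_\lambda\|\ge w^\top\nabla E_\lambda/\|w\|$, restricting to indices with $w^\top z_i\ge M-M_1$ gives each dominant exponential the weight $(M-M_1)/\|w\|\ge\frac{\lambda}{\sqrt d}\cdot\frac{M-M_1}{M}\ge\frac{\lambda}{(1+\rho_2)\sqrt d}$ — a \emph{constant}, because the $M$ in the numerator cancels the $M$ hiding inside $\|w\|$. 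Since both $\|\nabla^2 E_\lambda\|$ and this gradient lower bound are then linear combinations of the same quantities $c_ie^{w^\top z_i}$ (up to an additive $(R+d\lambda)$ slack and the non-dominant terms, which are absorbed via $\rho_1$ and a second gradient bound using the single maximal term), the comparison is genuinely affine, and $1+\rho_1+\rho_2$ arises from $M/(M-M_1)\le 1+\rho_2$ plus the $\rho_1$ contribution of the non-dominant Hessian terms. Without ingredients (i)--(ii), or an equivalent mechanism that makes the radial weight on the dominant exponentials bounded below by a constant, your approach stalls at the superlinear bound you yourself identify.
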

We use MNIST dataset in this section, which contains 60,000 hand-writing training images. We only evaluate the training speed for different algorithms on the training set rather than the generalization capability. The loss functions is defined to be the sum of ten losses, each of which corresponds to the loss of a binary classification problem to recognize number 0 to 9. Regularization coefficient $\lambda$ is set to be 0.02.\par
To compare different algorithms, we choose the best hyperparameters $lr$ and $\gamma$ for each algorithm based on a \textit{careful} grid search. $\nu$ is set to be 0.7 for mixed clipping. The parameter initalization and all inputs in the schocastic setting are the same for all algorithms. For each run, we average the loss of the last 5 epoch in order to reduce variance. In the deterministic setting we train 500 epochs, each of which uses the entire dataset. In the stochastic setting we train 50 epochs with a mini-batch size 200. We run on 5 different random seeds ranging from 2016 to 2020 altogether and average their results.\par
Figure \ref{fig_mnist} plots the results. It is clear that in both settings, clipping is vital to a fast convergence. Also, momentum helps training, and mixed clipping performs the best in the stochastic setting.

\begin{figure}
\centering
\subfigure[The deterministic setting]{
\includegraphics[width=0.4\textwidth]{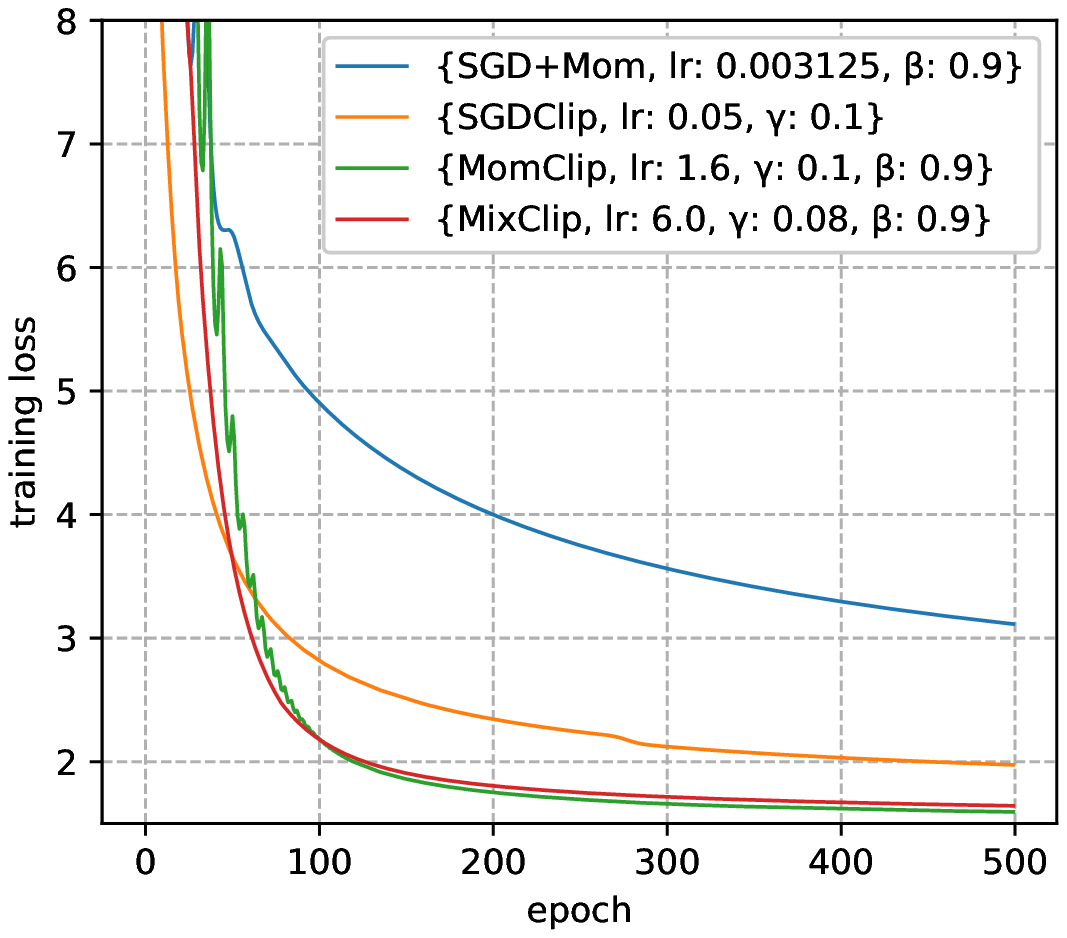}}
\hspace{10pt}
\subfigure[The stochastic setting]{
\includegraphics[width=0.4\textwidth]{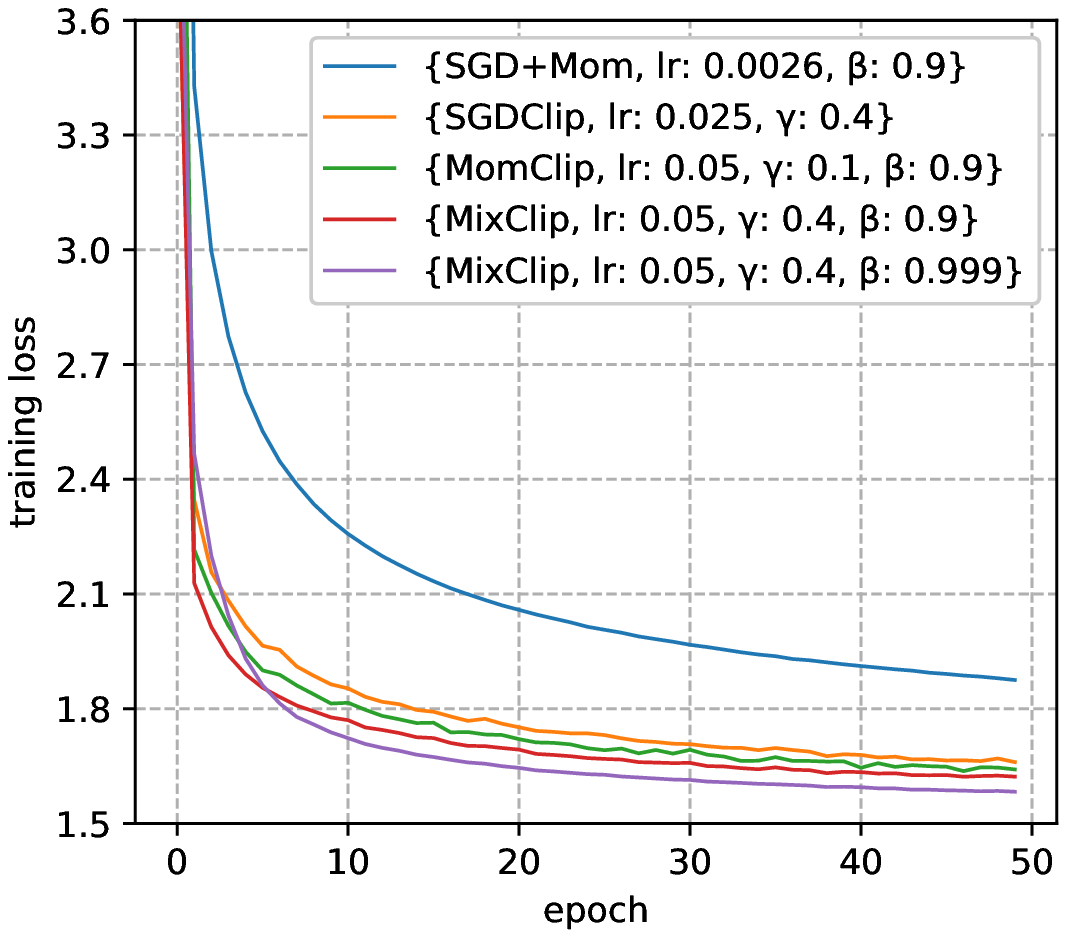}}
\caption{Experimental results on MNIST.}
\label{fig_mnist}
\end{figure}

\subsection{Proof of Proposition \ref{dataset}}
Consider the augmented dataset $\mathcal {\tilde D}$ containing $n+2d$ data points $\{z_i\}_{i=1}^{n+2d}$, with
\begin{equation}
    \label{eq:appendix_0}
    z_{i}=\left\{\begin{array}{ll}-x_{i} y_{i} & i \leq n \\ \lambda e_{i-n} & n<i \leq n+d \\ -\lambda e_{i- n-d} & n+d<i \leq n+2 d\end{array}\right.
\end{equation}
where $e_i$ is the vector with all zero entries except the $i$th entry which is one. Denote coefficient vector $c\in \mathbb R^{n+2d}$ with elements $c_i=1/n$ if $i \le n$ and $c_i=1/2$ otherwise. It directly follows that the original problem with regularization term can be written as:
\begin{equation}
    E_{\lambda}(w)=\frac 1 n\sum_{i=1}^n \exp(w^Tz_i) + \frac 1 2\sum_{i=n+1}^{n+2d} \exp(w^Tz_i) -d= \sum_{i=1}^{n+2d} c_i\exp(w^Tz_i)-d
\end{equation}
Let $M=\mathop{\max} \limits_{i\in [n+2d]} w^T z_i$. Let $\rho_1>0,\rho_2>0$ be two constants. Pick $M_0=\left(1+\frac 1 {\rho_2}\right)\log \frac {n(R^2+d\lambda^2)}{\rho_1 R^2} $. We consider the following two cases:\par
(1)$M\le M_0$. In this case $\|\nabla ^2 E(w)\|$ can be directly upper bounded:
\begin{equation}
\label{eq:appendix_1}
\begin{aligned}
    \|\nabla ^2 E(w)\|&\le \frac 1 n\sum_{i=1}^n \exp(w^Tz_i)\|z_i\|^2 + \frac 1 2 \sum_{i=n+1}^{n+2d} \exp(w^Tz_i) \|z_i\|^2 \\
    & \le (R^2+d\lambda^2)\exp(M)\\
    &\le  (R^2+d\lambda^2) \left(\frac{n(R^2+d\lambda^2)} {\rho_1 R^2} \right)^{1+\frac 1 {\rho_2}}
\end{aligned}
\end{equation}
The first inequality in \eqref{eq:appendix_1} uses the triangular inequality of matrix spectral norm and $\|zz^T\|=\|z^Tz\|=\|z\|^2$.\par
(2)$M > M_0$. Decompose $M_0$ to be $M_0=M_1+M_2$ where 
$$M_1=\log \frac  {n(R^2+d\lambda^2)}{\rho_1R^2}, M_2=\frac 1 {\rho_2}\log \frac  {n(R^2+d\lambda^2)}{\rho_1 R^2}.$$
Define set $I=\{i\in [n+2d]:w^Tz_i\ge M-M_1\}$ and $I_2=\{i\in [n+2d]:w^Tz_i< 0\}$. Then
\begin{align}
    \|\nabla E(w)\|&= \sum_{i=1}^{n+2d} c_i\exp(w^Tz_i)z_i\\
    \label{eq:appendix_2}
    &\ge \sum_{i=1}^{n+2d} c_i\exp(w^Tz_i)\frac {w^Tz_i} {\|w\|} \\
    \label{eq:appendix_3}
    &\ge \sum_{i\in I} c_i\exp(w^Tz_i)\frac {M-M_1} {\|w\|}-\sum_{i\in I_2} c_i\|z_i\|\\
    \label{eq:appendix_4}
    &\ge \sum_{i\in I} c_i\exp(w^Tz_i)\frac {M-M_1} {\|w\|}-(R+d\lambda)
\end{align}
In \eqref{eq:appendix_2} we use the Cauchy-Schwartz inequality; In \eqref{eq:appendix_3} we partition the index $\{i:i\in [n+2d]\}$ to three subsets $I$, $I_2$ and $[n+2d]\backslash (I\cup I_2)$, and use the lower bound and upper bound of $w^T z_i>0$ for each set.\par
Similar, we can upper bound $\|\nabla^2 E(w)\|$:
\begin{align}
    \|\nabla^2 E(w)\|&\le \sum_{i\in I} c_i\exp(w^Tz_i)\|z_i\|^2+ \sum_{i\notin I} c_i\exp(w^Tz_i)\|z_i\|^2\\
    \label{eq:appendix_5}
    &\le \sum_{i\in I} c_i\exp(w^Tz_i)R^2+(R^2+d\lambda^2)\exp(M-M_1)
\end{align}
To bound $\exp(M_1)$, we again bound $\|\nabla E(w)\|$ from a different perspective:
\begin{align}
    \|\nabla E(w)\| &\ge \sum_{i\in I} c_i\exp(w^Tz_i)\frac {w^Tz_i} {\|w\|}-\sum_{i\in I_2} c_i\|z_i\|\\
    \label{eq:appendix_6}
    &\ge \frac 1 n \exp(M)\frac M {\|w\|}-(R+d\lambda)
\end{align}
where \eqref{eq:appendix_6} is obtained by selecting the $i$ with the largest $w^Tz_i$ which is equal to $M$. Substitute \eqref{eq:appendix_4} and \eqref{eq:appendix_6} into \eqref{eq:appendix_5} then we get
\begin{align}
    \|\nabla^2 E(w)\|&\le  \left(\frac{R^2}{M-M_1}+\frac{n(R^2+d\lambda^2)} {M\exp(M_1)}\right){\|w\|}(\|\nabla E(w)\|+R+d\lambda) \\
    \label{eq:appendix_7}
    &=  \left(\frac{ R^2}{M-M_1}+\frac{\rho_1 R^2}{M}\right){\|w\|}(\|\nabla E(w)\|+R+d\lambda)
\end{align}
\par
Since $M=\mathop{\max} \limits_{i\in [n+2d]} w^T z_i$ implies that $|\lambda w_k|\le M$ for all $k\in [d]$ from \eqref{eq:appendix_0}, we can upper bound the norm of $w$: $\|w\|\le \frac {M\sqrt d} {\lambda}$. Substitute this into \eqref{eq:appendix_7} we get
\begin{align}
    \|\nabla^2 E(w)\|&\le  \left(\frac M {M-M_1}+\rho_1\right) \frac {\sqrt d } {\lambda}R^2(\|\nabla E(w)\|+R+d\lambda)\\
    &\le  \frac{(1+\rho_1+\rho_2)\sqrt d} {\lambda}  R^2(\|\nabla E(w)\|+R+d\lambda)
\end{align}
Combining the above two cases concludes the proof.

\end{document}